\documentclass[journal,12pt,draftclsnofoot,onecolumn]{IEEEtran}

%
\usepackage{cite}

%
\usepackage[pdftex]{graphicx}

\usepackage{graphicx}
\usepackage{comment}
\usepackage{booktabs}
\usepackage{amsmath,amssymb}
\usepackage{bm}

\usepackage[ruled]{algorithm2e}
\usepackage{algorithmic}
\usepackage[tight,footnotesize]{subfigure}
\usepackage{amsthm}
\usepackage{array}
\usepackage{multirow}
\usepackage{multicol}
\usepackage{ctable}
\newtheorem{mytheorem}{Theorem}

\newcommand{\argmin}{\operatornamewithlimits{argmin}}   

\def\bmath#1{\mbox{\boldmath$#1$}}
\makeatletter
\DeclareRobustCommand\onedot{\futurelet\@let@token\@onedot}
\def\@onedot{\ifx\@let@token.\else.\null\fi\xspace}

\def\eg{\emph{e.g}\onedot} 
\def\ie{\emph{i.e}\onedot}

\makeatother
\newlength{\Oldarrayrulewidth}

\hyphenation{}
\usepackage{rotating}
\setcounter{tocdepth}{3}
\definecolor{cl1}{rgb}{0,0,1}
\definecolor{cl2}{rgb}{0.8039,0.8039,0}
\definecolor{cl3}{rgb}{0.8039,0.1608,0.5647}
\definecolor{cl4}{rgb}{0.5451,0.2784,0.1490}
\definecolor{cl5}{rgb}{0.4980,1,0.8314}
\definecolor{cl6}{rgb}{0.5686,0.1725,0.9333}
\definecolor{cl7}{rgb}{1,1,0}
\definecolor{cl8}{rgb}{0.8549,0.4392,.8392}
\definecolor{cl9}{rgb}{1,0.2157,0.1686}
\definecolor{cl10}{rgb}{0.0941,0.6863,0.4706}
\definecolor{cl11}{rgb}{0.0471,0.3333,0.7529}
\definecolor{cl12}{rgb}{0.0784,0.8314,0.6392}
\definecolor{cl13}{rgb}{0.0431,0.5686,0.8784}

\newcolumntype{C}{>{\scriptsize}c}

\usepackage{graphicx}
\graphicspath{{./}}
\begin{document}
%

\title{Nonlinear Dynamic Field Embedding: On Hyperspectral Scene Visualization}

%
\author{Dalton~Lunga and~Okan~Ersoy%
\thanks{D. Lunga is with the Department
of Electrical and Computer Engineering, Purdue University. He is also affiliated with the CSIR-Meraka Institute,\ Brummeria, \ Pretoria,\ South Africa e-mail: dlunga$@$purdue-edu.}
\thanks{O. Ersoy is with the Department
of Electrical and Computer Engineering, Purdue University, West Lafayette,
IN, 47907-0501, USA e-mail: ersoy$@$purdue-edu.}}
\maketitle

\begin{abstract}
Graph embedding techniques are useful to characterize spectral signature relations for hyperspectral images. However, such images consists of disjoint classes due to spatial details that are often ignored by existing graph computing tools. Robust parameter estimation is a challenge for kernel functions that compute such graphs. Finding a corresponding high quality coordinate system to map signature relations remains an open research question. We answer positively on these challenges by first proposing a kernel function of spatial and spectral information in computing neighborhood graphs. Secondly, the study exploits the force field interpretation from mechanics and devise a unifying nonlinear graph embedding framework. The generalized framework leads to novel unsupervised multidimensional artificial field embedding techniques that rely on the simple additive assumption of pair-dependent attraction and repulsion functions. The formulations capture long range and short range distance related effects often associated with living organisms and help to establish algorithmic properties that mimic mutual behavior for the purpose of dimensionality reduction. The main benefits from the proposed models includes the ability to preserve the local topology of data and produce quality visualizations \ie maintaining disjoint meaningful neighborhoods. As part of evaluation, visualization, gradient field trajectories, and semisupervised classification experiments are conducted for image scenes acquired by multiple sensors at various spatial resolutions over different types of objects. The results demonstrate the superiority of the proposed embedding framework over various widely used methods.
\end{abstract}

%

\section{Introduction}
{\bf N}onlinear dimensionality reduction has emerged as a key preprocessing step for extracting and visualizing regular structures within complex data sets. Prior to its popularity, the easy to implement linear methods have spanned decades of research on this task dating back to principal component analysis(PCA) \cite{Pearson1901,Joliffe86,Martinez2001}, the classical scaling or multidimensional scaling(MDS) technique\cite{Torgerson52}, more recently the local Fisher discriminant analysis (LFDA)\cite{Sugiyama07}, and semi-supervised local Fisher discriminant analysis (SELF)\cite{Sugiyama10}. With modern technology enabling the capability to gather and combine data from various sensing mechanisms, linear dimensionality reduction techniques have met their limitations in seeking meaningful structures from complex data \cite{Lee2007}. For example, hyperspectral sensors have enabled the acquisition of greater details about objects on the earth surface which poses a challenge for linear dimensionality reduction techniques. Feature extraction in such data sets can be accomplished by employing nonlinear methods such as the maximum variance unfolding ({\tt MVU})\cite{Song07}- a method that computes maximum variance embedding maps subject to preserving local distances, the locally linear embedding({\tt LLE})\cite{Roweis00} - a method that represents the relations of each neighborhood by linear coefficients that best reconstruct each data point from its neighbors, and the laplacian eigenmaps Laplacian({\tt LE}) \cite{Belkin03}, which draws on the correspondence between the graph Laplacian, the Laplace Beltrami operator on a manifold, and the connections to the heat equation, to devise a geometrically motivated algorithm for constructing a representation for data sampled from a low $m$-dimensional manifold embedded in a higher $d$-dimensional space.

Further widely popular methods include the generalized-MDS\cite{DeSilva03,Bronstein08} which extends the properties of classical scaling to nonlinear embedding, the self-organizing feature map ( SOFM)\cite{Kohonen07} which is an artificial unsupervised neural network for learning low-dimensional maps, graph embedding and extensions for dimensionality reduction\cite{Yan07}. Probabilistic approaches include the stochastic neighbor embedding({\tt SNE})\cite{Hinton02}, a method that represents each object by a mixture of widely separated low dimensional factors capturing some of the local structure, and establishes global formations of clusters of similar maps. With improved modeling assumptions, variants of SNE have led to high quality embedding visualizations, {\em e.g.} the student $t$-distribution based stochastic neighbor embedding(tSNE)\cite{Laurens08}, the elastic embedding algorithm \cite{Carreira10}, and the spherical stochastic neighbor embedding( sSNE)\cite{Lunga12}. In most cases, the approaches rely on high dimensional neighborhood graphs to capture the geometrical relations of observations and use graph weights as inputs for embedding purposes. The techniques are formulated under metric measures that include the geodesic distances, and for others, non-metric probability measures including Kullback Leibler divergences.

Notwithstanding individual differences in efficiency and their specific data applications, nonlinear dimensionality reduction methods share some features including, better compression, better visualization, and reduction of classifier input features. Each embedding technique represents an attempt to search for a coordinate representation that resides on the data manifold. Recent comparative studies conducted on various nonlinear embedding techniques strongly indicated the inability of existing manifold learning methods to handle disjoint class structures that exists in hyperspectral data \cite{Crawford2011}. In addition, many dimensionality reduction algorithms suffer from what is known as the crowding problem in the machine learning community\cite{Laurens08}. The crowding problem can be described as a tendency by which embedding techniques collapse maps towards the center of the embedding space resulting in increased class overlaps. Under such a phenomenon, many embedding algorithms fail to establish discriminative boundaries between structures of different classes. In this study several functional forms are proposed to address this challenge within a unifying framework for developing nonlinear dimensionality reduction algorithms with benefits that spans visualization, compression, and classification tasks.

While incorporating some benefits from existing methods, the study provides a unified multidimensional artificial force embedding( MAFE) framework for nonlinear dimensionality reduction with new perspectives and high quality embedding functional forms. MAFE models are based on seeking a minimum energy configuration state of a high dimensional neighborhood graph in a lower dimensional space. Underlying the framework is the premise that given a predefined graph structure, the optimal embedding coordinates should generate a corresponding low dimensional neighborhood graph that preserve the high dimensional pairwise relations. The embedding framework draws from the force field interpretation to suggest insightful design properties that lead to pair-dependent attraction and repulsion functions for composing novel potential embedding fields. The functions are superposed to generate an odd function that enforce the pairwise interaction fields in the embedding space. The generated interaction is such that at longer range the attraction force, which pulls similar maps towards each other, dominates, while the repulsion force dominates at short range distances. Under such an environment, all maps will experience a pushing and a pulling force from their relative neighbors due to the field generated by the potential functions. Of importance is to observe that the attraction function emphasizes the pulling together of similar maps while the repulsion function acts as a barrier that generates a pushing force for maps to be dispersed. Much of the local relations that reveal meaningful structures in optimal embeddings should already have been captured by the similarity kernel function that constructs the neighborhood input graph. As a second contribution, the study combines MAFE models with a local bilateral similarity kernel for mapping spatial and spectral signatures onto the lower dimensional space. This is demonstrated by embedding real world remote sensing images that are characterized by high spectral resolution pixels with several hundred channels. However, having many channels and nonlinearities in each pixel poses several challenges to conventional land cover methods since visually different objects tend to exhibit overlapping or similar spectral signatures.

The paper is structured as follows. A review of the force field formulations is introduced in Section \ref{sec:force field}. A general formulation of the multidimensional artificial field embedding(MAFE) framework is presented in Section \ref{sec:mafegeneral}. In Section \ref{sec:mafe connections}, we establish MAFE connections to existing popular nonlinear embedding techniques. In Section \ref{sec:mafe new}, using MAFE framework, we propose two novel techniques: the multidimensional artificial field embedding - bounded repulsion(MAFE-BR), and the multidimensional artificial field embedding - unbounded repulsion(MAFE-UR). In Section \ref{sec:experiments}, we describe the experimental setup and results on various data sets. A discussion of experiments and future work is presented in Section \ref{sec:discussion}. Finally, we present conclusions in Section \ref{sec:conclusions}.

\section{Force Field Motivation}\label{sec:force field}
Force field interpretations have a long history that relates to nature studies. In Biology, researchers have long studied nature and discovered that populations often appear in patterns of aggregation such as flocks of birds, schools of fish, and herds of mammals. Biological models that use forces between individuals that are analogous to physical forces, have since been developed. The study in \cite{Parr1927}, is an example of early work that proposed an idea of mutual interactions between individuals that were composed of attractions and repulsions with the goal of maintaining the group as a stable mass. This idea was developed further in \cite{Breder1951} by discussing the possibility of modeling forces between individual fish upon classical gravitation and electromagnetism. In a later paper\cite{Breder1954}, the author considered inverse power laws to model the repulsions and attractions between individuals, with repulsion stronger at short inter-individual distances. The work in \cite{Breder1954}, was compared to actual data collected from schools of fish, to obtain model parameters. The model was based on a simple constant attraction and a repulsion inversely proportional to the square of the inter-individual distance. The same laws governing animal behavior have been observed to be generalizable to guide artificial systems to carry out complex tasks by relying only on local interactions. In fact, in individual-based frameworks, the basis rule is that aggregation behavior is a result of an interplay between a short-ranged repulsion and a long-ranged attraction between the individuals. Such an intuition has been applied with success in control of multi-agent systems, stability analysis of social foraging swarms \cite{Gazi02c}, and robotic motion planning\cite{Latombe91}.

Most Biology studies \cite{Parr1927,Breder1951,Breder1954,Grunbaum97,Grunbaum98} and Control Engineering studies \cite{Xue11,Gazi02c,Latombe91}, seek to address questions that relates to maintaining a group as a stable mass, stability analysis, group cohesion and obstacle avoidance. Notwithstanding such growing recognition of the importance of force field formulation in those areas, their role in the allied area of signal and image processing remains to be fully appreciated.  This study is most concerned on drawing on the collective dynamics of aggregation behavior to devise algorithms that establish manifold formations given a predefined structure, \ie formation of multiple manifolds given a high dimensional neighborhood graph as a constraint. As such, we focus on the application of the force field intuition to the problem of dimensionality reduction and manifold learning based on a graph embedding optimization framework.

\section{Dynamic Graph Embedding Formulation}\label{sec:mafegeneral}
Let $\mathcal{G}=(\mathcal{E},\mathcal{V})$ be a finite undirected graph with vertex set $\mathcal{V}$, edge set $\mathcal{E}$ and with no self loops. The elements of $\mathcal{E}$ are designated as ideal springs. Furthermore, let $\mathcal{S}=\left\{(w_{ij},k_{ij})\right\}$ be the spring properties between each vertex $i$ and $j$ for all $\{(v_{i},v_{j})\}\in\mathcal{E}$, where $w_{ij}$ is the normalized or unnormalized length without compression or extension computed for each observed pair in $\bmath{Y}=\left\{\bmath{y}_{1},\bmath{y}_{2},\cdots,\bmath{y}_{N}\right\}$, where $\bmath{y}_{i}\in\mathbb{R}^{d}$, and $k_{ij}=1$ is the corresponding force constant. Let $\mathcal{G}_{\mathcal{S}}$ be a neighborhood spring graph, where $\mathcal{S}$ denotes the spring relation. An embedding of $\mathcal{G}_{\mathcal{S}}$ is an assignment of vertices into a $m$-dimensional Euclidean space $\mathbb{R}^{m}$. Let $\bmath{Z}=\left\{\bmath{z}_{1},\bmath{z}_{2},\cdots,\bmath{z}_{N}\right\}^{T}$ be the assigned embedding of $\mathcal{G}_{\mathcal{S}}$, where $\bmath{z}_{i}\in\mathbb{R}^{m}$ is the position of vertex $i$'s map. When framed as a graph embedding task, where on each vertex the approach imagines a particle in motion and each edge weight is dictated by spring force laws, the corresponding problem of dimensionality reduction simply becomes that of establishing a minimum energy configuration that is governed by the structure in $\bmath{W}=[w_{ij}]$. The modeling assumptions are such that the configuration yields maps that preserve pairwise relations characterized by the neighborhood graph $\mathcal{G}_{\mathcal{S}}$. Finding such a mapping is at the heart of every dimensionality reduction model, and it is the subject that is discussed shortly.

A mechanics interpretation of the graph embedding framework is presented as follows. Imagine the existence of a particle on every $\bmath{z}_{i}\in\bmath{Z}$, that is moving with the velocity of $\bmath{Z}$'s centroid. With the following change of notation to denote the embedding positions as a {\em state} of a graph, let $\mathcal{Z}=\left\{\bmath{z}_{1}^{T},\bmath{z}_{2}^{T},\cdots,\bmath{z}_{N}^{T}\right\}^{T}$ be a long vector in $\mathbb{R}^{Nm}$. Thus only consider the motion dynamics of individual maps, not the motion of the group. The approach assumes that all individual maps move simultaneously and each map $i$ is aware of the position of other vertices (maps), as well as the strength of corresponding force that defines each edge weight in the graph. The positions, $\bmath{z}_i$'s, of individuals relative to the group centroid can change through the rearrangements informed by pair-dependent force field interactions. Assuming such motion is to change in a continuous time, then the velocity as determined by the effect of group members on each vertex $i$, at position $\bmath{z}_i$ is described by
\begin{eqnarray}
 \dot{\bmath{z}}_{i}=\sum_{j=1, j\ne i}F^{ij}(\bmath{z}_{i}-\bmath{z}_{j}),\ \ i=1,\cdots, N
\label{eqn:latentmotion}
\end{eqnarray}
where $F^{ij}(\bmath{z}_{i}-\bmath{z}_{j})=(\bmath{z}_{i} - \bmath{z}_{j})\left\{F^{ij}_{r}(\|\bmath{z}_{i} - \bmath{z}_{j}\|)- F^{ij}_{a}(\|\bmath{z}_{i} - \bmath{z}_{j}\|)\right\}$ describes pairwise symmetric interactions between the $i$th and $j$th maps. Symmetry of the function follows from the fact that if map $i$ is attracted to map $j$, then $j$ is attracted to $i$. $F^{ij}_{r}:\mathbb{R}^{+}\rightarrow\mathbb{R}^{+}$ denotes the magnitude of the repulsion term, whereas $F^{ij}_{a}:\mathbb{R}^{+}\rightarrow\mathbb{R}^{+}$ represents the magnitude of the attraction term. The superposition of these two terms defines an interactive function that is the basis for the MAFE framework. Model insights and properties governing the choice of this function are presented in the following sections.

\subsection{Force Fields Function Properties}
Artificial force field formulations assume that at large distances, the attraction function dominates, and that on short distances the repulsion function dominates, while in between there is a unique distance at which both terms will balance - defining a central path in a similar manner to the barrier method\cite{Boyd04}. The choice of suitable force field embedding interaction functions, $F^{ij}(\bmath{z}_{i}-\bmath{z}_{j})$, are guided by the following properties:
\begin{enumerate}
\item There is pair-equilibrium distance $\epsilon_{ij}$ at which $F^{ij}_{r}(\epsilon_{ij})=F^{ij}_{a}(\epsilon_{ij})$, else $F^{ij}_{a}(\|\bmath{z}_{i}-\bmath{z}_{j}\|)>F^{ij}_{r}(\|\bmath{z}_{i}-\bmath{z}_{j}\|)$ for $\|\bmath{z}_{i}-\bmath{z}_{j}\|>\epsilon_{ij}$ or $F^{ij}_{a}(\|\bmath{z}_{i}-\bmath{z}_{j}\|)<F^{ij}_{r}(\|\bmath{z}_{i}-\bmath{z}_{j}\|)$ for $\|\bmath{z}_{i}-\bmath{z}_{j}\|<\epsilon_{ij}.$
\item $F^{ij}$ is an odd function, {\em i.e.} $F^{ij}(-(\bmath{z}_{i}-\bmath{z}_{j})) = -F^{ij}(\bmath{z}_{i}-\bmath{z}_{j})$, therefore symmetric with respect to the origin.
\item There exist pair dependent functions $U^{ij}_{att}\rightarrow\mathbb{R}^{+}\rightarrow\mathbb{R}^{+}$ and $U^{ij}_{rep}\rightarrow\mathbb{R}^{+}\rightarrow\mathbb{R}^{+}$ such that
\begin{eqnarray}
\nabla_{\bmath{z}_i}w_{ij}U_{att}^{ij}(\|\bmath{z}_{i}-\bmath{z}_{j}\|) = F^{ij}_{a}(\|\bmath{z}_{i}-\bmath{z}_{j}\|)(\bmath{z}_{i} - \bmath{z}_{j})\nonumber\\  \nabla_{\bmath{z}_i}U^{ij}_{rep}(\|\bmath{z}_{i}-\bmath{z}_{j}\|) = F^{ij}_{r}(\|\bmath{z}_{i}-\bmath{z}_{j}\|)(\bmath{z}_{i} - \bmath{z}_{j})\nonumber
\end{eqnarray}
\end{enumerate}
$U^{ij}_{att}$ and $U^{ij}_{rep}$ are viewed as artificial attraction and repulsion potential energy functions. The combined term $(\bmath{z}_{i} - \bmath{z}_{j})F^{ij}_{r}(\|\bmath{z}_{i} - \bmath{z}_{j}\|)$ represents the actual repulsion effect, whereas the term $-(\bmath{z}_{i} - \bmath{z}_{j})F^{ij}_{a}(\|\bmath{z}_{i} - \bmath{z}_{j}\|)$ represents the actual attraction effect. The vector $(\bmath{z}_{i} - \bmath{z}_{j})$ establishes the alignment on which the attraction and repulsion interaction forces acts along in opposing directions. These functions describe the reactive approach by potential fields in which trajectories of particles motion are not planned explicitly. Instead the interactions of every map with its neighbors is a superposition of fields that enable its position to cope with the changing environment of other maps.  The motion dynamics can be rewritten to reflect the resultant forces on each individual map as
\begin{eqnarray}
\dot{\bmath{z}}_{i} = -\sum_{j=1, j\ne i}\left\{\nabla_{\bmath{z}_{i}}w_{ij}U_{att}^{ij}(\|\bmath{z}_{i}-\bmath{z}_{j}\|) -\nabla_{\bmath{z}_{i}}U^{ij}_{rep}(\|\bmath{z}_{i}-\bmath{z}_{j}\|)\right\}\nonumber
\label{eqn:grad-motion}
\end{eqnarray}
The assumption made to envision each map as moving along the negative gradient has an implication, {\em i.e.} to achieve a minimum-energy configuration of the graph $\mathcal{G}_{\mathcal{S}}$, a choice of the attraction and repulsion potential functions should be such that the minimum of $U^{ij}_{att}(\|\bmath{z}_{i}-\bmath{z}_{j}\|)$ occurs on or around $\|\bmath{z}_{i}-\bmath{z}_{j}\|=0$, whereas the minimum of $-U^{ij}_{rep}(\|\bmath{z}_{i}-\bmath{z}_{j}\|)$ (or maximum of $U^{ij}_{rep}(\|\bmath{z}_{i}-\bmath{z}_{j}\|)$) occurs on or around $\|\bmath{z}_{i}-\bmath{z}_{j}\|\rightarrow \infty$, and that the minimum of the combination $U^{ij}_{att}(\|\bmath{z}_{i}-\bmath{z}_{j}\|)-U^{ij}_{rep}(\|\bmath{z}_{i}-\bmath{z}_{j}\|)$ occurs at $\|\bmath{z}_{i}-\bmath{z}_{j}\|=\epsilon_{ij}$, thus defining the stationery state of motion that exist between pairs $i$ and $j$.

Using the above framework, the reactive potentials that are effective on each individual map $i$ can be represented as
\begin{eqnarray}
U_{i}(\mathcal{Z}) = \sum_{j=1, j\ne i}\left\{w_{ij}U_{att}^{ij}(\|\bmath{z}_{i}-\bmath{z}_{j}\|) - U^{ij}_{rep}(\|\bmath{z}_{i}-\bmath{z}_{j}\|)\right\}
\label{vertex-pot-field}
\end{eqnarray}%
while the total superposed potential function on the neighborhood graph $\mathcal{G}_{\mathcal{S}}$ is defined by
\begin{eqnarray}
U(\mathcal{Z}) = \sum_{i=1}^{N}U_{i}(\mathcal{Z})
\label{eqn:pot-cost}
\end{eqnarray}%
Letting $\Omega$ be the set of attraction and repulsion functions $F^{ij}(\cdot)$ satisfying the embedding field properties, new embedding models can simply be derived by solving the following general optimization problem:
\begin{eqnarray}
\mathcal{Z}^{\star} = \argmin_{\mathcal{Z}\in\mathbb{R}^{Nm}}U(\mathcal{Z})
\label{eqn:generalopt}
\end{eqnarray}
where $\mathcal{Z}^{\star}$ describes the minimum-energy configuration state of $\mathcal{G}_{\mathcal{S}}$ in the lower dimensional space. With some parameter adjustments on $F^{ij}(\cdot)$, the embedding maps will converge to a minimum-energy configuration that yields the optimal maps. Such an embedding framework can be adapted as a general platform to develop new nonlinear dimensionality reduction algorithms but first, we establish its links to existing popular embedding techniques in the following section. 

\section{Connections To Existing Methods}\label{sec:mafe connections}
Many nonlinear techniques have been proposed for the tasks of visualization and dimensionality reduction. Even though with differences, they are applied in various fields as preprocessing building blocks for compression, visualization, and classification tasks. A basic question that we ask is whether some of the existing methods can be derived as special cases of the MAFE framework? A positive illustration to this question is presented together with reformulations of popular techniques including the stochastic neighbor embedding(SNE)\cite{Hinton02}, student-t stochastic neighbor embedding\cite{Laurens08}, Laplacian eigenmaps (LE) \cite{Belkin03}, Cauchy graph embedding \cite{Luo11}, and the spherical stochastic neighbor embedding(sSNE)\cite{Lunga12}. Such interpretations show that MAFE is a unifying framework that not only inherits some algorithmic benefits of various techniques, but also provides extended functional properties combined with strong intuitive insights for creating new algorithms.

\subsection{Stochastic Neighbor Embedding}\label{sec:sne}
Stochastic neighbor embedding \cite{Hinton02} is a method for preserving probabilities on lower dimensional manifolds that are nonlinear. SNE assumes that edge weights are antisymmetric probabilities $w_{ij}$ (i.e. $w_{ij}\neq w_{ji}$) of pairs of vertices being neighbors in the higher dimensional space. However, our presentation focuses on the symmetric version where  $w_{ij} = w_{ji}$ for all pairs of vertices. The high dimensional edge weights are defined using the Gaussian functions of the form,
\begin{eqnarray}
w_{ij} = \frac{\exp\{-\frac{\|\bmath{y}_{i}-\bmath{y}_{j}\|^{2}}{2\sigma_{i}}\}}{\sum_{r=1, r\ne i}\exp\{-\frac{\|\bmath{y}_{r}-\bmath{y}_{i}\|^{2}}{2\sigma_{i}}\}}
\label{eqn:Gaussianweights}
\end{eqnarray}
where $\sigma_{i}$ is computed using a binary search method ensuring that the entropy of the distribution $W_{i}$ is approximately $\log(k)$, with $k$ defining the effective number of neighbors. In the lower dimensional space, a symmetric Gaussian probability $\hat{w}_{ij}$ is assumed between each pair of embedding maps, {\em i.e.} the embedding graph weights are computed as
\begin{eqnarray}
\hat{w}_{ij} = \frac{\exp\{-\|\bmath{z}_{i}-\bmath{z}_{j}\|^{2}\}}{\sum_{r=1, r\ne i}\exp\{-\|\bmath{z}_{r}-\bmath{z}_{i}\|^{2}\}}
\end{eqnarray}
Each $\bmath{z}_{i}\in\mathbb{R}^{m}$ is the corresponding lower dimensional map of the observation $\bmath{y}_{i}\in\mathbb{R}^{d}$ . SNE proceeds to compute for the maps by minimizing a sum of Kullback Leibler(KL) objective functions
\begin{eqnarray}
\sum_{i}KL(W_{i}||\Hat{W}_{i}) = \sum_{i}\sum_{j=1, j\ne i} w_{ij}\log(\frac{w_{ij}}{\hat{w}_{ij}})
\label{eqn:sne}
\end{eqnarray}
The goal of \eqref{eqn:sne} is to minimize the distortion between each of the $N$ high dimensional neighborhood distributions $W_{i}$ and their corresponding lower dimensional neighborhood distributions $\hat{W}_i$. The results obtained from this approach have so far demonstrated its superiority when compared to methods that include locally linear embedding(LLE)\cite{Roweis00}, MDS\cite{Torgerson52}, and  Isomap\cite{Tenenbaum00}. However, the optimization algorithm is very unstable which leads to a lot of experimentally defined parameters in order to attain meaningful results. A further expansion on \eqref{eqn:sne} while ignoring terms that are not a function of the lower dimensional maps (terms that do not depend on $\hat{w}_{ij}$), much in parallel to the work of \cite{Carreira10}, reveals the log-sum term as a source of difficulty when computing the gradient and its a term that increases the nonlinearity of the model.

Computing the negative gradient of \eqref{eqn:sne} yields the corresponding MAFE motion dynamics or force field equations of the form
\begin{eqnarray}
\dot{\bmath{z}_{i}}^{SNE}=-\nabla_{\bmath{z}_{i}}U^{SNE} = -4\sum_{r=1, r\ne i}F^{SNE}(\bmath{z}_{i}-\bmath{z}_{j})
\label{eqn:sne-gradient}
\end{eqnarray}
where the expression under summation is defined as
\begin{eqnarray}
F^{SNE}(\bmath{z}_{i}-\bmath{z}_{j}) = (\bmath{z}_{i}-\bmath{z}_{j})\left\{w_{ij} - \frac{\exp\{-\|\bmath{z}_{i}-\bmath{z}_{j}\|^{2}\}}{\sum_{r=1, r\ne i}\exp\{-\|\bmath{z}_{r}-\bmath{z}_{i}\|^{2}\}}\right\}
\label{eqn:sne-repulsion-attraction}
 \end{eqnarray}
Under the MAFE formulation, equation \eqref{eqn:sne-gradient} describes the motion state of vertex maps seeking the graph's minimum energy configuration in the lower dimensional space. A simple observation identifies the attractive gradient force field to be  $-w_{ij}(\bmath{z}_{i}-\bmath{z}_{j})$, and a repulsion gradient force field to be $\frac{(\bmath{z}_{i}-\bmath{z}_{j})\exp\{-\|\bmath{z}_{i}-\bmath{z}_{j}\|^{2}\}}{\sum_{r=1, r\ne i}\exp\{-\|\bmath{z}_{r}-\bmath{z}_{i}\|^{2}\}}$. The interpretation of \eqref{eqn:sne-gradient} is such that, at longer distances with parameters set carefully, the embedding maps start to form clusters due to the strong attraction force field, while the repulsion force field is very negligible. As $\|\bmath{z}_{i}-\bmath{z}_{j}\|\rightarrow 0$ for each  $(i,j)$ pair, the repulsion magnitude dominates the interaction force vector. This causes the maps to push away from each other. The convergence of the algorithm is established when the forces balance, much in the same way as described for the general MAFE framework.

\subsection{t-Stochastic Neighbor Embedding}
$t$-Stochastic Neighbor Embedding\cite{Laurens08} is similar to SNE except that the lower dimensional maps are assumed to be better modeled by a {\em Student t-distribution} of degree one. This simple modification leads to a complete improvement of results over SNE. The improvement is due to the pair-dependent inverse distance relation introduced by the Student t-distribution. Using this distribution, the joint embedding probabilities $\hat{w}_{ij}$ are defined by
\begin{eqnarray}
\hat{w}_{ij} = \frac{(1 +\|\bmath{z}_{i}-\bmath{z}_{j}\|^{2})^{-1}}{\sum_{r=1, r\ne i} (1 +\|\bmath{z}_{r}-\bmath{z}_{i}\|^{2})^{-1}}
\end{eqnarray}%
In tSNE, the cost function is also based on the sum of Kullback Leibler(KL) divergences (same as in \eqref{eqn:sne}). The corresponding negative gradient of the expanded tSNE cost function gives the following MAFE motion dynamics or force field equations,
\begin{eqnarray}
\dot{\bmath{z}_{i}}^{tSNE}&=&-\nabla_{\bmath{z}_{i}}U^{tSNE}\nonumber\\
&=&-4\sum_{r=1, r\ne i}\left\{w_{ij}\frac{(\bmath{z}_{i}-\bmath{z}_{j})}{1 +\|\bmath{z}_{i}-\bmath{z}_{j}\|^{2}} - \frac{(\bmath{z}_{i}-\bmath{z}_{j})(1 +\|\bmath{z}_{i}-\bmath{z}_{j}\|^{2})^{-2}}{\sum_{r=1, r\ne i} (1 +\|\bmath{z}_{r}-\bmath{z}_{i}\|^{2})^{-1}}\right\}\nonumber\\
&=&-4\sum_{(i,j)\in\mathcal{E}}F^{tSNE}(\bmath{z}_{i}-\bmath{z}_{j})
\label{eqn:tsne-gradient}
\end{eqnarray}
where the term under summation in \eqref{eqn:tsne-gradient} is defined as
\begin{eqnarray}
F^{tSNE}(\bmath{z}_{i}-\bmath{z}_{j}) = (\bmath{z}_{i}-\bmath{z}_{j})\left\{\frac{w_{ij}}{1 +\|\bmath{z}_{i}-\bmath{z}_{j}\|^{2}} - \frac{(1 +\|\bmath{z}_{i}-\bmath{z}_{j}\|^{2})^{-2}}{\sum_{r=1, r\ne i} (1 +\|\bmath{z}_{r}-\bmath{z}_{i}\|^{2})^{-1}}\right\}
\label{eqn:tsne-repulsion-attraction}
 \end{eqnarray}%
In a similar manner, equation \eqref{eqn:tsne-gradient} describes the motion state of vertex maps seeking the graph's minimum energy configuration in the lower dimensional space. However, the attractive force field is described by $-(\bmath{z}_{i}-\bmath{z}_{j})\frac{w_{ij}}{1 +\|\bmath{z}_{i}-\bmath{z}_{j}\|^{2}}$, a term whose magnitude approaches an inverse square law for large pairwise distances $\|\bmath{z}_{i}-\bmath{z}_{j}\|$. This property makes the coordinate representation of joint probabilities invariant to changes in scale for maps that are far-apart. The repulsion force field is described by $(\bmath{z}_{i}-\bmath{z}_{j})\frac{(1 +\|\bmath{z}_{i}-\bmath{z}_{j}\|^{2})^{-2}}{\sum_{r=1, r\ne i} (1 +\|\bmath{z}_{r}-\bmath{z}_{i}\|^{2})^{-1}}$. The magnitude of the inverse square law approximation by both terms dictates that formation of clusters be established at long range distances, while the repulsion force field is magnified at short range distances, causing all maps to disperse from each other. The convergence of the algorithm is established when the two force fields balance.

\subsection{sPherical Stochastic Neighbor Embedding}
A spherical stochastic neighbor embedding model\cite{Lunga12} is also a variant of SNE, whose embedding representations are constrained to reside on a spherical manifold. As such, it assumes the joint probability distribution $\Hat{W}_{i}$ to be defined on a spherical surface, $\mathbb{S}^{m} = \left\{\bmath{z}\in \mathbb{R}^{m+1} : \|\bmath{z}\|_{2}=1\right\}$, with its corresponding probable values $\hat{w}_{ij}$ obtained by means of an Exit distribution\cite{Durrett51}. sSNE, similarly to SNE, exhibits a desirable property that enables the unfolding of many-to-one mappings from which the same class objects are in several disparate locations that are spatially driven. The probable values $\hat{w}_{ij}$ are computed from the Exit expression
\begin{eqnarray}
\hat{w}_{ij} = \frac{\frac{(1-\varrho)}{\|\bmath{z}_{j}-\varrho\bmath{z}_{i}\|^{m}}}{\sum_{r=1, r\ne i} \frac{(1-\varrho)}{\|\bmath{z}_{r}-\varrho\bmath{z}_{i}\|^{m}}}
\end{eqnarray}
where $\varrho$ is a concentration parameter that controls the assignment of neighborhood weights for a given $\bmath{z}_{i}^{th}$ central map. sSNE's total cost function given by
\begin{eqnarray}
\sum_{i}^{N}KL(W_{i}||\Hat{W}_{i}) + \lambda(\bmath{z}^{T}_{i}\bmath{z}_{i} - 1) = \sum_{i}\sum_{j=1, j\ne i} w_{ij}\log(\frac{w_{ij}}{\hat{w}_{ij}}) +  \lambda(\bmath{z}^{T}_{i}\bmath{z}_{i} - 1)
\label{eqn:ssne}
\end{eqnarray}%
where $\lambda$ is the Lagrangian parameter enabling the implicit incorporation of the unit spherical constraint in the objective function.
The negative gradient of the expanded sSNE function yields the corresponding MAFE motion dynamic equations of the form
\begin{eqnarray}
\dot{\bmath{z}_{i}}^{sSNE}&=&-2m\varrho\sum_{j=1, j\ne i}(\bmath{z}_{j}-\varrho\bmath{z}_{i})\left\{\frac{w_{ij}}{\|\bmath{z}_{j}-\varrho\bmath{z}_{i}\|^{2}} -\frac{(\|\bmath{z}_{j}-\varrho\bmath{z}_{i}\|)^{-m-2}}{\sum_{r=1, r\ne i}\|\bmath{z}_{r}-\varrho\bmath{z}_{i}\|^{-m}}\right\} - 2\lambda\bmath{z}_{i}\nonumber\\
&=&2m\varrho\sum_{j=1, j\ne i}F^{sSNE}(\bmath{z}_{i}-\bmath{z}_{j}) - 2\lambda\bmath{z}_{i}
\label{eqn:ssne-gradient}
\end{eqnarray}
where the pairwise gradient force field on each map is defined by $F^{sSNE}(\cdot) = F^{E}(\cdot) - 2\lambda\bmath{z}_{i}$, with
\begin{eqnarray}
F^{E}(\bmath{z}_{i}-\bmath{z}_{j}) = -(\bmath{z}_{j}-\varrho\bmath{z}_{i})\left\{\frac{w_{ij}}{\|\bmath{z}_{j}-\varrho\bmath{z}_{i}\|^{2}} -\frac{(\|\bmath{z}_{j}-\varrho\bmath{z}_{i}\|^{-m-2})}{\sum_{r=1, r\ne i}\|\bmath{z}_{r}-\varrho\bmath{z}_{i}\|^{-m}}\right\}
\label{eqn:ssne-repulsion-attraction}
 \end{eqnarray}
Contrary to both SNE and tSNE, equation \eqref{eqn:ssne-gradient} describes a MAFE dynamic model with maps constrained to be on a spherical manifold and in turn defining a unit sphere neighborhood graph configuration. The attractive force field is described by $-(\bmath{z}_{j}-\varrho\bmath{z}_{i})\frac{w_{ij}}{\|\bmath{z}_{j}-\varrho\bmath{z}_{i}\|^{2}}$, a term whose magnitude is based on the inverse unbounded square law for large pairwise distances $\|\bmath{z}_{i}-\bmath{z}_{j}\|$. This property also makes the spherical coordinate representation of joint probabilities invariant to changes in scale for maps that are far-apart. The repulsion force field is described by $(\bmath{z}_{j}-\varrho\bmath{z}_{i})\frac{(\|\bmath{z}_{j}-\varrho\bmath{z}_{i}\|^{-m-2})}{\sum_{r=1, r\ne i} \|\bmath{z}_{r}-\varrho\bmath{z}_{i}\|^{-m}}$, also an unbounded inverse distance term. The unbounded magnitude of the inverse square law by both terms enables sSNE to exhibit much needed capability to introduce a splitting nature on clusters that overlap. However, due to the unbounded nature on both terms, as $\|\bmath{z}_{j}-\varrho\bmath{z}_{i}\|\rightarrow 0$, the minimum energy configuration of the graph becomes unstable. This introduces algorithmic inefficiencies when seeking the optimal embedding coordinates for large data sets\cite{Lunga12}.

\subsection{Laplacian and Cauchy Embedding}
All graph embedding techniques presented in this study make the assumption that if observation $i$ and $j$ are similar, {\em i.e.} $w_{ij}$ is large, then the proximity of their corresponding maps $\bmath{z}_i$ and  $\bmath{z}_j$ in the embedding space should also be closer. The $m$-dimensional embedding coordinates can also be obtained using the Laplacian eigenmaps (LE) approach \cite{Belkin03}. Considering the $1$-dimensional example, the idea is to minimize
\begin{eqnarray}
\argmin_{\bmath{z}\ s.t. \|\bmath{z}\|^{2}=1, \bmath{z}^{T}\bmath{1}=0} U^{LE}(\bmath{z})=\sum_{j=1, j\ne i}w_{ij}(\bmath{z}_{i}-\bmath{z}_{j})^{2}
\label{eqn:laplacian}
\end{eqnarray}
where $\bmath{1} =[1,\cdots,1]^{T}$ and $\|\bmath{z}\|^{2}=1$ is a magnitude constraint imposed to avoid obtaining a solution where all $\bmath{z}_{i}$'s are zero, whereas the constraint $\bmath{z}^{T}\bmath{1}=0$, reduces the uncertainty on the non-uniqueness of the embedding solution. Under such constraints the problem is equivalent to
\begin{eqnarray}
\argmin_{\mathcal{Z}} U^{LE}(\bmath{z})=\bmath{z}^{T}\bmath{L}\bmath{z}
\label{eqn:laplacian2}
\end{eqnarray}
where $\bmath{L} = \bmath{D} - \bmath{W}$ is the graph Laplacian,  $\bmath{D}=\mbox{diag}(d_{1},\cdots,d_{N}),d_{i}=\sum_{j}w_{ij}$ and $\bmath{W} =[w_{ij}]$ is the edge weight matrix. The solution is often computed by solving equation \eqref{eqn:laplacian2} as an eigenvector problem. A simple observation on \eqref{eqn:laplacian} connects the Laplacian eigenmaps formulation to the MAFE framework. A force field interpretation of the Laplacian embedding identifies, $\sum_{j=1, j\ne i}w_{ij}(\bmath{z}_{i}-\bmath{z}_{j})^{2}$, as the artificial attractive potential whose corresponding negative artificial force field provides the motion equation for maps as defined by
\begin{eqnarray}
\dot{\bmath{z}}^{LE}_{i} = -\frac{d U^{LE}(\bmath{z})}{dz_{i}} = -4\sum_{j=1, j\ne i}F^{LE}(z_{i}-z_{j})
\label{eqn:laplacian force}
\end{eqnarray}
where $F^{LE}(\bmath{z}_{i}-\bmath{z}_{j}) = (\bmath{D} - \bmath{W})_{ij}(z_{i}-z_{j})$. The dynamic Laplacian model of \eqref{eqn:laplacian force} does incorporate the magnitude and non-uniqueness constraints from its objective function. However, these constraints do not generate a repulsion field that can mitigate the crowding of embeddings. The Cauchy graph embedding \cite{Luo11} approach has a MAFE interpretation that follows the same steps as applied in reformulation of Laplacian eigenmaps.

\section{New MAFE Image Embedding Models}\label{sec:mafe new}
Apart from the key insights obtained from the force field properties to design suitable pair-depended attraction and repulsion potential energies for a MAFE embedding model, another essential component involves choosing a similarity function suitable for incorporating all local information that is relevant for computing a neighborhood graph that captures the local structure of high dimensional observations. In most popular techniques, the Gaussian kernel described in equation \eqref{eqn:Gaussianweights} is used to compute such relations. However, for certain application areas with spatial information, {\em e.g.} hyperspectral imagery, the Gaussian weights may not be suitable. As such, a parametric local kernel for computing the spectral neighborhood is discussed in the next section. The parametric kernel can be seen as an extension of bilateral filter function\cite{Honghong09,Kotwal07,Tomasi1998} with a robust high dimensional covariance estimation component that enhances pixel differences and generates a sparse neighborhood graph. A discussion on the design of practical and better kernel functions tend to be application-driven and it remains an open research problem. Its complete investigation is therefore not fully addressed in this study.

\subsection{Spectral Neighborhood Graph Similarities}\label{sec:similarity}
In various image processing applications, spatial preprocessing methods are often applied to remove noise and smooth images. These methods also enhance spatial texture information resulting in features that improve the performance of classification techniques. For example in \cite{Velasco-Forero09}, nonlinear diffusion partial differential equations (PDEs) and wavelet shrinkage were used for spatial preprocessing of hyperspectral images, and the results obtained demonstrated a significant improvement on classification performance. Unlike the use of PDEs, we adapt a {\em local bilateral filtering} approach to devise a pairwise pixel similarity function or kernel over the observed neighborhood graph. The local kernel function is defined by
\begin{eqnarray}
w(\bmath{s}_{i},\bmath{s}_{j},\bmath{y}_{i},\bmath{y}_{j}) = \exp\left\{\frac{-\|\bmath{s}_{i}-\bmath{s}_{j}\|^{2}}{\sigma_{s}^2}\right\}\cdot \exp\left\{-(\bmath{y}_{i}-\bmath{y}_{j})^{T}\bmath{\Sigma}_{y}^{-1}(\bmath{y}_{i}-\bmath{y}_{j})\right\}
\label{eqn:bisim}
\end{eqnarray}
where $\bmath{s}_{i}$ denotes the spatial coordinates of pixel $i$, $\bmath{y}_{i}$ denotes the photometric $d$-dimensional spectral vector, with $d$ corresponding to the number of spectral channels. The expression $\|\bmath{s}_{i}-\bmath{s}_{j}\|^{2}$ weighs image pixel values as a function of the spatial distance from the center pixel and $h_{s}$ is the variance parameter. The kernel also employs a nonlinear term,
\begin{eqnarray}  \tilde{w}_{p}(i,j)=\exp\left\{-\frac{1}{2}(\bmath{y}_{i}-\bmath{y}_{j})^{T}\bmath{\Sigma}_{y}^{-1}(\bmath{y}_{i}-\bmath{y}_{j})\right\}
 \label{eqn:Ky2}
\end{eqnarray}
which simply weighs pixel values as a function of the photometric differences between the center pixel and its neighbor pixels. For example, given $N$ hyperspectral pixels, organized into a zero-mean data matrix $\bmath{Y}=[\bmath{y}_{1}\bmath{y}_{2}\cdots\bmath{y}_{N}]\in\mathbb{R}^{d\times N}$, the sample covariance is computed as $\bmath{S}=\frac{1}{N}\bmath{Y}\bmath{Y}^{T} = \langle\bmath{y}\bmath{y}^{T}\rangle$, with the angle brackets denoting the average over $N$ pixels. Thus, $\bmath{S}$ is a $d\times d$ matrix whose diagonal components indicate the magnitude of noise variation in each of the $d$ spectral channels, and the off-diagonal elements denote the extent to which noise co-vary with each pair of spectral bands. It is easy to show\cite{Lunga12} that the photometric weights can be summarized by
\begin{eqnarray}
\tilde{w}_{p}(i,j) = \exp\left\{\frac{-N}{2}tr(\bmath{\Sigma}^{-1}\bmath{S})\right\}
\label{eqn:Ky22}
\end{eqnarray}

We make an observation that one can represent the unnormalized kernel $\mathcal{K}$ as a product of unnormalized gaussian functions, one for each pixel $\bmath{y}_{i}$, yielding
\begin{eqnarray}
\mathcal{K} &=& \exp\left\{\frac{-1}{2}\sum_{j=1}^{N}(\bmath{y}_{j}-\bmath{y}_{i})^{T}\bmath{\Sigma}^{-1}(\bmath{y}_{j}-\bmath{y}_{i})\right\}\nonumber\\
      &=& \exp\left\{\frac{-tr(\bmath{S}\bmath{\Sigma}^{-1})}{2} + \sum_{j=1}^{N}\bmath{y}_{j}\bmath{\Sigma}^{-1}\bmath{y}_{i} -\frac{N}{2}\bmath{y}^{T}_{i}\bmath{\Sigma}^{-1}\bmath{y}_{i}\right\}\nonumber
\label{eqn:Ky}
\end{eqnarray}
where $\bmath{\Sigma}^{-1}$ is the inverse covariance matrix, and $tr(\bmath{B})$ denotes the trace of matrix $\bmath{B}$. We could assume a zero-mean unnormalized Gaussian noise model over the pixels, i.e. we can simply subtract the center $\bmath{y}_{i}$ from the data, to obtain a simplified expression as
\begin{eqnarray}
\mathcal{K}&=& \exp\left\{\frac{-1}{2}\sum_{j=1}^{N}\bmath{y}_{j}^{T}\bmath{\Sigma}^{-1}\bmath{y}_{j}\right\}\nonumber\\
      &=& \exp\left\{\frac{-1}{2}tr(\bmath{Y}^{T}\bmath{\Sigma}^{-1}\bmath{Y})\right\}
\label{eqn:Ky2}
\end{eqnarray}
Note that $ tr(\bmath{Y}^{T}\bmath{\Sigma}^{-1}\bmath{Y}) = tr(\bmath{\Sigma}^{-1}\bmath{Y}^{T}\bmath{Y})=N tr(\bmath{\Sigma}^{-1}\bmath{S})$. This shows that $\bmath{S}$ is a sufficient statistic for characterizing the unnormalized likelihood (herein the photometric similarity) of data $\bmath{Y}$, and we can further write
\begin{eqnarray}
\mathcal{K} &=& \exp\left\{\frac{-N}{2}tr(\bmath{\Sigma}^{-1}\bmath{S})\right\}
\label{eqn:Ky2}
\end{eqnarray}

In practice the weights are computed by first decomposing the true covariance matrix into a product $\bmath{\Sigma}=\bmath{E}\Lambda\bmath{E}^{T}$, where $\bmath{E}$ is the orthogonal eigenvector matrix and $\bmath{\Lambda}$ is the corresponding diagonal matrix of eigenvalues, that easily compute the covariance matrix whose inverse is required. We adapt the efficient sparse matrix transform (SMT) approach in estimating the covariance matrix $\bmath{\Sigma}$ \cite{Theiler2011}. The SMT approach solves the optimization problem, $\hat{\bmath{E}} =\argmin_{\bmath{E}\in\bmath{\Omega}}\left\{|\mbox{diag}(\bmath{E}^{T}\bmath{S}\bmath{E})|\right\}$, and set $\hat{\bmath{\Lambda}} =\mbox{diag}(\hat{\bmath{E}}^{T}\bmath{S}\hat{\bmath{E}})$, where $\bmath{\Omega}$ is the set of allowed orthogonal transforms that can be computed using a series of {\em Givens rotations}\cite{Theiler2011}. A simple manipulation can show that $\bmath{\Sigma}^{-1} = \hat{\bmath{E}}\hat{\bmath{\Lambda}}^{-1}\hat{\bmath{E}}^{T}$ so that we have
\begin{eqnarray}
\tilde{w}_{p}(i,j) = \exp\left\{-\frac{1}{2}(\hat{\bmath{E}}^{T}\bmath{y}_{i}-\hat{\bmath{E}}^{T}\bmath{y}_{j})^{T}\hat{\bmath{\Lambda}}^{-1}(\hat{\bmath{E}}^{T}\bmath{y}_{i}-\hat{\bmath{E}}^{T}\bmath{y}_{j})\right\}\nonumber
\label{eqn:Kyfinal}
\end{eqnarray}
with the final graphs weights computed from
\begin{eqnarray}
w(\bmath{s}_{i},\bmath{s}_{j},\bmath{y}_{i},\bmath{y}_{j}) = \exp\left\{\frac{-\|\bmath{s}_{i}-\bmath{s}_{j}\|^{2}}{\sigma_{s}^2}\right\}\cdot \tilde{w}_{p}(i,j)
\label{eqn:bisim-final}
\end{eqnarray}
The SMT approach to computing the covariance matrix $\bmath{\Sigma}$ is efficient and robust in handling the singularities of $\bmath{\Sigma}$. Other approaches to computing $\bmath{\Sigma}$ have been used in the literature including the PCA adaptation approach \cite{Honghong09}, where the singularity of $\bmath{\Sigma}$ is not carefully addressed.

\subsection{Attractive Potential Function}
The main fundamental idea underpinning the design of multidimensional artificial field embedding models is to treat the pair-equilibrium distances $\epsilon_{ij}$ for vertices in $\mathcal{G}_{\mathcal{S}}$ as attractive wells. That is, consider the minimum-energy configuration between maps $i$ and $j$ as a sink for the potential energy function. For example, the attractive potential energy functions that we consider are bounded from below to allow for the existence of constant attraction force effects at all distances, that is $U_{att}^{ij}(\|\bmath{z}_{i}-\bmath{z}_{j}\|) \geq \alpha$, where $\alpha$ is a positive constant $\forall \ \|\bmath{z}_{i}-\bmath{z}_{j}\|$. In this study, we explore potential functions of the form
\begin{eqnarray}
U_{att}^{ij}(\|\bmath{z}_{i}-\bmath{z}_{j}\|) = \xi_{a}\|\bmath{z}_{i}-\bmath{z}_{j}\|^{p}
\label{attractive-potential}
\end{eqnarray}
where for values $0<p\leq 1$, the pairwise function is conic in shape, and the resulting attractive force field has a constant cluster formation amplitude determined from the graph's edge weights $w_{ij}$. $\xi_{a}$ is an attraction force magnitude related adaptive parameter. Figure \ref{fig:total-potential-bounded} shows the attractive potential energy generated from equation \eqref{attractive-potential} for $p=2,\xi_{a}=1$, and $w_{ij}=0.5$. The shape corresponds to quadratic potential, {\em i.e.} we have a global optimal that acts to pull all force fields effects in its direction, thereby demonstrating the sink nature of the minimum point.

\subsection{Repulsive Potential Functions}\vspace{-0.1cm}
The nature of a repulsion function is chosen such that as the distance between pair-points increases, its properties are deemed to have negligible influence on maps ({\em i.e.} maps are in long range zone where $F^{ij}_{r}<F^{ij}_{a}$). However, when the distance is small, the function generates a barrier or a repulsive force between maps ({\em i.e.} maps are in the short range zone where $F^{ij}_{r}>F^{ij}_{a}$).
The procedure to selecting a repulsive potential function $U^{ij}_{rep}$ starts by thinking of an indicator function of the form
\begin{eqnarray}
{\small
I_{+}(\|\bmath{z}_{i} - \bmath{z}_{j}\|)=\left\{
                                   \begin{array}{ll}
                                     0 & \ \|\bmath{z}_{i} - \bmath{z}_{j}\|>\epsilon_{ij} \\
                                     \infty & \ \  \|\bmath{z}_{i} - \bmath{z}_{j}\|\le\epsilon_{ij}.
                                   \end{array}
                                 \right.}
\label{eqn:indicator}
\end{eqnarray}
which is a  non-increasing function of distance.  Equation \eqref{eqn:indicator} best captures the behavioral form for a repulsive function that has negligible effects at large distances and has a large dominance at short range distances. With all its appealing intuitive properties, the indicator function is not a differentiable function. As such, we require its approximation by a differentiable function whose gradient can create a repulsion force $F^{ij}_r$ with a magnitude that is inversely proportional to the distance between pairs of maps {\em i.e.} $\|F^{ij}_{r}(\bmath{z}_{i} - \bmath{z}_{j})\|_{2} = \frac{1}{\textbf{dist}(\bmath{z}_{i}, \bmath{z}_{j})}.$  Such approximations can be chosen from {\em e.g.} Gaussian, Exponential, Cauchy, Hyperbolic Tangent and Inverse distance power functions. The behavior of such functions in approximating $I_{+}(\|\bmath{z}_{i} - \bmath{z}_{j}\|)$ is shown in Figure \ref{fig:decayfunctions}.
 \begin{figure}
  \centering
    \includegraphics[width=4.6in]{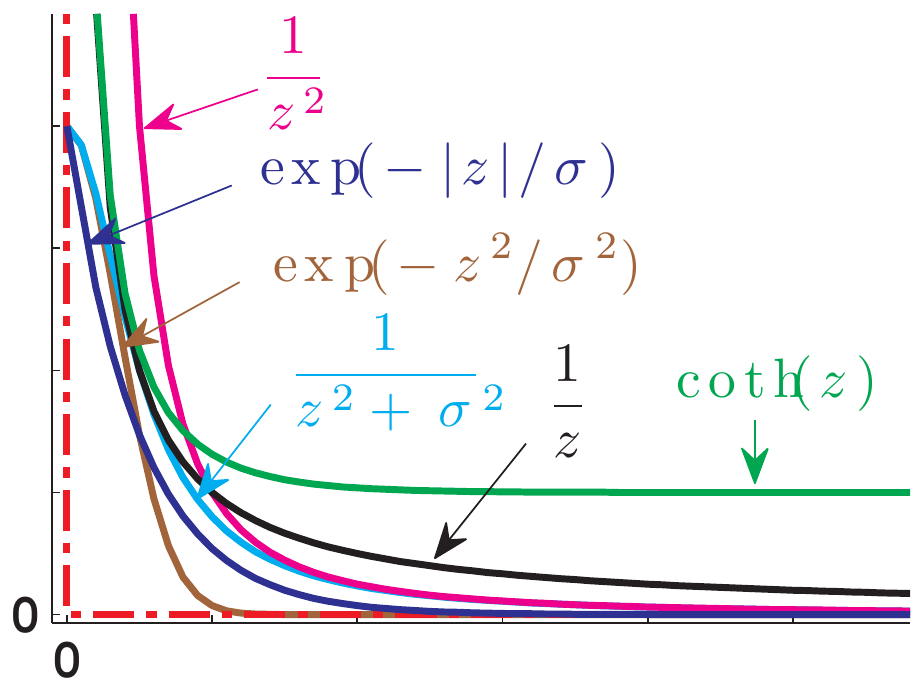}
  \caption{Dashed lines show the function $I_{+}(\bmath{z})$, and the solid curves show different forms of continuous decaying functions suitable for approximating $I_{+}(\bmath{z})$.}
\label{fig:decayfunctions}
\end{figure}

\subsubsection{Exponential Bounded Repulsion}
The Exponential or unnormalized Gaussian curves in Figure \ref{fig:decayfunctions} generates a continuous bounded approximation of \eqref{eqn:indicator}. As the distance between maps grows the magnitude of the curves approaches zero while a maximum magnitude is assigned for maps that are very close in distance. A general unnormalized Gaussian bounded repulsion function can be devised in the form,

\begin{eqnarray}
U_{rep}(\|\bmath{z}_{i}-\bmath{z}_{j}\|) =\xi_{r}\sigma\exp\{-\frac{\|\bmath{z}_{i}-\bmath{z}_{j}\|^{q}}{\sigma}\}
\label{repulsive-exponential-potential}
\end{eqnarray}
Where $\sigma_{r}$ is the bounded repulsion variance parameter. For $q=2$, the function has spherical symmetry as shown in Figure \ref{fig:total-potential-bounded}. For values $0<q\leq 1$ the repulsion potential field has the shape of a harmonic function often used in modeling obstacles in robotic path planning\cite{Latombe91}, while for $1<q<2$ it has the form of a tower  centered at the origin to generate equidistributed repulsive force fields in the direction of the gradient as shown in Figure \ref{fig:total-potential-bounded}.

\subsubsection{Inverse Power Unbounded Repulsion}
The inverse distance function as shown in Figure \ref{fig:decayfunctions} has a continuous best approximation of the indicator function in equation \eqref{eqn:indicator}. Its corresponding general repulsion potential function is given by,
\begin{eqnarray}
U^{ij}_{rep}(\|\bmath{z}_{i}-\bmath{z}_{j}\|) = \frac{\xi_{r}}{\|\bmath{z}_{i}-\bmath{z}_{j}\|^{q}}
\label{eqn:unbounded-rep}
\end{eqnarray}
where $q$ is a positive number. It is clear that as $\|\bmath{z}_{i}-\bmath{z}_{j}\| \rightarrow 0^{+}$ the repulsion becomes unbounded, i.e. for $q=2$, $\lim_{\|\bmath{z}_{i}-\bmath{z}_{j}\| \rightarrow 0^{+}}U^{ij}_{rep}(\|\bmath{z}_{i}-\bmath{z}_{j}\|)\|\bmath{z}_{i}-\bmath{z}_{j}\|=\infty$. The inverse square law structure of this function promotes an invariant representation of neighborhood similarities even with a change in scale for embedding points that are far apart. $\xi_{r}$ is an repulsion magnitude related parameter. Figure \ref{fig:total-potential-unbounded} shows the unbounded repulsion potential field generated from equation \eqref{eqn:unbounded-rep}. In sharp contrast to existing embedding methods, this function affords properties that exhibit collision avoidance of maps as their coordinates change in search for the pair-equilibrium distances $\epsilon_{ij}$. The strong repulsive force field generated by this function renders it more favorable for rejoining of clusters that may have split during the search for a minimum energy configuration state of $\mathcal{G}_{\mathcal{S}}$.

\subsection{Multidimensional Artificial Field Embedding with Bounded Repulsion}
A combination of the attractive and bounded unnormalized Gaussian repulsion functions yields a multidimensional artificial field embedding (MAFE-BR) model described by
\begin{eqnarray}
U(\mathcal{Z}) &=& \sum_{i=1}^{N}\sum_{j=1, j\ne i}\left\{W_{ij}U_{att}^{ij}(\|\bmath{z}_{i}-\bmath{z}_{j}\|) - U_{rep}^{ij}(\|\bmath{z}_{i}-\bmath{z}_{j}\|)\right\}\\
&=&  \sum_{i=1}\sum_{j=1, j\ne i}\left\{\xi_{a}w_{ij}\|\bmath{z}_{i}-\bmath{z}_{j}\|^{p} - \xi_{r}\sigma\exp\{-\frac{\|\bmath{z}_{i}-\bmath{z}_{j}\|^{q}}{\sigma}\}\right\}
\label{total-gaussian-potential}
\end{eqnarray}
For $q=2$, \eqref{total-gaussian-potential} yields a special case elastic embedding model \cite{Carreira10}. Computing the gradient provides information related to the direction and magnitude of motion for each individual map in the embedding space. This motion is described by
\begin{eqnarray}
 \dot{\bmath{z}}_{i} &=& -\sum_{j=1, j\ne i}(\bmath{z}_{i}-\bmath{z}_{j})\left\{\xi_{a}w_{ij}p\|\bmath{z}_{i}-\bmath{z}_{j}\|^{p-2} - \xi_{r}q\|\bmath{z}_{i}-\bmath{z}_{j}\|^{q-2}\exp\{-\frac{\|\bmath{z}_{i}-\bmath{z}_{j}\|^{q}}{\sigma}\}\right\}\nonumber
\label{pot-embedding-gaussian-repulsion}
\end{eqnarray}
An illustration of the gradient field for a point with strong attraction force field is shown in Figure \ref{fig:total-potential-bounded}.
\begin{figure}
\centering
\vspace{-3pt}
\hspace{-0.1cm}\includegraphics[width=3in]{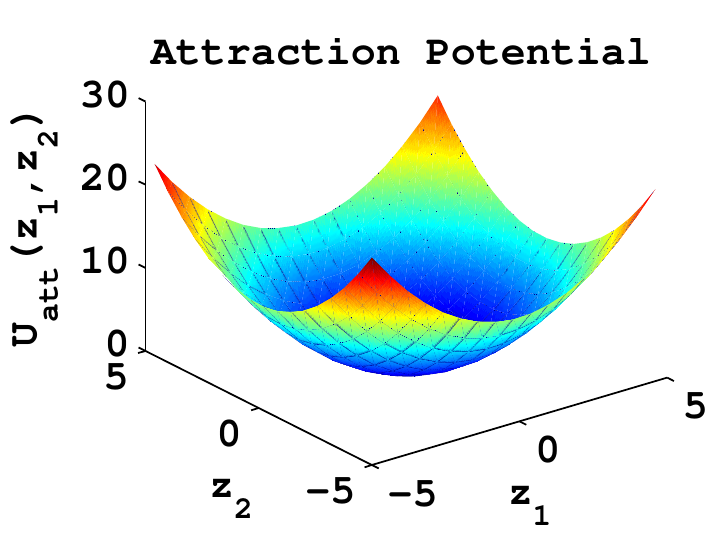}
\hspace{-0.1cm}\includegraphics[width=3in]{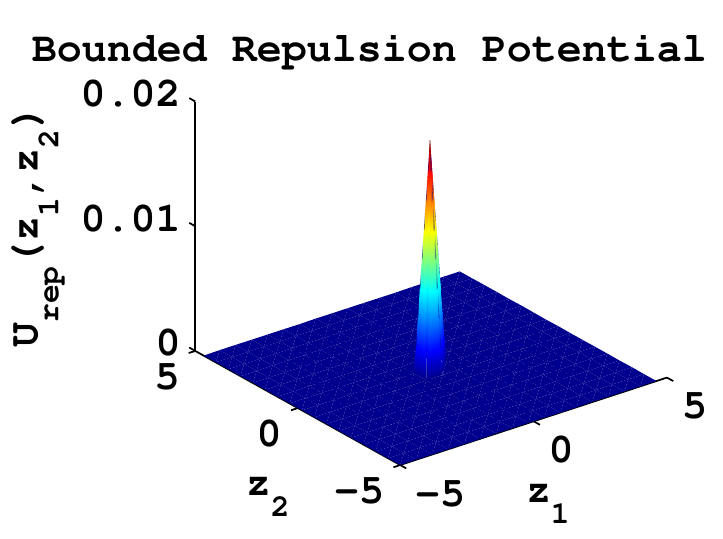}\\
\hspace{-0.1cm}\includegraphics[width=3in]{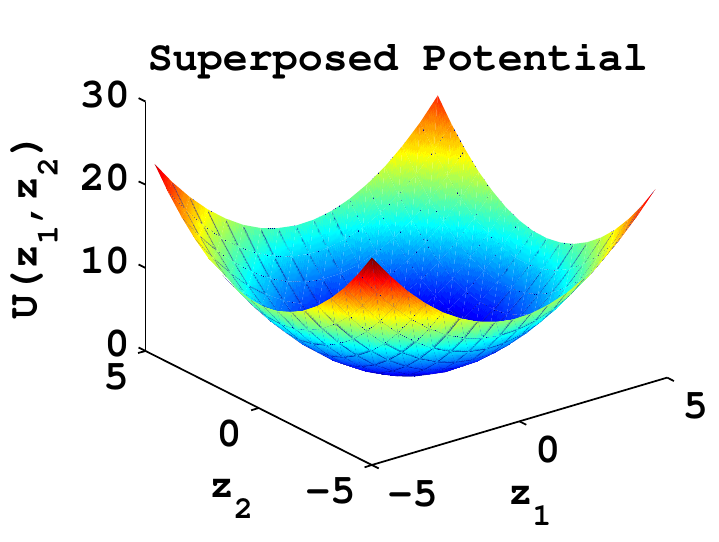}
\hspace{-0.1cm}\includegraphics[width=3in]{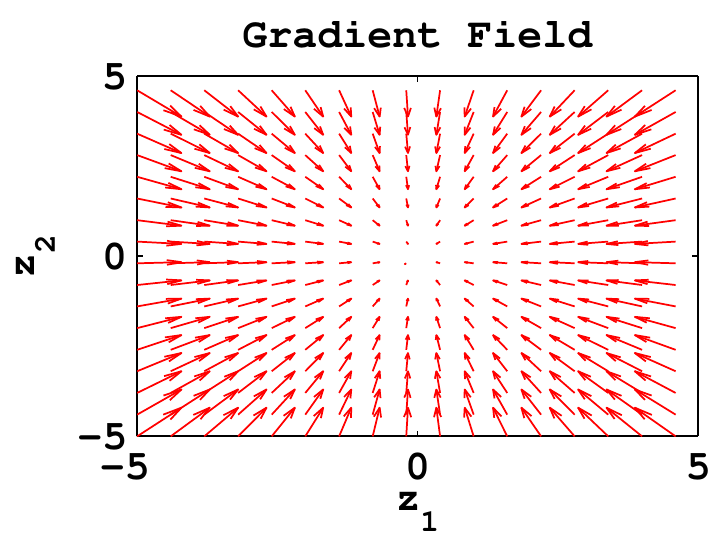}
\vspace{-4pt}
\caption{Illustrations of a superposed-potential function for $p=q=2$. Arrows indicate the negative gradient force field. The map located at $(z_{1},z_{2})=(0,0)$ has a very strong attraction force over a large range of distance. The repulsion force is significantly small and mostly effective over a very short range.}
\label{fig:total-potential-bounded}
\end{figure}

\subsection{Multidimensional Artificial Field Embedding with Unbounded Repulsion}
The second model that is proposed entails a multidimensional artificial field embedding model with an unbounded repulsion inverse distance function of \eqref{eqn:unbounded-rep}. By considering all pairwise map interactions for vertices of $\mathcal{G}_{\mathcal{S}}$, the resultant total potential function of a MAFE-UR model is given by
\begin{eqnarray}
U(\mathcal{Z}) &=& \sum_{i=1}\sum_{j=1, j\ne i}\left\{w_{ij}U_{att}^{ij}(\|\bmath{z}_{i}-\bmath{z}_{j}\|) - U^{ij}_{rep}(\|\bmath{z}_{i}-\bmath{z}_{j}\|)\right\}\nonumber\\
&=&  \sum_{i=1}\sum_{j=1, j\ne i}\left\{\xi_{a}w_{ij}\|\bmath{z}_{i}-\bmath{z}_{j}\|^{p} - \frac{\xi_{r}}{\|\bmath{z}_{i}-\bmath{z}_{j}\|^{q}}\right\}
\label{total-unbounded-potential}
\end{eqnarray}
Under the supposed artificial force field model, the motion of each map is given by $\dot{\bmath{z}}_{i}=-\nabla_{\bmath{z}_{i}}U(\bmath{z})$, {\em i.e.}
\begin{eqnarray}
 \dot{\bmath{z}}_{i} &=& -\sum_{j=1, j\ne i}\left\{\nabla_{\bmath{z}_{i}}w_{ij}U^{ij}_{att}(\|\bmath{z}_{i}-\bmath{z}_{j}\|) -\nabla_{\bmath{z}_{i}}U_{rep}^{ij}(\|\bmath{z}_{i}-\bmath{z}_{j}\|)\right\}\nonumber\\
&=& -2\sum_{j=1, j\ne i}(\bmath{z}_{i}-\bmath{z}_{j})\left\{\xi_{a}w_{ij}p\|\bmath{z}_{i}-\bmath{z}_{j}\|^{p-2} + \xi_{r}q\|\bmath{z}_{i}-\bmath{z}_{j}\|^{-q-2}\right\}
\label{pot-embedding-unbounded-repulsion}
\end{eqnarray}
Fig. \ref{fig:total-potential-unbounded}, shows the total potential field generated from equation (\ref{total-unbounded-potential}) and its corresponding gradient field obtained from (\ref{pot-embedding-unbounded-repulsion}) for $p=1,q=2$.

\begin{figure}
\centering
\vspace{-3pt}
\includegraphics[width=2.9in]{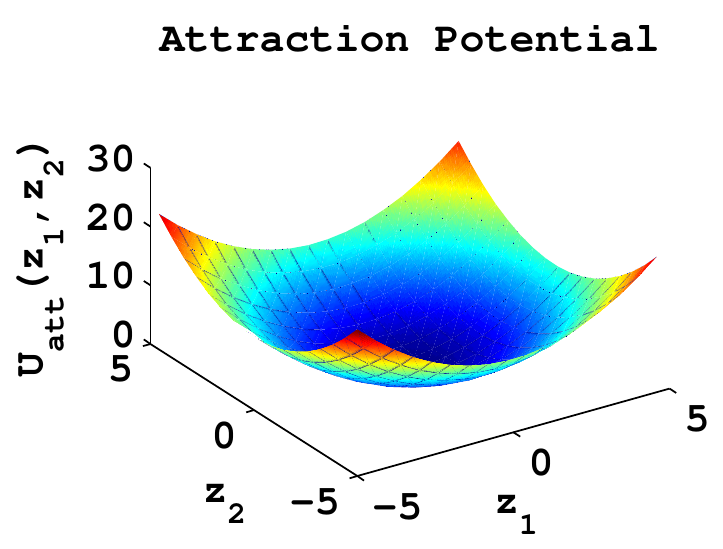}
\includegraphics[width=3.0in]{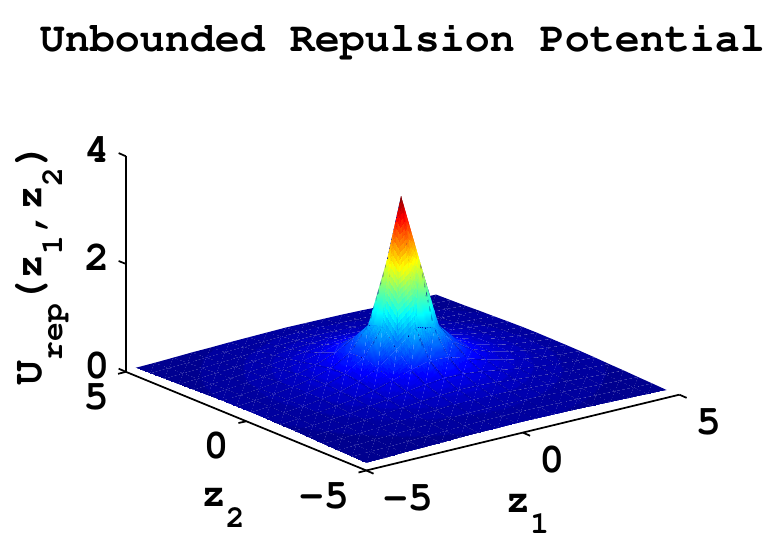}\\
\includegraphics[width=3.0in]{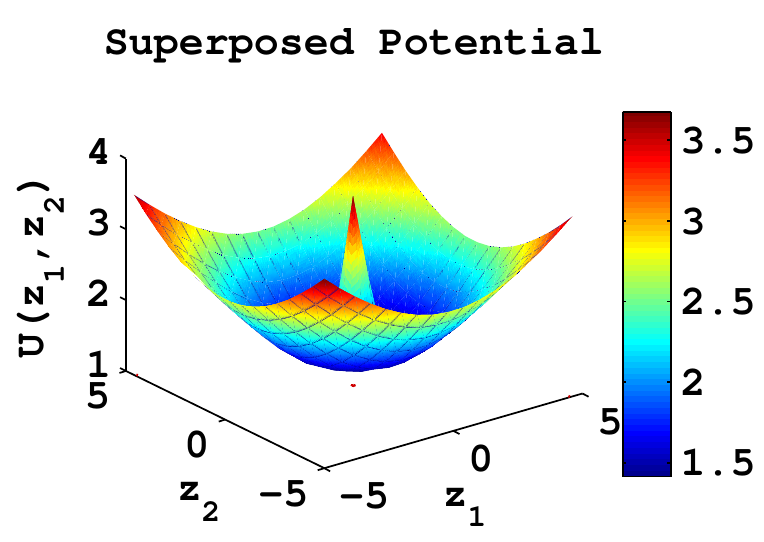}
\includegraphics[width=2.92in]{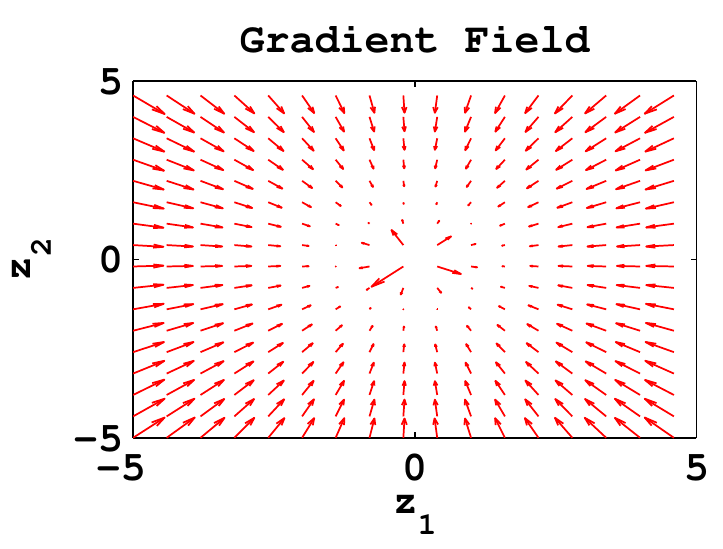}
\vspace{-4pt}
\caption{Illustrations of a superposed-unbounded repulsion potential function for $p=1,q=2$. Arrows indicate the negative gradient force field. The map located at $(z_{1},z_{2})=(0,0)$ has a very strong repulsion force over on short range distances, while the attraction force is dominant over long range distances.}
\label{fig:total-potential-unbounded}
\end{figure}

An important observation on the gradient fields in Figure \ref{fig:total-potential-bounded} and Figure \ref{fig:total-potential-unbounded} demonstrates that without an attraction term in each model, cluster formation would not occur since all pair-wise maps would disperse from each other; whereas by eliminating the repulsion term(by setting $\xi_{r}=0$), all maps would collapse to a single point leading to the crowding problem that has been a weakness in most existing embedding models. Table \ref{tab:GeneralFunction} summarizes typical force field functions that can be used to design embedding models that exhibit aggregation behavior.

\begin{table}
	\centering
	\caption{A summary of functions compatible to design multidimensional artificial force field embedding techniques.}
	\label{tab:GeneralFunction}
\begin{tabular}{c|c|c|c}
\toprule
  \textcolor[rgb]{1.00,0.00,0.00}{Technique}& \textcolor[rgb]{1.00,0.00,0.00}{Long Range Attraction}& \textcolor[rgb]{1.00,0.00,0.00}{Short Range Repulsion} & \\
  \hline
  \textcolor[rgb]{0.00,0.00,1.00}{SNE} & $\sum_{i,j=1}^{N}w_{ij}\|\bmath{z}_{i}-\bmath{z}_{j}\|^{2}$ & $\sum_{i=1}^{N}\log\sum_{j=1}^{N}e^{\{-\|\bmath{z}_{i}-\bmath{z}_{j}\|^{2}\}}$ &$\sum_{j}w_{ij}=1$\\
  \hline
  \textcolor[rgb]{0.00,0.00,1.00}{tSNE} & $\sum_{i,j=1}^{N}w_{ij}\log(1 + \|\bmath{z}_{i}-\bmath{z}_{j}\|^{2})$ & $\sum_{i=1}^{N}\log(\sum_{j=1}^{N}\frac{1}{1 + \|\bmath{z}_{i}-\bmath{z}_{j}\|^{2}})$ & $\sum_{j}w_{ij}=1$\\
  \hline
  \textcolor[rgb]{0.00,0.00,1.00}{SSNE} & $\sum_{i,j=1}^{N}w_{ij}\log(\|\bmath{z}_{i}-\rho\bmath{z}_{j}\|^{m})$& $\sum_{i=1}^{N}\log(\sum_{j=1}^{N}\frac{1}{\|\bmath{z}_{i}-\rho\bmath{z}_{j}\|^{m}})$ & $\bmath{z}_{i}\in\mathbb{S}^{m}$\\
  \hline
  \textcolor[rgb]{0.00,0.00,1.00}{LE} & $\sum_{i,j=1}^{N}w_{ij}\|\bmath{z}_{i}-\bmath{z}_{j}\|^{2}$ & $\bmath{Z}\bmath{Z}^{T}=I$, $\bmath{Z}\bmath{1}=\bmath{0}$& \\
  \hline
  \textcolor[rgb]{0.00,0.00,1.00}{CE} & $\sum_{i,j=1}^{N}\frac{w_{ij}}{\sigma^{2} + \|\bmath{z}_{i}-\bmath{z}_{j}\|^{2}\}}$ & $\bmath{Z}\bmath{Z}^{T}=I$, $\bmath{Z}\bmath{1}=\bmath{0}$& \\
  \hline
  \textcolor[rgb]{0.00,0.00,1.00}{GE} & $\sum_{i,j=1}^{N}w_{ij}\exp\{-\|\bmath{z}_{i}-\bmath{z}_{j}\|^{2}\}$ & $\bmath{Z}\bmath{Z}^{T}=I$, $\bmath{Z}\bmath{1}=\bmath{0}$& \\
  \hline
  \textcolor[rgb]{0.00,0.00,1.00}{MAFEBR} & $\sum_{i,j=1}^{N}w_{ij}\xi_{a}\|\bmath{z}_{i}- \bmath{z}_{j}\|^{p}$ &  $\sum_{i,j=1}^{N}\xi_{r}\sigma\exp\{-\frac{\|\bmath{z}_{i}-\bmath{z}_{j}\|^{q}}{\sigma}\}$& $p,q\ge 1$\\
  \hline
  \textcolor[rgb]{0.00,0.00,1.00}{MAFEUR} & $\sum_{i,j=1}^{N}w_{ij}\xi_{a}\|\bmath{z}_{i}- \bmath{z}_{j}\|^{p}$ & $ \sum_{i,j=1}\frac{\xi_{r}}{\|\bmath{z}_{i}- \bmath{z}_{j}\|^{q}}$ &$p,q\ge 1$\\
  \hline
  \textcolor[rgb]{0.00,0.00,1.00}{MAFEE} & $\sum_{i,j=1}^{N}w_{ij}\sigma_{a}\exp\{-\frac{\|\bmath{z}_{i}-\bmath{z}_{j}\|^{p}}{\sigma_{a}}\}$ &  $\sum_{i,j=1}^{N}\xi_{r}\sigma_{r}\exp\{-\frac{\|\bmath{z}_{i}-\bmath{z}_{j}\|^{q}}{\sigma_{r}}\}$&$p,q\ge 1$\\
 \hline
\textcolor[rgb]{0.00,0.00,1.00}{MAFEH} & $\sum_{i,j=1}^{N}\frac{w_{ij}\xi_{a}}{\|\bmath{z}_{i}- \bmath{z}_{j}\|^{p}}$ & $ \sum_{i,j=1}\frac{\xi_{r}}{\|\bmath{z}_{i}- \bmath{z}_{j}\|^{q}}$ &$p,q\ge 1$\\
\bottomrule
 \end{tabular}
\end{table}

\section{Stochastic Gradient Descent Optimization}
The objective functions for MAFE based models are by far less nonlinear as compared to SNE and tSNE, as such their optimization is relatively simple, requiring no random jitter terms to establish stability. The optimization adapts a variation of the stochastic gradient descent\cite{Plagianakos01} with a common adaptive learning rate
\begin{eqnarray}
\alpha^{(t+1)} = \alpha^{(t)} + \gamma_{1}\langle\nabla U(\mathcal{Z}^{(t-1)}),\nabla U(\mathcal{Z}^{(t)})\rangle + \gamma_{2}\langle\nabla U(\mathcal{Z}^{(t-2)}),\nabla U(\mathcal{Z}^{(t-1)})\rangle
\label{eqn:learning rate}
\end{eqnarray}
where $\alpha^{(t)}$ is the learning rate at iteration $t$, $\gamma_{1}$ and $\gamma_{2}$ are the meta-learning rates. The main characteristics of this fast learning rate adaptation scheme is that it exploits gradient-related information from the current as well as the two previous embedding coordinates in the sequence. This provides an enhancement on the stabilization in the values of the learning rate and helps the gradient descent algorithm to exhibit even fast convergence that leads  to better minimum energy-configuration. A description of the proposed algorithm is given in Algorithm \ref{pfe-main}. The algorithm's termination condition is when $\|\nabla U(\mathcal{Z})\|\leq\epsilon$. The choice of $\gamma_{1}$ and $\gamma_{2}$ is not critical for finding the minimum-energy configuration but only affect the rate at which the algorithm does so. Figure \ref{fig:gradient-trajectories} shows gradient field trajectories that were generated by this optimization scheme for a real world data set.
\begin{algorithm}
\caption{Generalized MAFE Algorithm}
\KwIn{{\small High dimensional observations $\bmath{Y}$} \;}
{\small Initialize optimization parameters: $\alpha^{(1)},\ \gamma_{1},\ \gamma_{2}$\;}
\KwOut{ {\small Embedding coordinates matrix $\bmath{Z}=\{\bmath{z}_{1},\bmath{z}_{2},\cdots,\bmath{z}_{N}\}$}\;}
{\small Compute high dimensional neighborhood graph weights $w_{ij}$ from equation \eqref{eqn:bisim-final} or use a Gaussian kernel\;}
{\small Randomly sample initial maps from a normal distribution {\em i.e.} $\bmath{Z}^{(0)} \sim \ N(0,50I)$\;}
{\small Set $\mathcal{Z}^{(1)}=[\bmath{z}_{1}^{(0)T},\bmath{z}_{2}^{(0)T},\cdots,\bmath{z}_{N}^{(0)T}]^{T}\in\mathbb{R}^{Nm}$ \;}
\While{$\|\nabla U(\mathcal{Z}^{(t)})\|>\epsilon$}{
{\mbox{Set} $t = t + 1$\;}
  {\small Compute new embedding coordinates using\;}
 {$\mathcal{Z}^{(t+1)} = \mathcal{Z}^{(t)} - \alpha^{(t)}\nabla U(\mathcal{Z}^{t})$\;}
 {\small Calculate the new learning rate from}
 {$\alpha^{(t+1)} = \alpha^{(t)} + \gamma_{1}\langle\nabla U(\mathcal{Z}^{(t-1)}),\nabla U(\mathcal{Z}^{(t)})\rangle + \gamma_{2}\langle\nabla U(\mathcal{Z}^{(t-2)}),\nabla U(\mathcal{Z}^{(t-1)})\rangle$\;}
}
\label{pfe-main}
\end{algorithm}

The iterative optimization in \eqref{eqn:generalopt}, or in particular the models described by equations \eqref{pot-embedding-gaussian-repulsion}, and \eqref{pot-embedding-unbounded-repulsion} converges when all pair-equilibrium distances $\epsilon_{ij}$ are established. A strong theoretical argument can be made to assert that the motion of maps is guaranteed to stop and that no oscillatory behavior exist at the minimum-energy configuration state. This is done by letting the invariant set of the equilibrium positions to be $$ \Xi_{equi} = \left\{ \mathcal{Z}:\dot{\mathcal{Z}}=0\right\}.$$  The proof needs to show that as $t\rightarrow\infty$ the state $\mathcal{Z}(t)$ converges to $\Xi_{equi}$, {\em i.e.} the minimum-energy configuration of a graph converges to a stationery state. This extends Theorem 2 of \cite{Gazi02c} to problems of data visualization and dimensionality reduction.

\begin{mytheorem}\label{Thm:Equilibrium}
Consider a graph embedding described by $\dot{\bmath{z}}_{i}=\sum_{j=1, j\ne i}F^{ij}(\bmath{z}_{i}-\bmath{z}_{j}),\ \ i=1,\cdots, N$, with pairwise force field functions\\ $F^{ij}(\bmath{z}_{i}-\bmath{z}_{j})=(\bmath{z}_{i} - \bmath{z}_{j})\left\{F^{ij}_{r}(\|\bmath{z}_{i} - \bmath{z}_{j}\|)- F^{ij}_{a}(\|\bmath{z}_{i} - \bmath{z}_{j}\|)\right\}$. As $t\rightarrow\infty$, it can be shown that $\mathcal{Z}(t)\rightarrow\Xi_{equi}.$
\end{mytheorem}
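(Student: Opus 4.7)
The natural plan is a Lyapunov argument based on LaSalle's invariance principle, with the total superposed potential $U(\mathcal{Z})$ from \eqref{eqn:pot-cost} playing the role of the Lyapunov function. The first step is to observe that, by property (3) of the force field functions, the dynamics \eqref{eqn:latentmotion} can be written as a pure gradient flow: collecting all vertex equations and using $F^{ij}(\bmath{z}_i-\bmath{z}_j)=-\nabla_{\bmath{z}_i}\{w_{ij}U^{ij}_{att}-U^{ij}_{rep}\}$, we obtain $\dot{\mathcal{Z}}=-\nabla_{\mathcal{Z}}U(\mathcal{Z})$. Differentiating $U$ along trajectories then yields
\begin{equation*}
\dot{U}(\mathcal{Z}(t))=\langle\nabla U(\mathcal{Z}),\dot{\mathcal{Z}}\rangle=-\|\nabla U(\mathcal{Z})\|^{2}\le 0,
\end{equation*}
so $U$ is non-increasing along solutions, with equality exactly on the set $\{\mathcal{Z}:\nabla U(\mathcal{Z})=0\}$, which by construction coincides with $\Xi_{equi}=\{\mathcal{Z}:\dot{\mathcal{Z}}=0\}$.

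The second step is to establish boundedness of trajectories, which is the precondition for LaSalle. For this I would invoke the structural assumptions made in Section \ref{sec:mafegeneral}: the attractive potentials $U^{ij}_{att}(\|\bmath{z}_i-\bmath{z}_j\|)$ grow unboundedly as $\|\bmath{z}_i-\bmath{z}_j\|\to\infty$ while the repulsions $U^{ij}_{rep}$ remain bounded above (for the Gaussian barrier) or decay to zero (for the inverse-power barrier). Together with positivity of the weights $w_{ij}$ and edge-connectedness of $\mathcal{G}_{\mathcal{S}}$, this forces $U$ to be radially unbounded in the relative coordinates, so every sublevel set $\{U\le U(\mathcal{Z}(0))\}$ is compact modulo rigid translations; fixing the group centroid (as done in Section \ref{sec:mafegeneral}) removes this translational ambiguity and yields a genuinely compact positively invariant set $\mathcal{C}$ containing the orbit $\mathcal{Z}(t)$.

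With $\mathcal{C}$ compact and positively invariant and $\dot{U}\le 0$ on $\mathcal{C}$, LaSalle's invariance principle applies: $\mathcal{Z}(t)$ converges to the largest invariant subset of $\{\mathcal{Z}\in\mathcal{C}:\dot{U}(\mathcal{Z})=0\}=\{\mathcal{Z}\in\mathcal{C}:\nabla U(\mathcal{Z})=0\}$. Since points where the gradient vanishes are themselves equilibria (hence trivially invariant), this largest invariant set equals $\Xi_{equi}\cap\mathcal{C}$, giving $\mathcal{Z}(t)\to\Xi_{equi}$ as $t\to\infty$. This mirrors the structure of Theorem 2 of \cite{Gazi02c}, which I would cite as the template.

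The main obstacle, and the step that requires the most care, is precisely the boundedness/compactness argument in the second paragraph. For the MAFE-UR family the repulsion $\xi_r/\|\bmath{z}_i-\bmath{z}_j\|^q$ is singular as two maps approach each other, so $U$ is only defined on the open set where all pairs are distinct; I would need to argue that the collision set acts as an energy barrier (since $U\to+\infty$ there), so initial conditions with finite energy cannot reach a collision in finite time, keeping the trajectory inside a compact subset of the non-collision configuration space. For the bounded-repulsion MAFE-BR model the repulsion is globally bounded, so one must instead rely on the growth of $\xi_a w_{ij}\|\bmath{z}_i-\bmath{z}_j\|^p$ with $p\ge 1$ to dominate and produce the required coercivity. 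Handling these two regimes uniformly, and pinning down the centroid to eliminate the translational null direction of $\nabla U$, is where the technical work sits.
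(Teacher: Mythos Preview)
Your proposal is correct and follows essentially the same route as the paper: take $U(\mathcal{Z})$ as a Lyapunov function, compute $\dot U=-\sum_i\|\dot{\bmath{z}}_i\|^2\le 0$ along trajectories via the gradient-flow identity $\nabla_{\bmath{z}_i}U=-\dot{\bmath{z}}_i$, and invoke LaSalle's invariance principle to conclude $\mathcal{Z}(t)\to\Xi_{equi}$, citing Theorem~2 of \cite{Gazi02c} as the template.

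The one noteworthy difference is that you are more careful than the paper. The paper's proof simply invokes LaSalle after establishing $\dot U\le 0$; it does not verify the precompactness of trajectories, does not discuss the translational null direction, and does not address the collision singularity in the unbounded-repulsion case. Your second and fourth paragraphs identify exactly these points and sketch how to handle them (coercivity of $U$ in relative coordinates, fixing the centroid, the energy barrier at collisions for MAFE-UR). So your argument is strictly more complete: the paper's version is the bare Lyapunov/LaSalle skeleton, and what you flag as ``the step that requires the most care'' is in fact omitted there.
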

\begin{proof}
Consider the general energy function
$ U(\mathcal{Z}) = \sum_{i=1}^{N}U_{i}(\mathcal{Z})$, where $U_{i}(\mathcal{Z})$ is defined in \eqref{vertex-pot-field}. Taking the derivative of $U(\mathcal{Z})$ with respect to each $\bmath{z}_{i}$ yields,
\begin{eqnarray}
\nabla_{\bmath{z}_{i}} U(\mathcal{Z}) = 2\sum_{j=1, j\ne i}\left\{\nabla_{\bmath{z}_{i}}w_{ij}U_{att}^{ij}(\|\bmath{z}_{i}-\bmath{z}_{j}\|) -\nabla_{\bmath{z}_{i}}U^{ij}_{rep}(\|\bmath{z}_{i}-\bmath{z}_{j}\|)\right\}= - \dot{\bmath{z}}_{i}\nonumber
\end{eqnarray}
where the negative gradient is observed as direction of motion in the second equality.
Taking the time derivative of $U(\mathcal{Z})$ along the motion of a graph configuration yields,
\begin{eqnarray}
\dot{U}(\mathcal{Z}) = \nabla_{\bmath{z}_{i}} U(\mathcal{Z})^{T}\dot{\mathcal{Z}} = 2\sum_{i=1}^{N}\nabla_{\bmath{z}_{i}} U(\mathcal{Z})^{T}\dot{\bmath{z}}_{i}=2\sum_{i=1}^{N}\{- \dot{\bmath{z}}_{i}\}^{T}\dot{\bmath{z}}_{i}
=-2\sum_{i=1}^{N}\|\dot{\bmath{z}}_{i}\|^{2}\leq 0,\ \ \forall t.\nonumber
\end{eqnarray}
This result shows that the motion will continue in the direction of decreasing $U(\mathcal{Z})$  to a state when all $\dot{\bmath{z}}_{i} = 0$. By invoking the Lasalle Invariance Principle\cite{LaSalle60}, it can be concluded that as $t\rightarrow\infty$ the graph configuration state $\mathcal{Z}(t)$ converges to the largest subset of the set defined as $$ \Xi =\left\{\mathcal{Z}:\dot{U}(\mathcal{Z})=0\right\}=\left\{\mathcal{Z}:\dot{\bmath{z}}_{i}=0\right\}=\Xi_{equi}.$$ Since each $\dot{\bmath{z}}_{i}\in\Xi_{equi}$ is an equilibrium point, $\Xi_{equi}$ is an invariant set and this concludes the proof.
\end{proof}\vspace{-0.2cm}
This general result holds for any function $F^{ij}$ chosen based on the embedding force field properties discussed in Section \ref{sec:mafegeneral}. As such, it guarantees the convergence of our algorithms. However, in practise, the termination condition is set to some $\epsilon$ finite optimization of $U(\mathcal{Z})$. The result also extends to SNE, tSNE, sSNE and other related MAFE reformulations.%

In addition to obtaining optimal embedding coordinates, the optimization of MAFE-BR and MAFE-UR with $p=q=2$, also learns the embedding space neighborhood graph weights. For MAFE-BR, the optimal embedding neighborhood graph is described by
\begin{eqnarray}
\tilde{w}_{ij}^{br} = \xi_{a}w_{ij} - \xi_{r}\exp\{-\frac{\|\bmath{z}_{i}-\bmath{z}_{j}\|^{2}}{\sigma}\}\nonumber
\end{eqnarray}
The iterative optimization of MAFE-UR generates embedding graph weights that are given by
\begin{eqnarray}
\tilde{w}_{ij}^{ur} = \xi_{a}w_{ij} - \frac{\xi_{r}}{\|\bmath{z}_{i}-\bmath{z}_{j}\|^{4}}\nonumber
\end{eqnarray}
In contrast to the high dimensional neighborhood graph weights , $w_{ij}$, that are positive, the lower dimensional graph weights $\tilde{w}_{ij}^{br}$, and $\tilde{w}_{ij}^{ub}$, corresponding to the optimal embedding coordinates can be negative as illustrated in Figure \ref{fig:embeddingweights}. This property establishes another point of significant contrast between the proposed MAFE based techniques in comparison to traditional embedding techniques that are known to enforce learning of positive lower dimensional weights {\em e.g.}   LLE, SNE, tSNE, and Isomap.
\begin{figure}
\centering
\subfigure[]{
\includegraphics[width=4.2in]{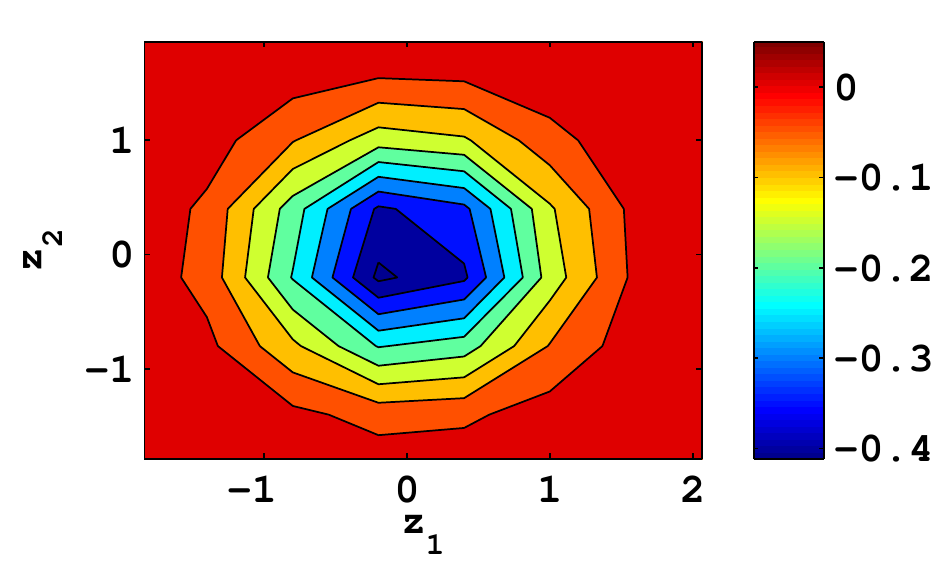}
   \label{fig:bots-rgb}
 }
\subfigure[]{
\hspace{-0.2cm}\includegraphics[width=4.2in]{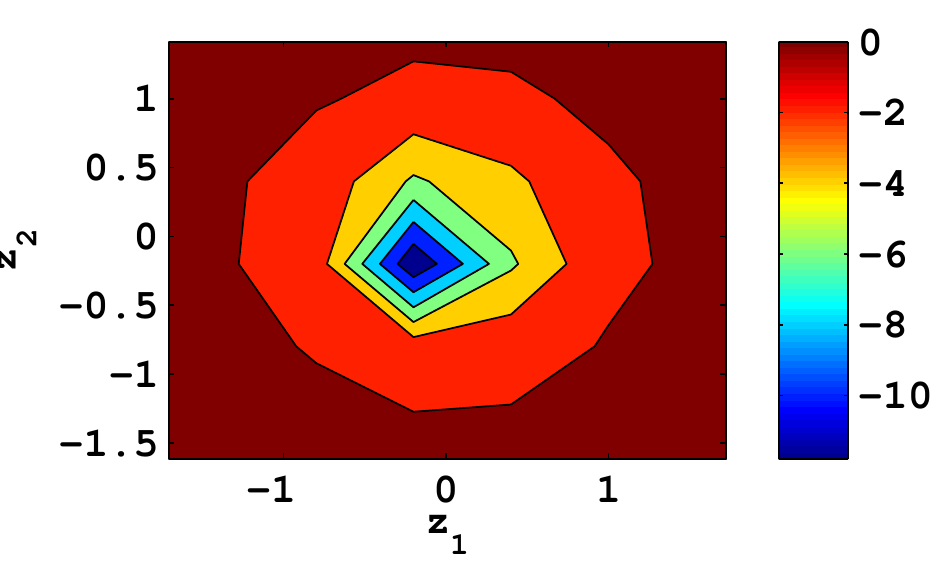}
   \label{fig:bots-labels}
 }
 \caption{Minimum energy graph configuration weights that are learned after 100 iterations for a KSC {\em Graminoid Marsh} pixel map with coordinates $\bmath{z}=[-0.02,-0.003]$. MAFE-BR weights, $\tilde{w}_{ij}^{br}$, are shown in (a) and MAFE-UR weights, $\tilde{w}_{ij}^{ur}$, are shown in (b). For both models, the search for optimal embedding coordinates also leads to learning of negative graph weights. }
\label{fig:embeddingweights}
\end{figure}

\section{Hyperspectral Image Scenes}

\subsubsection{Botswana Hyperion}
Hyperion data with nine identified classes of complex natural vegetation were acquired over the Okavango Delta, Botswana, in May 2001, \cite{Neuenschwander07}. The general class groupings include seasonal swamps, occasional swamps, and woodlands. Signatures of several classes are spectrally overlapped, typically resulting in poor classification accuracies. After removing water absorption, noisy, and overlapping spectral bands, 145 bands were used for classification experiments. Classification error rates on signatures are carried out after data has been embedded to a lower dimensional space.

\subsubsection{Kennedy Space Center (KSC)}
Airborne hyperspectral data were acquired by the National Aeronautics and Space Administration(NASA) Airborne Visible/Infrared Imaging Spectrometer (AVIRIS) sensor at 18-m spatial resolution over Kennedy Space Center during March 1996. Noisy and water absorption bands were removed, leaving 176 features for thirteen wetland and upland classes of interest. Cabbage Palm Hammock (Class 3) and Broad Leaf/Oak Hammock (Class 6) are upland trees; Willow Swamp (Class 2), Hardwood Swamp (Class 7), Graminoid Marsh (Class 8) and Spartina Marsh (Class 9) are trees and grasses in wetlands. Their spectral signatures are mixed and often exhibit only subtle differences. Visualization and classification results for all thirteen classes are reported using the lower dimensional space coordinates.

\subsubsection{Indian Pine}
This scene was gathered by an AVIRIS sensor over the Indian Pines test site in North-western Indiana in June 12, 1992 at 20-m spatial resolution. The 220-band image scene is over an area that is 6 miles west of West Lafayette. The image has 16 classes consisting of agricultural fields that are alfalfa, corn-notill, cornmin, corn, grass/pasture, grass/trees, grass/pasture-mowed, haywindrowed, oats, soybeans-notill, soybean-min, soybeanclean, wheat, woods, dldg-grass-tree-drives, and stone-steel towers. The image size is $145\times 145$ pixels. The pixel resolution is 16 bits, corresponding to 65536 gray levels. 2550 pixels were selected to generate the sample training and testing data. Again, we report on Euclidean and non-Euclidean embedding results as well as evaluation of the computed coordinates based on classification error rates for all 16 classes.

\subsubsection{Salinas-A}
This is a small sub-scene of Salinas image, denoted Salinas-A, that is commonly used in hyperspectral data analysis. It comprises $86\times 83$ pixels located within the same scene at [samples, lines] = [591-676, 158-240] and includes six classes. The original scene was collected by the 224-band AVIRIS sensor over Salinas Valley, California, and is characterized by high spatial resolution (3.7-meter pixels). The area covered comprises 512 lines by 217 samples. The 20 water absorption bands were discarded, \ie bands: [108-112], [154-167], 224.  It includes vegetables, bare soils, and vineyard fields. Salinas ground truth contains 16 classes.

\begin{table}
	\centering
	\small
	\caption{Ground truth classes for the Salinas-A and Indian Pine scenes with their respective samples numbers}
	\label{tab:Datasets}
		\begin{tabular}{llllllll} \toprule
    \multicolumn{2}{c}{Salinas-A} && \multicolumn{2}{c}{Indian Pine}\\
		 \toprule
		c1&  Brocoli-green-weeds-1 (391) && c1& Alfalfa (54)\\
		 c2& Corn-senesced-green-weeds	 (1343)&&c2&Corn-notill (1434) \\
		c3& Lettuce-romaine-4wk	 (616) &&c3& Corn-min (834)\\
		c4& Lettuce-romaine-5wk	 (1525)&& c4&Corn (234)\\
		c5& Lettuce-romaine-6wk	 (674)&& c5&Grass-pasture (497)\\
		c6& Lettuce-romaine-7wk	 (799) && c6&Grass-trees (747)\\
		& && c7&Grass-mowed (26)\\
        & && c8&Hay-windrowed (489)\\
		& && c9&Oats (20)\\
		& && c10&Soybean-notill (968)\\
        & && c11&Soybean-min (2468)\\
        & && c12&Soybean-heavy till (614)\\
         & && c13&Wheat (212)\\
         & && c14 & Woods (1294)\\
          & && c15&Bldg-grass-tree-dr (380)\\
         & && c16&Stone-steel towers (95)\\
         \bottomrule
		\end{tabular}
\end{table}

\begin{table}
	\centering
	\small
	\caption{Ground truth classes for image scenes with their respective samples numbers}
	\label{tab:Datasets}
		\begin{tabular}{lllllll} \toprule
    \multicolumn{2}{c}{Botswana Hyperion} && \multicolumn{2}{c}{Kennedy Space Center}\\
		 \toprule
		c1&  Water (158)&&c1& Scrub (761) \\
		 c2& Floodplain (228)&&c2& Willow swamp (243) \\
		c3& Riparian (237)&&c3& Cabbage hamm (256) \\
		c4& Firescar (178)&&c4& Cabbage palm(252)\\
		c5& Island interior (183)&&c5& Slash pine (161) \\
		c6& Woodlands (199)&&c6& Oak (229)\\
		c7& Savanna (162)&&c7& Hardwood swamp (105)\\
		c8& Short mopane (124)&& c8& Graminoid marsh (431)\\
		c9&Exposed soils (111)&&c9& Spartina marsh (520)\\
		 & &&c10& Cattail marsh (404)\\
        & &&c11& Salt marsh (419)\\
        & &&c12& Mud flats (503)\\
         & &&c13& Water (927)\\
         \bottomrule
		\end{tabular}
\end{table}

\section{Experimental Results}\label{sec:experiments}
MAFE evaluation on lower dimensional representations consists of various comparisons with popular techniques on all four image scenes. The comparison includes experimental results obtained with SNE, tSNE, Isomap and LE, taking as input the Gaussian kernel neighborhood graphs obtained from a principal component analysis step that reduces the original dimension to $40$. In constructing the high dimensional neighborhood graph, we vary the number of neighbors from $k=1$ to $k=50$ and pick an optimal value of $k=15$. The optimal value for $k$ ensures that the embeddings are neither too noisy and unstable nor does the geometry of the observation collapse the coordinates of dissimilar observations. All existing models are implemented as in \cite{Laurens08}, with the perplexity of the conditional distribution induced by the Gaussian kernel determined as $2^{H}$, where $H$ is the entropy. The parameterized bilateral kernel function presented in Section \ref{sec:similarity}, is used to generate the input graphs for MAFE with no principal component analysis step. MAFE-BR results are generated with a setting  $p=q=2$ for \eqref{total-gaussian-potential}, establishing a quadratic attraction potential and a spherical Gaussian repulsion potential. MAFE-BR's magnitude parameters are set as $\xi_{a}=0.4$ and $\xi_{r}=10^{-4}$. MAFE-UR results are generated with a setting  $p=2, q=1$ and parameter values $\xi_{a}=0.03$ and $\xi_{r}=10^{-5}$. The proposed MAFE optimization algorithm uses the norm of the gradient as the termination condition, \ie $\|\nabla U(\mathcal{Z}^{(t)})\| < \epsilon$, with $\epsilon=10^{-5}$. SNE, tSNE and sSNE obtain solutions based on standard gradient descent algorithms whose terminations are defined by the number of iterations $T$, \ie terminate when $T=1000$ iterations. Embedding maps obtained by Isomap are based on the classical formulation that admits the closed-form solution of an eigenvector structured problem, namely picking the leading components of variation, while LE is based on picking the trailing eigenvectors.

\subsection{Trajectories of Gradient Vector Field }
The dynamic equation in \eqref{eqn:latentmotion} serves as an approximation to the true model that describes the central path in search for
optimal data manifolds. Additional insights on the uncovering of the underlying manifolds can be obtained by observing the gradient field trajectories as each map traverses towards the minimum energy configuration state of the embedding graph. The trajectory traces in 2D space provide a visual assessment of how the formation of clusters is affected by different cost functions and their corresponding optimization schemes. The gradient vector fields to be presented were obtained from mapping fifteen hyperspectral pixels from three classes of both the Botswana and the Kennedy Space Center images. Three classes are considered for each image, \ie {\em Woodlands, Firescar , \em Island Interior} for Botswana and {\em Water, Graminoid Marsh, Cabbage Palm} for Kennedy Space Center. MAFE based trajectories are compared to gradient fields obtained with tSNE and SNE. Figure \ref{fig:gradient-bots-trajectories} and Figure \ref{fig:gradient-trajectories} illustrate the formation of clusters under these iterative optimized embedding algorithms. tSNE and SNE results are shown in Figures \ref{fig:sne-trajectory} and \ref{fig:tsne-trajectory} from which a high degree of oscillation and random change in gradient vector directions can be observed. These instabilities causes inefficiencies on the establishment of the equilibrium distances $\epsilon_{ij}$ between pairs of maps. In contrast, starting from the same initial maps, Figures \ref{fig:mafe-br-trajectory} and \ref{fig:mafe-ur-trajectory} demonstrates MAFE-BR  and MAFE-UR techniques with smooth trajectories and no instabilities, i.e. no random change of gradient vector directions. Such smooth gradient fields are due to two important features of the proposed MAFE framework. First, the local stochastic adaptive optimization scheme used in MAFE models retains the local gradient information from its last two sequence computations thereby establishing smooth and stable iterates, it does not depend on random jitter parameters in its update rule. Second, the careful design of pair-dependent attraction and repulsion functions, as well as the sparsity of neighborhood graphs allow for a smooth interaction between long range and short range forces on pairs of maps. The smooth interaction generates a robust motion of clusters that lead to the establishment of the minimum energy configuration state much quicker. As such, the optimization scheme used in MAFE models achieves several magnitudes of convergence speed in contrast to other algorithms as shown in Figure \ref{fig:costfunctiondescrease1}.

\begin{figure}
\centering
\subfigure[Initial Maps.]{
\includegraphics[width=3in]{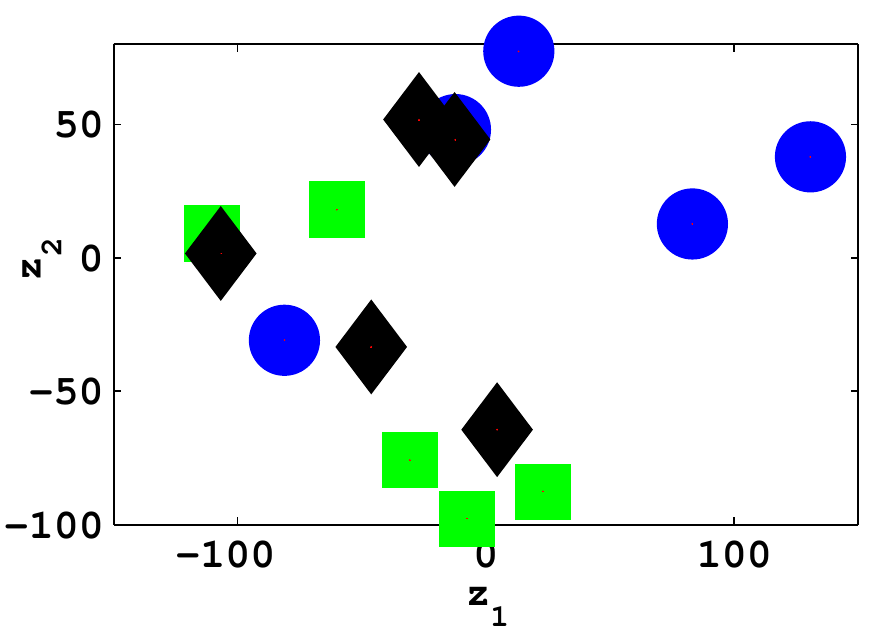}
\vspace{-0.2cm}
   \label{fig:initial-bots}
 }
\subfigure[{\tt MAFE-BR}.]{
\includegraphics[width=3in]{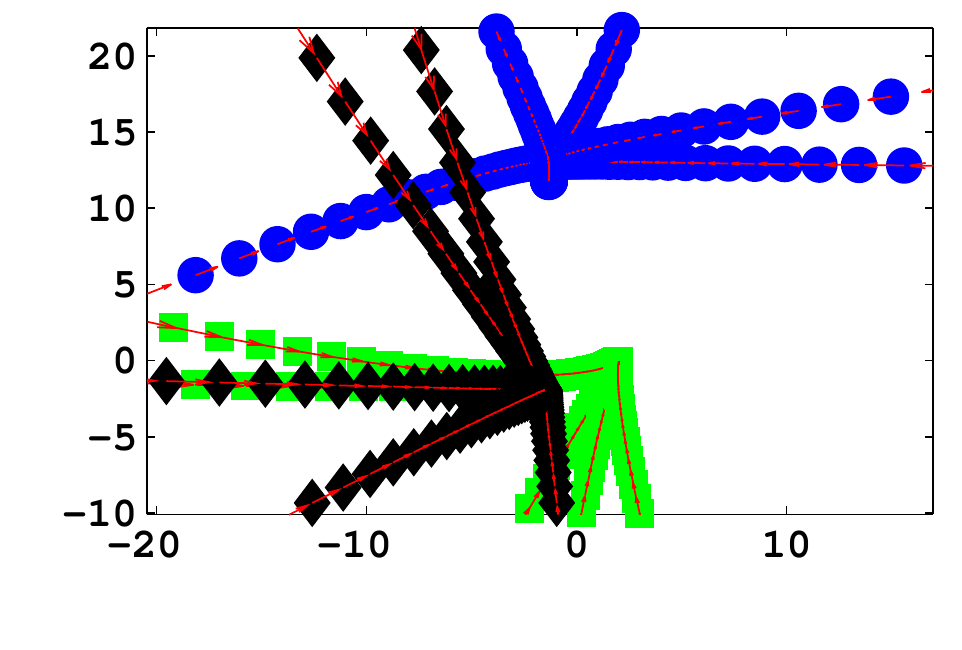}
\vspace{-0.2cm}
 \label{fig:mafe-br-zoom}
 }\\
 
\subfigure[{\tt SNE}.]{
\includegraphics[width=3in]{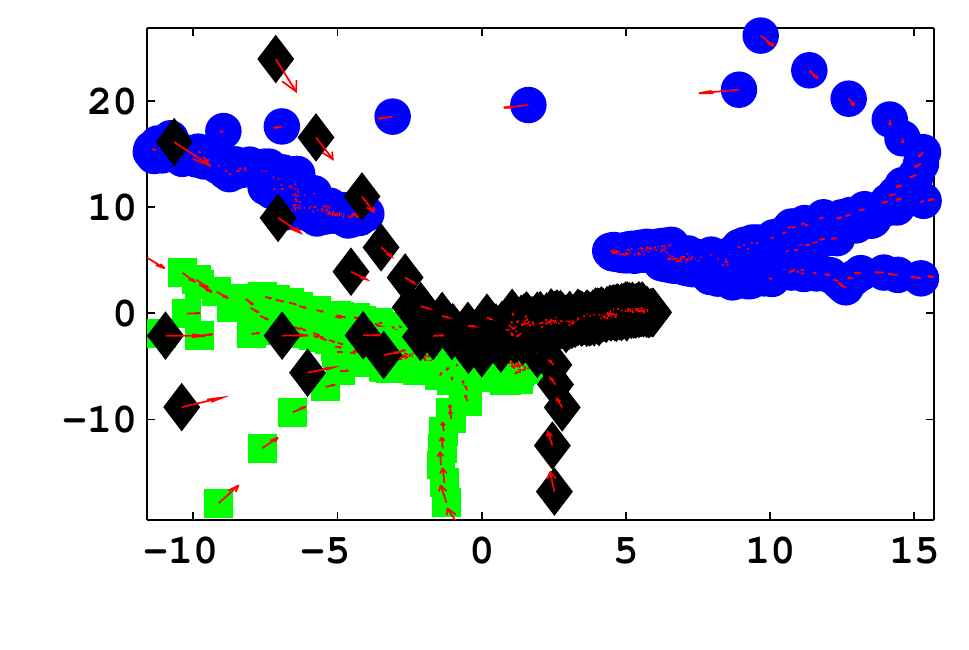}
\vspace{-0.2cm}
 \label{fig:sne-zoom}
 }
\subfigure[{\tt tSNE}.]{
\includegraphics[width=3in]{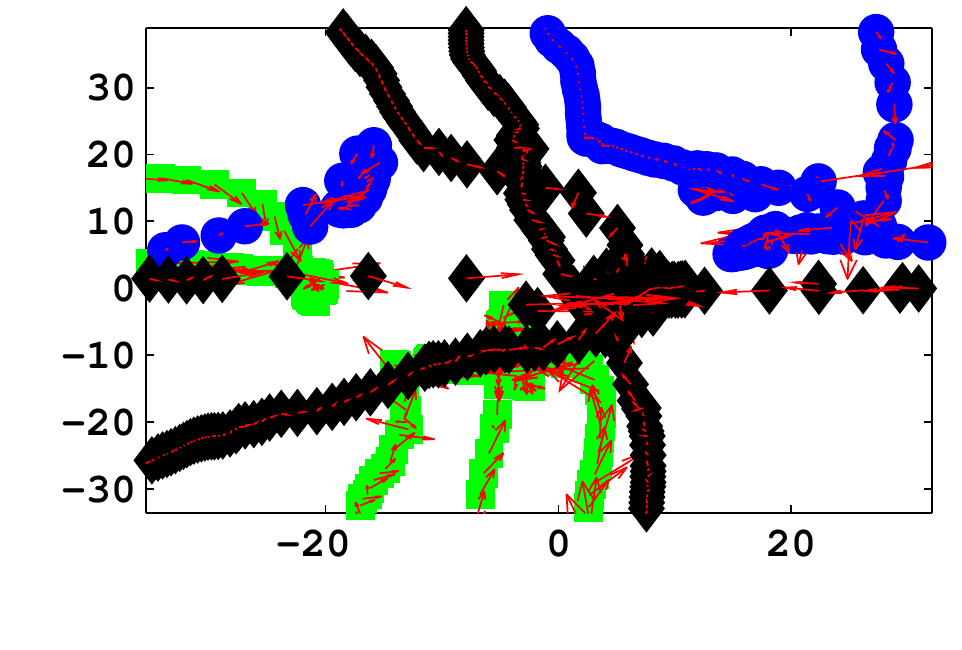}
\vspace{-0.2cm}
 \label{fig:tsne-zoom}
 }
 \vspace{-0.2cm}
\caption{Gradient based optimization illustrations over $100$ iterations. {\tt MAFE-BR}(b), {\tt SNE}(c) , {\tt tSNE}(d), are all initialized from the same seed of $15$ points for Botswana maps of $3$ different classes ({\em Woodlands, Firescar } and {\em Island Interior}). {\tt MAFE-BR} displays smooth gradient trajectories. Similar points cluster and traverse in the direction of the gradient field as indicated by the arrows. {\tt SNE} and {\tt tSNE} trajectories are subject to oscillations, including collisions of maps leading to instabilities as well as the severe local maxima traps that slow down the optimization algorithm. Arrows are shown pointing in the negative direction of the gradient.}
\label{fig:gradient-bots-trajectories}
\end{figure}

\begin{figure}
\centering
\subfigure[Initial Maps.]{
\includegraphics[width=3in]{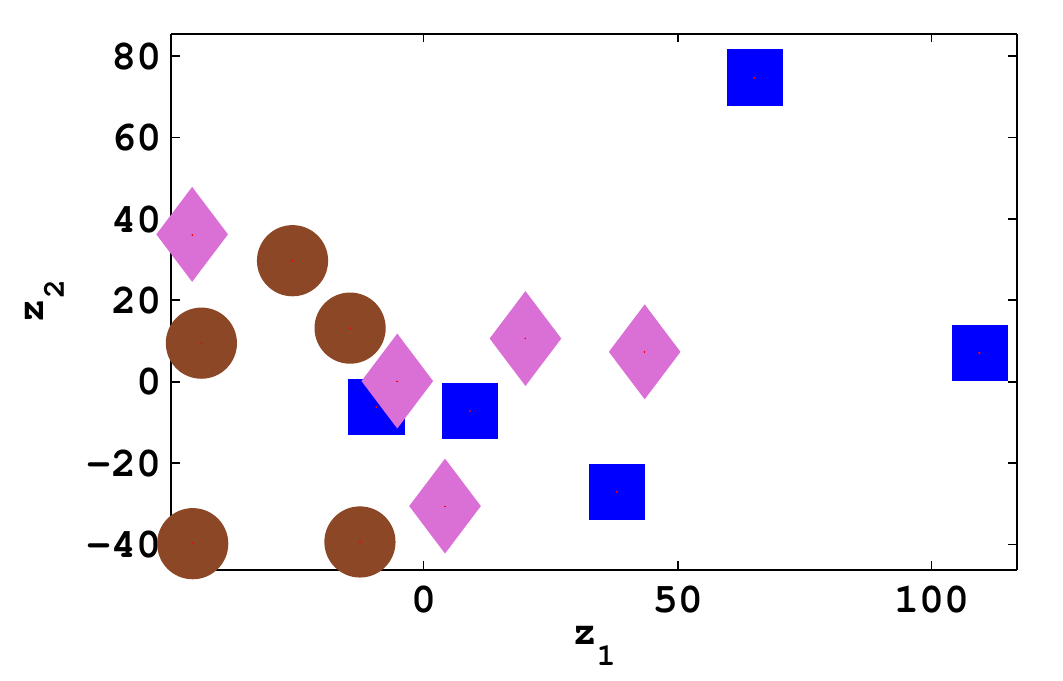}
\vspace{-0.2cm}
  \label{fig:initial-ksc}
}\\
 \subfigure[MAFE-BR.]{
\hspace{-0.2cm}\includegraphics[width=3in]{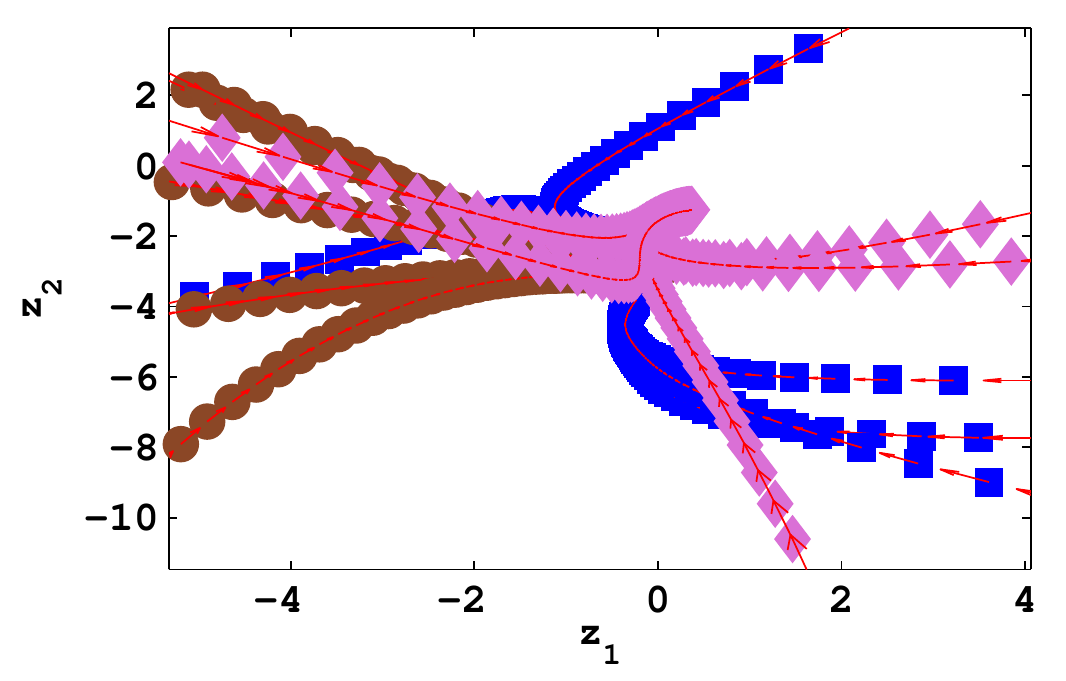}
\vspace{-0.2cm}
\label{fig:mafe-br-trajectory}
}
\subfigure[MAFE-UR.]{
\hspace{-0.4cm}\includegraphics[width=3in]{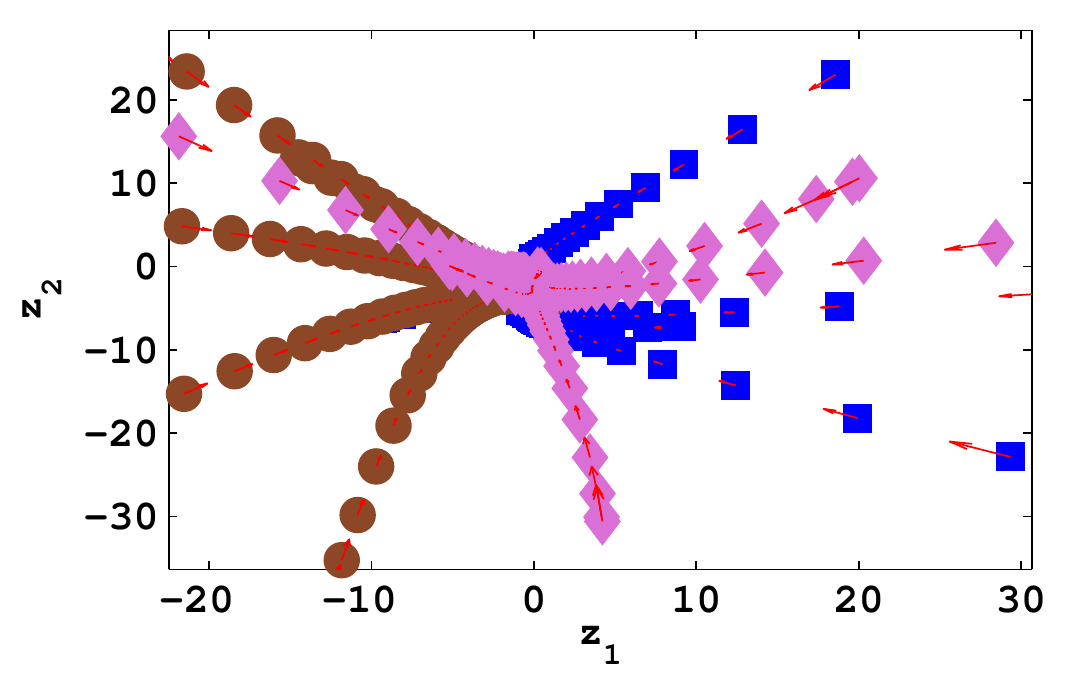}
\vspace{-0.2cm}
 \label{fig:mafe-ur-trajectory}
}\\
\subfigure[SNE.]{
\hspace{-0.2cm}\includegraphics[width=3in]{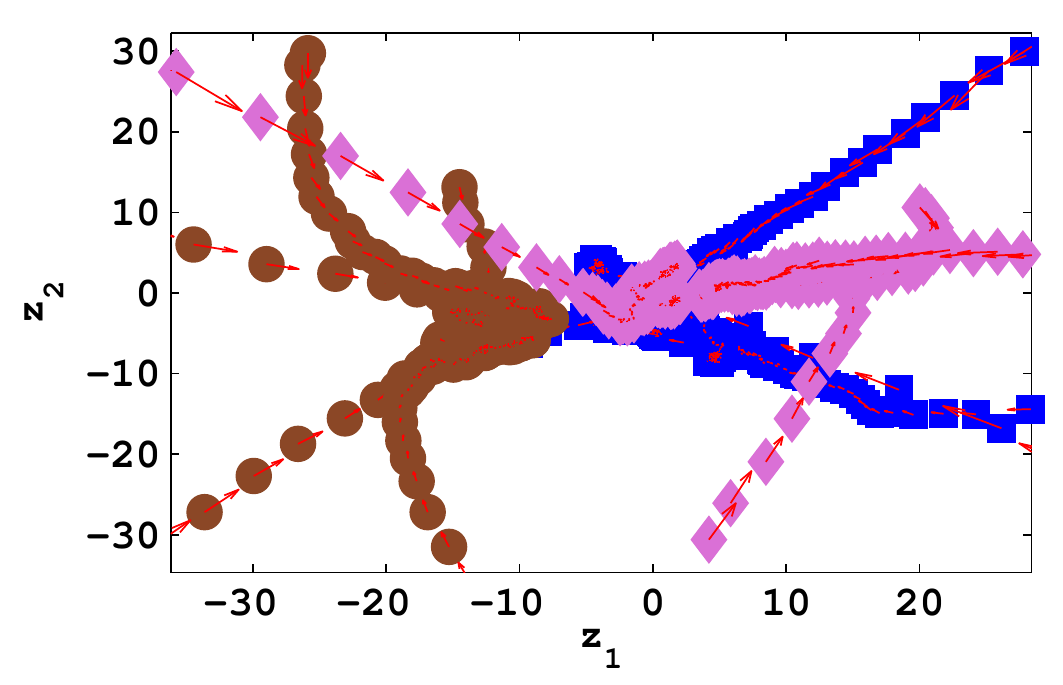}
\vspace{-0.2cm}
\label{fig:sne-trajectory}
}
\subfigure[tSNE.]{
\hspace{-0.4cm}\includegraphics[width=3in]{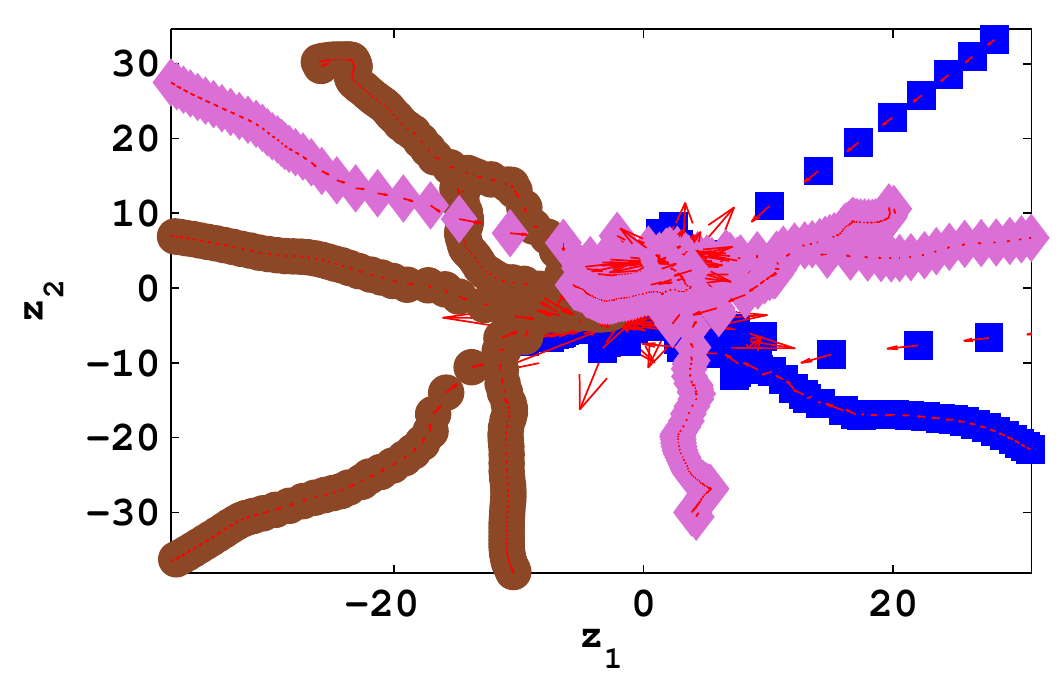}
\vspace{-0.2cm}
 \label{fig:tsne-trajectory}
 }
\caption{Gradient based optimization illustrations over $100$ iterations. All techniques were initialized from the same points to embed fifteen KSC pixel maps consisting of three different classes {\em i.e.} {\em Water, Graminoid Marsh } and {\em Cabbage Palm}. Both MAFE-BR and MAFE-UR demonstrate smooth gradient trajectories with points belonging to the same class clustering together and traversing in the negative direction of the gradient field ( indicated by the arrows). SNE and tSNE trajectories are subject to oscillations and instabilities leading to poor convergence of the algorithms.}
\label{fig:gradient-trajectories}
\end{figure}

\subsection{Visualization of Embeddings}\label{sec:visualization}
Nonlinear dimensionality reduction methods do tend to demonstrate better performance when embedding data sets with smaller number of classes than for problems with many classes \cite{Crawford2011}. This is caused by the complexity of data manifolds represented in problems with many disparate classes.  For example, in hyperspectral imagery; similar classes of data may result in multiple manifolds due to their spatial locations, as such that presents a challenge to many existing dimensionality reduction techniques with a tendency of mapping all similar or related data onto a single cluster. Such characteristics increases the chances of collapsing very different classes to the same embedding representation. Using MAFE-BR and MAFE-UR models,  this study makes an effort to address this challenge and provides high quality lower dimensional visualization results for complex data sets.

Figure \ref{fig:bots-groundtruth} shows the Botswana RGB image and its ground references data. The embedding results obtained for MAFE-BR, SNE, tSNE, MDS, Isomap, and LE are shown in Figure \ref{fig:bots-embed}. MAFE-BR results demonstrate different and superior embedding map compared to other methods. For example, SNE computes coordinates that seem to separate different classes well however there is a significant overlap on the {\em Riparian} and {\em Woodlands} classes. The clusters seem to be more spread implying large variances in the embedding space. MAFE-BR combined with the bilateral kernel function achieves tight spatial disjoint clusters, demonstrating no overlaps on the {\em Riparian} and {\em Woodlands} classes, the most known to be difficult classes to separate for this data set\cite{Crawford2011}. tSNE although with capability to mitigate overcrowding of points and good clustering effect, leads to significant overlap between {\em Riparian} and {\em Woodlands} classes. MDS and Isomap embeddings are similar due to the dependence on the classical MDS solution, both results show a significant overlapping of coordinates. LE produces a solution with very limited interpretation on the separation of classes.

Figure \ref{fig:ksc-groundtruth} shows both the RGB image and ground references of the Kennedy Space Center (KSC) data. Figure \ref{fig:ksc-embed} shows the embedding results obtained with MAFE-BR, MAFE-UR, SNE, tSNE, Isomap, and LE. The $2$-dimensional embedding results demonstrate that MAFE-BR and MAFE-UR construct very tight clusters. In addition, due to the spatially-sensitive kernel the embedding result maintains the disjoint nature observed from the land cover categories of the ground truth. Furthermore, MAFE models introduce the capability to {\em tile} different land cover classes when their boundaries appear to be overlapping. The models achieve better neighborhood properties as illustrated in Figure \ref{fig:frobeniusdistance}. SNE and tSNE  results demonstrate the existence of clusters for {\em Water, Salt Marsh,} and {\em Spartina Marsh} classes. However, there is very little insight on the separability of the other ten classes. Isomap's solution demonstrates the presence of overcrowding and class overlap. LE shades very little meaningful interpretation on the structure of all classes.

Figure \ref{fig:indian-groundtruth} and Figure \ref{fig:salinasA-groundtruth} shows ground truth image data for both the Indian Pine and the Saline-A scenes. Their corresponding $2$-dimensional MAFE based embeddings are shown in Figure \ref{fig:indianpine-embed} and Figure \ref{fig:salinaA-embed}. The results further demonstrate quality visualizations that are generated due to use of a bilateral kernel function in combination with a MAFE based embedding framework. Use of spatial information allows for the disjoint nature of classes to be encoded in the neighborhood graph, \ie it introduces the needed level of sparsity for establishing disjoint neighborhoods. The interactive fields that generates the attraction and repulsion forces in  MAFE techniques enables preserving such topological relations.

\begin{figure}
\centering
 \subfigure[]{
\hspace{-0.45cm}\includegraphics[width=2in]{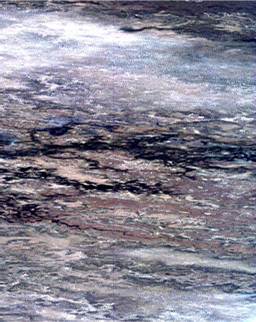}
   \label{fig:bots-rgb}
 }
 \subfigure[]{
\includegraphics[width=3.8in]{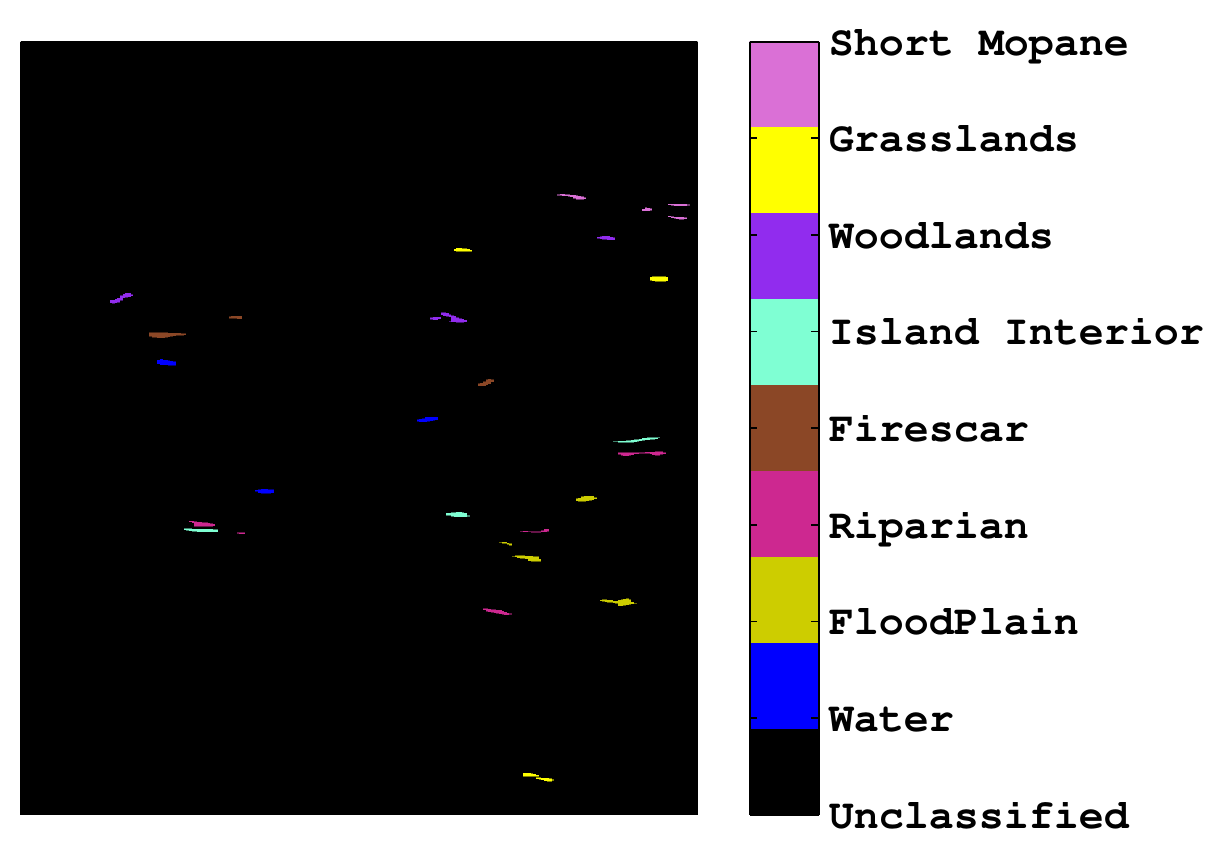}
   \label{fig:bots-labels}
 }
 \caption{ (a) RGB image of Botswana data. (b) Botswana labeled data.}
\label{fig:bots-groundtruth}
\end{figure}

\begin{figure}
\centering
\includegraphics[width=2.3in]{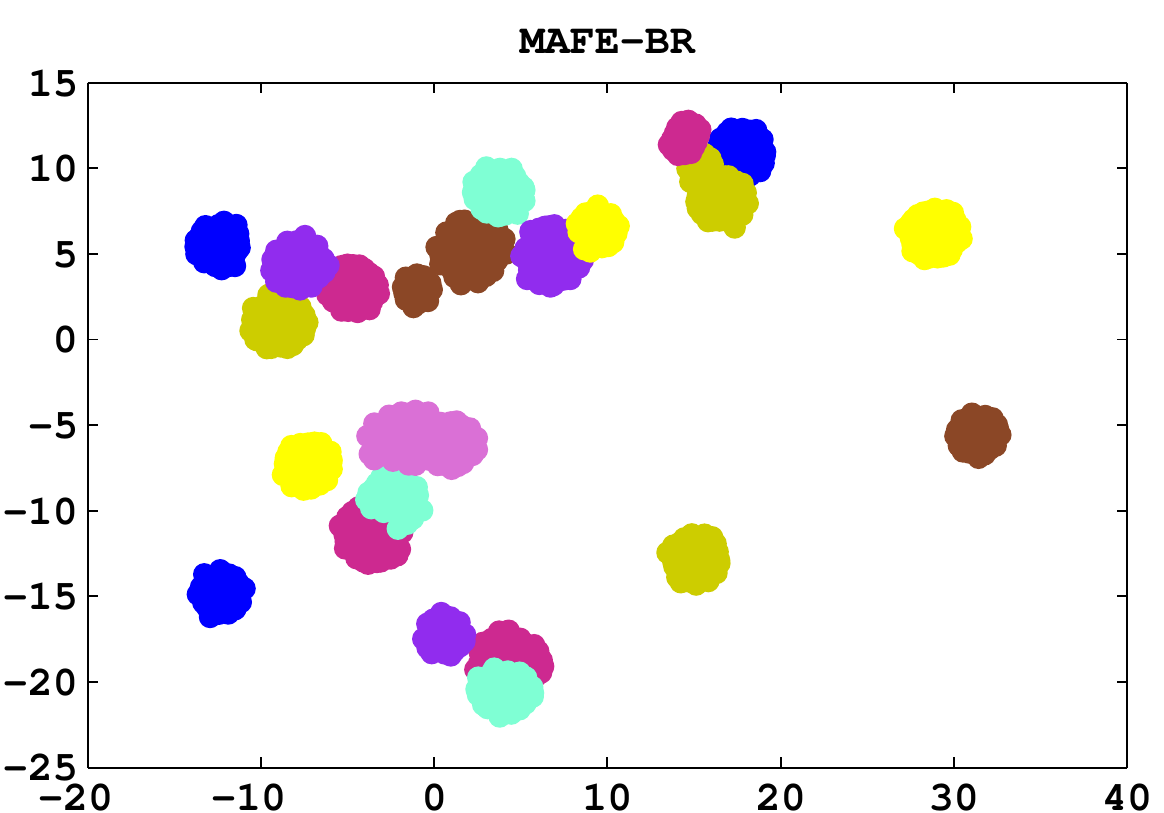}\hspace{0.3cm}\includegraphics[width=2.33in]{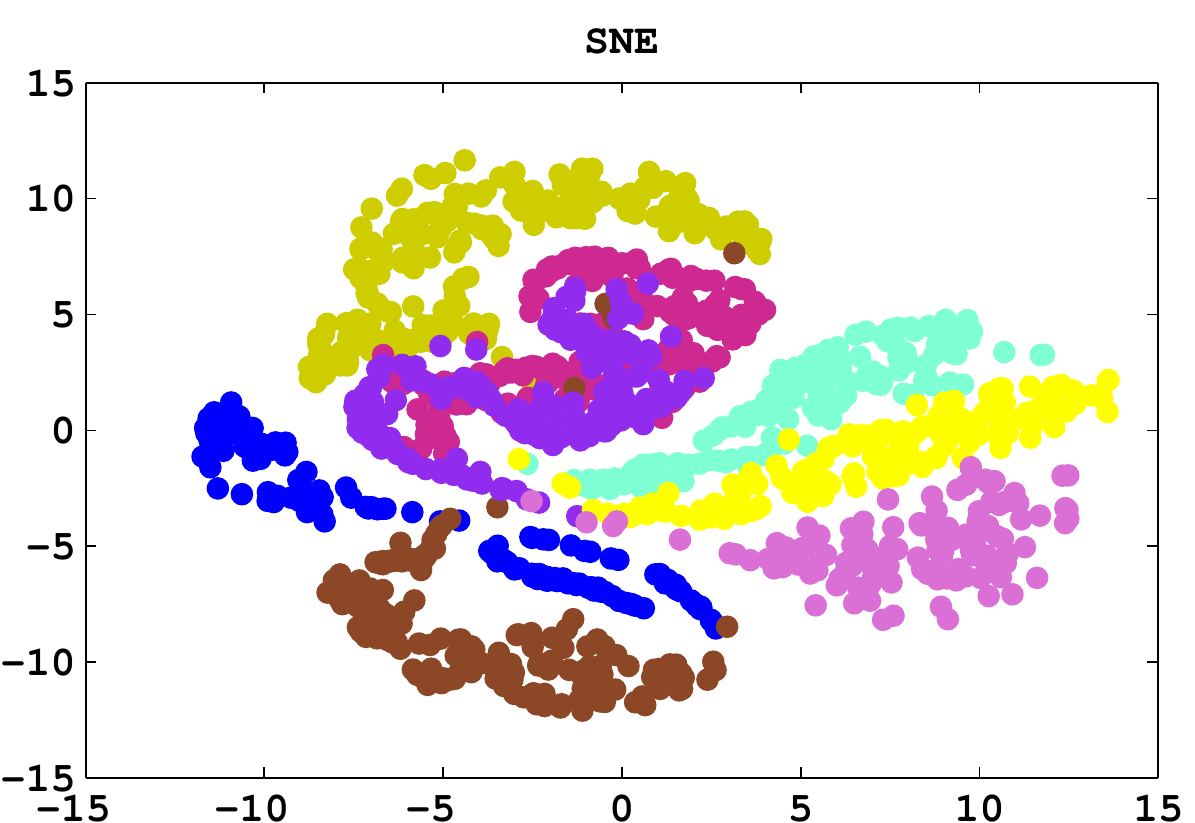}\\
\includegraphics[width=2.3in]{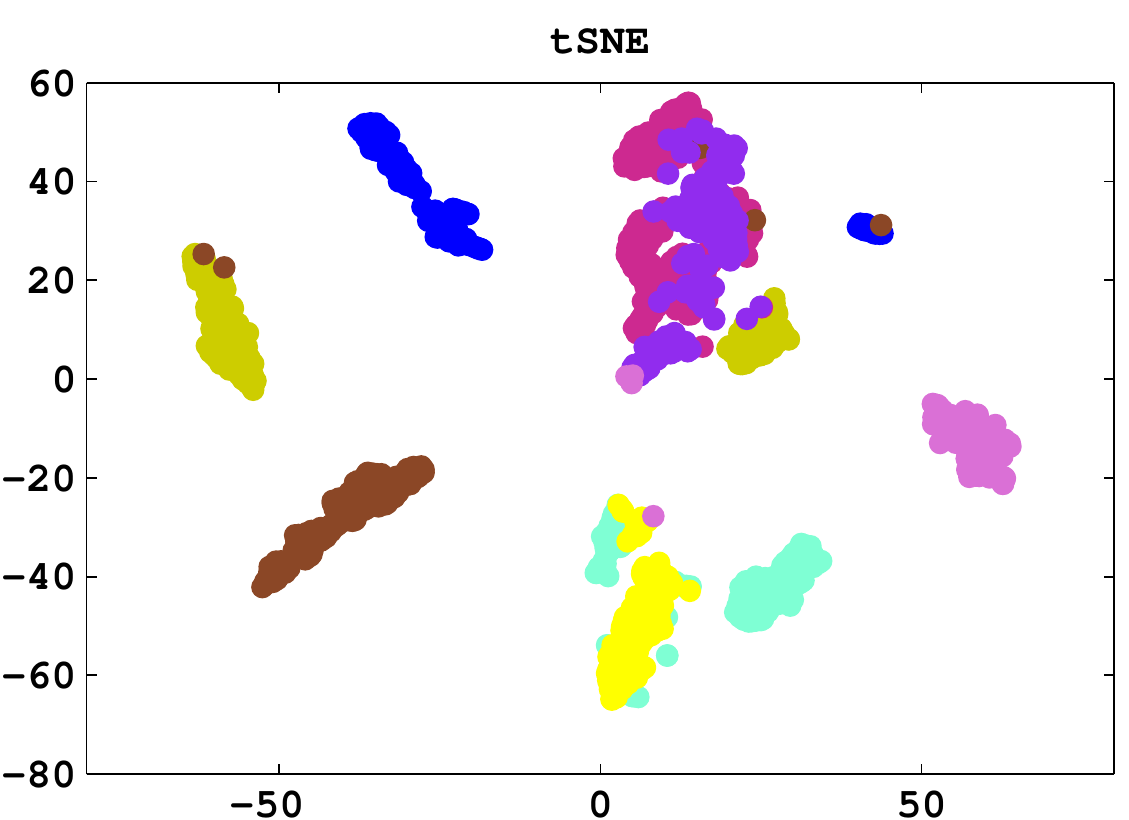}\hspace{0.3cm}\includegraphics[width=2.4in]{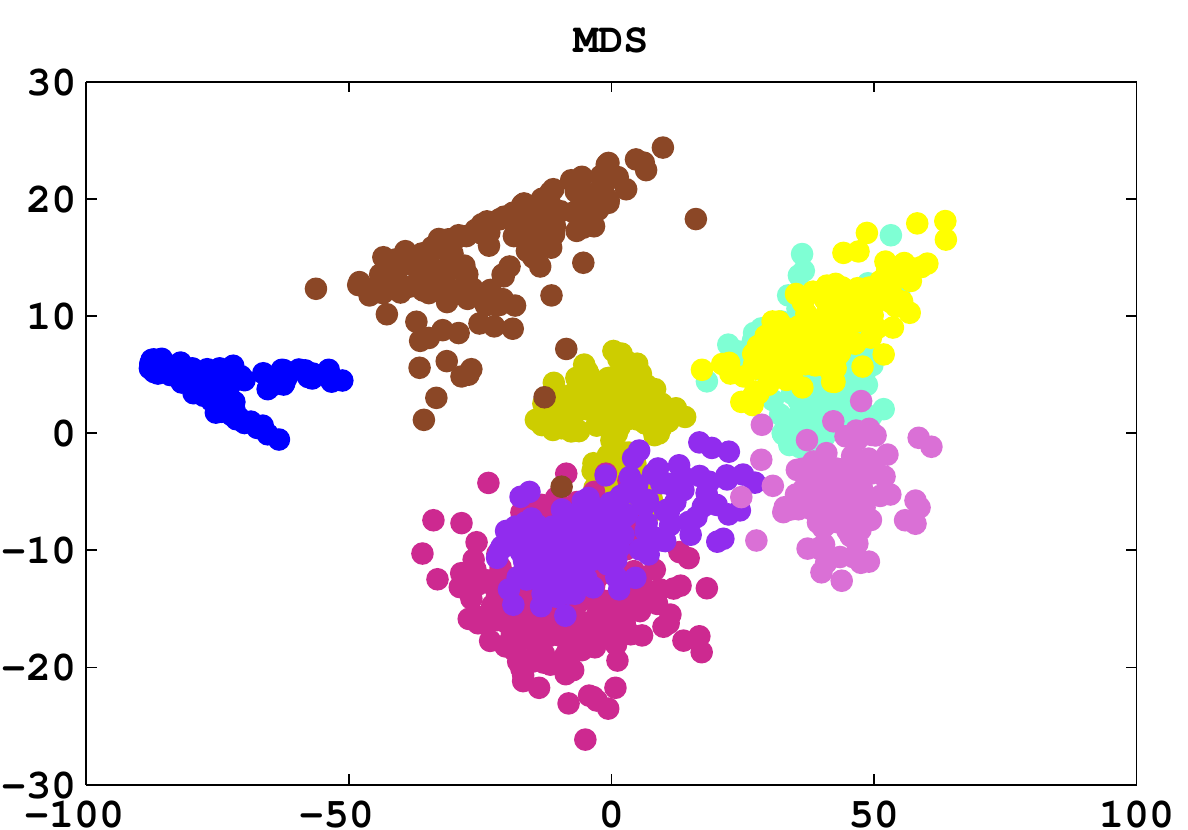}\\
\includegraphics[width=2.3in]{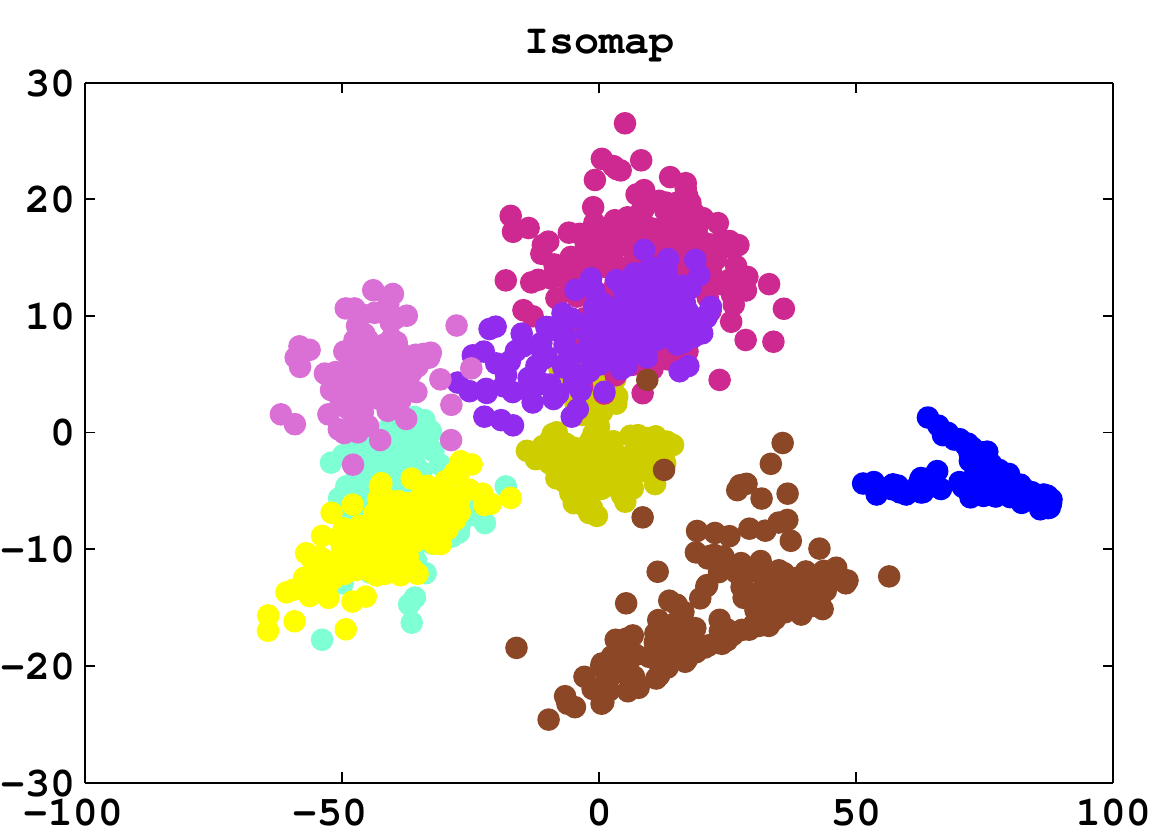}\hspace{0.3cm}\includegraphics[width=2.3in]{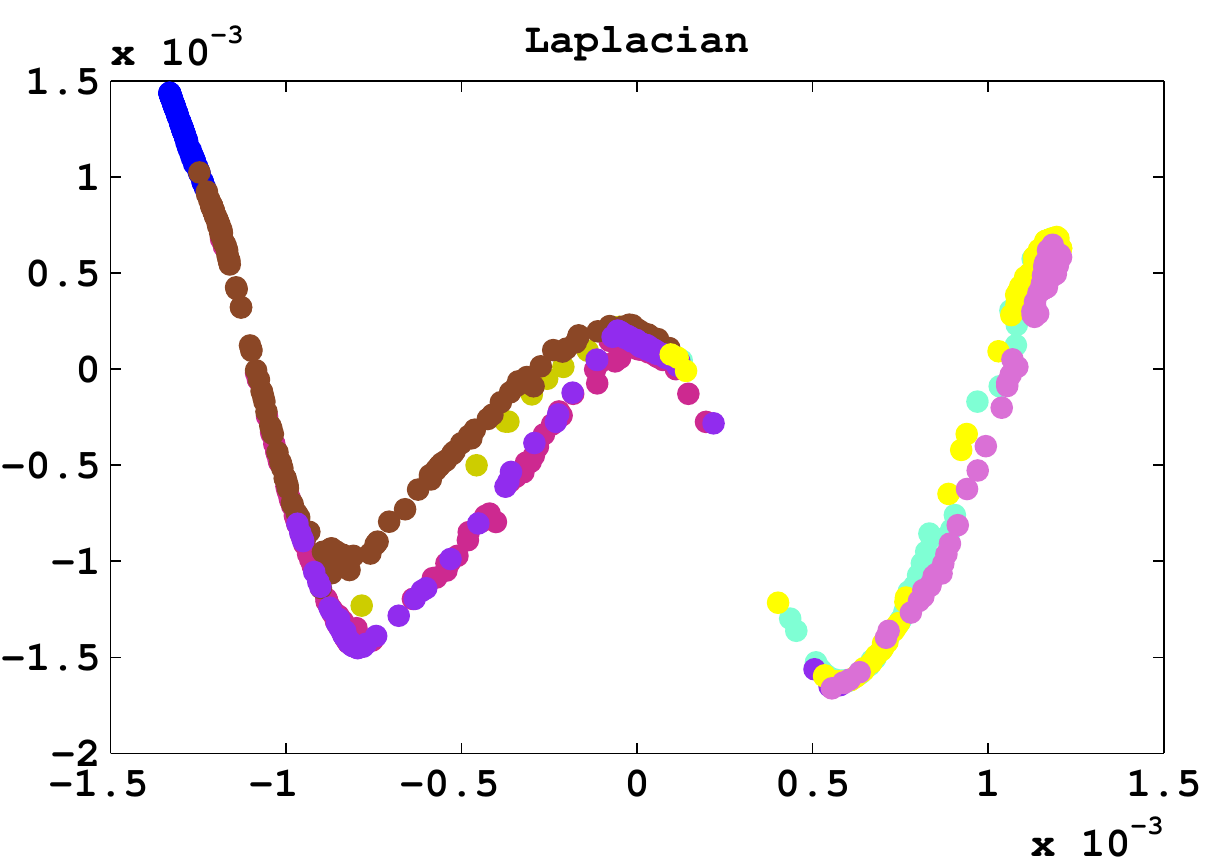}
\caption{Embedding of all labeled samples of the Botswana data.}
\label{fig:bots-embed}
\end{figure}
\clearpage

\begin{figure}
\centering
 \subfigure[]{
\hspace{-0.3cm}\includegraphics[width=2.34in]{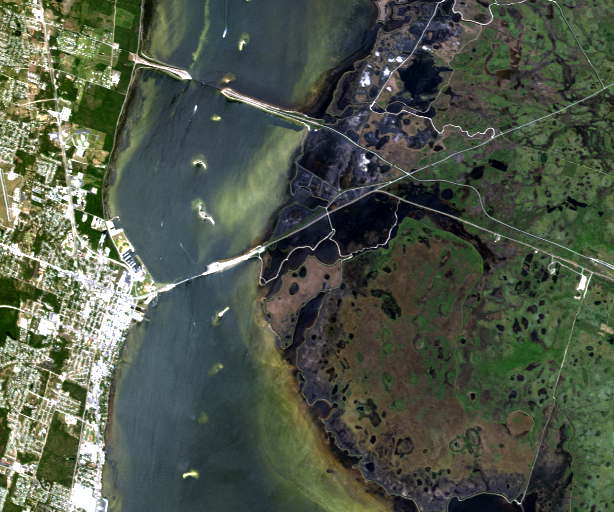}
   \label{fig:ksc}
 } \subfigure[]{
\includegraphics[width=3.51in]{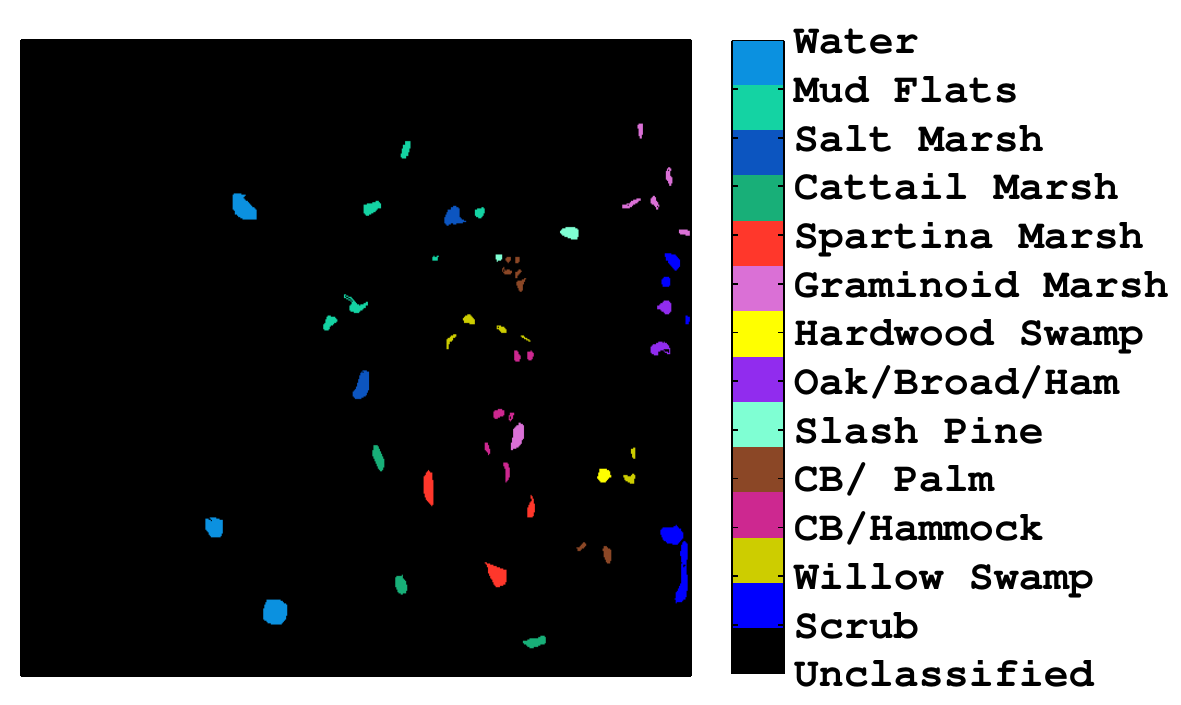}
   \label{fig:ksc-labels}
 }
 \caption{ (a) RGB image of Kennedy Space Center data. (b) Kennedy Space Center labeled data.}
\label{fig:ksc-groundtruth}
\end{figure}

\begin{figure}
\centering
\hspace{0.1cm}\includegraphics[width=2.35in]{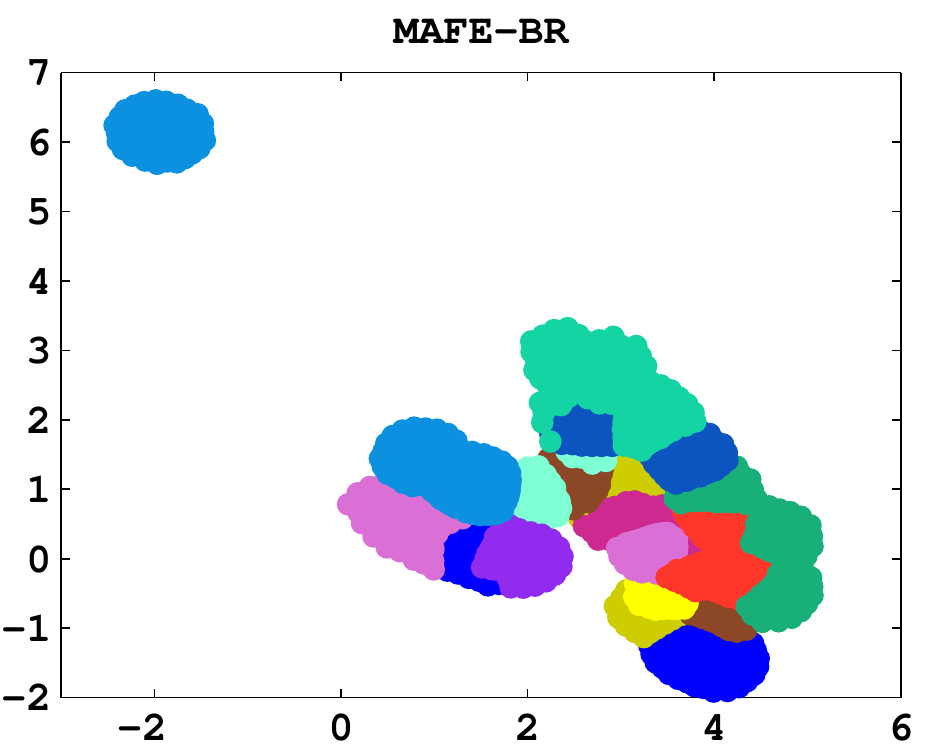}\hspace{1cm}\includegraphics[width=2.42in]{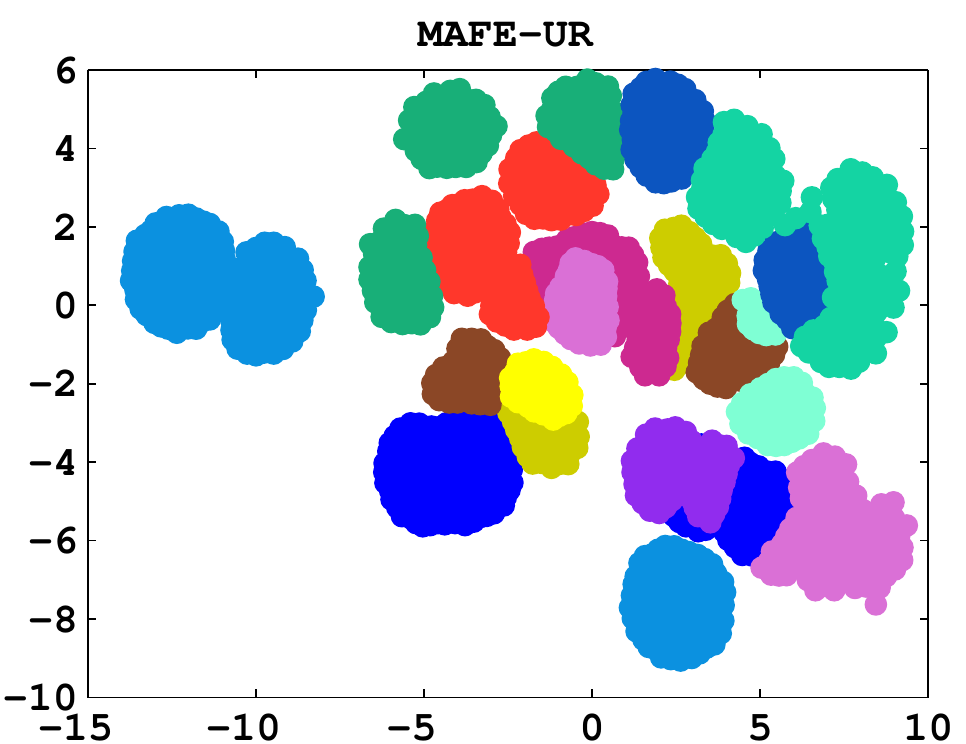}\\
\includegraphics[width=2.45in]{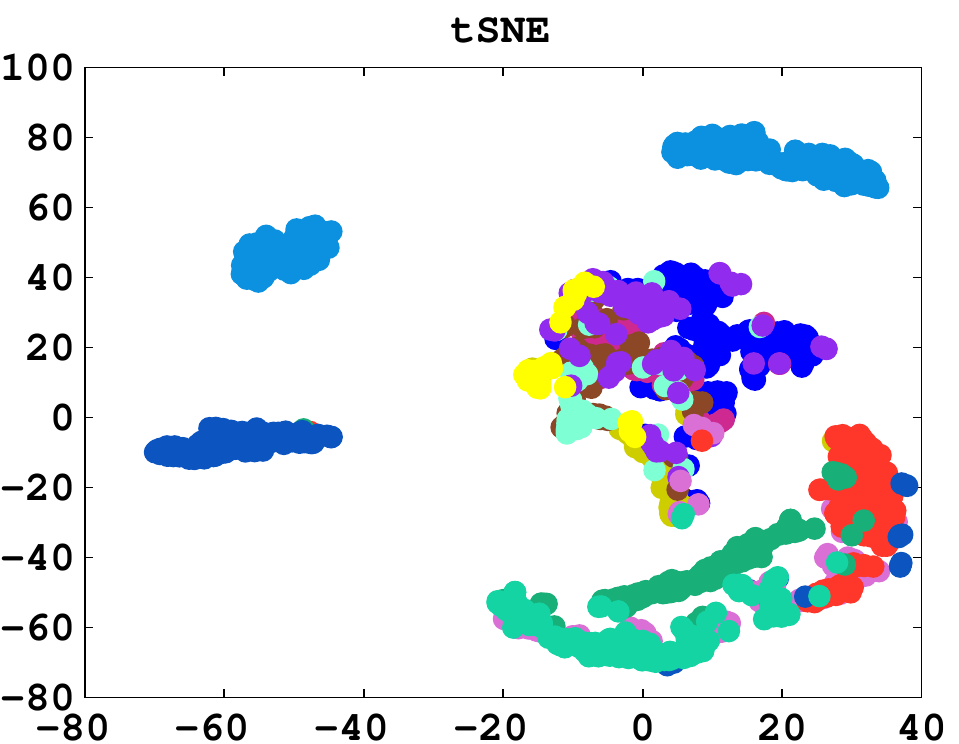}\hspace{0.9cm}\includegraphics[width=2.45in]{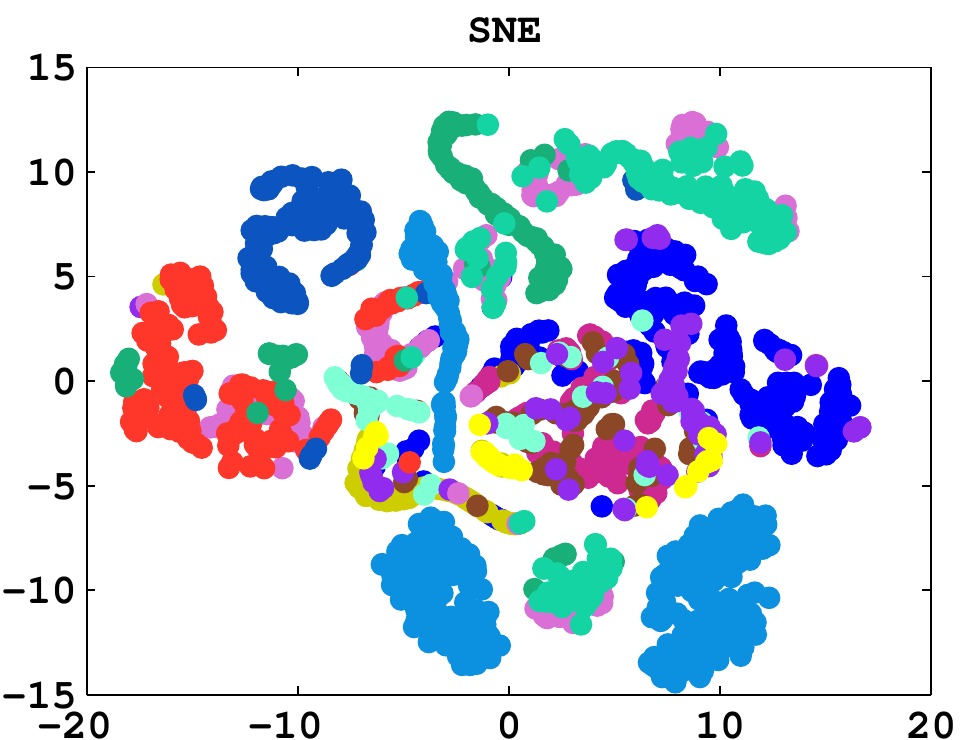}\\
\hspace{-0.1cm}\includegraphics[width=2.56in]{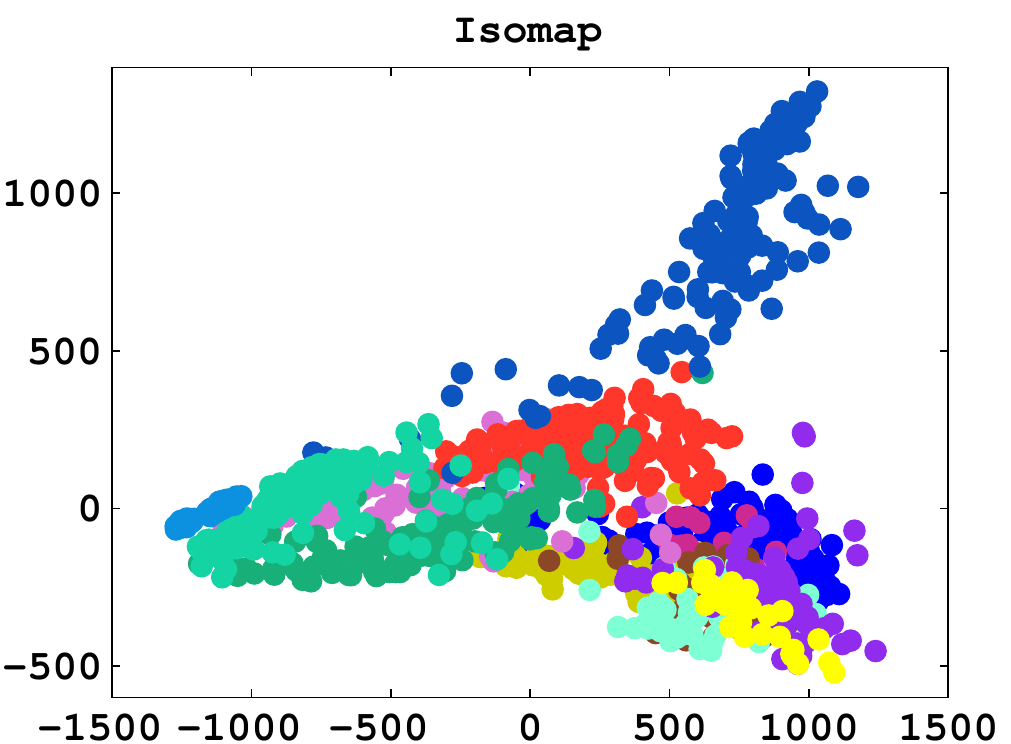}\hspace{0.6cm}\includegraphics[width=2.55in]{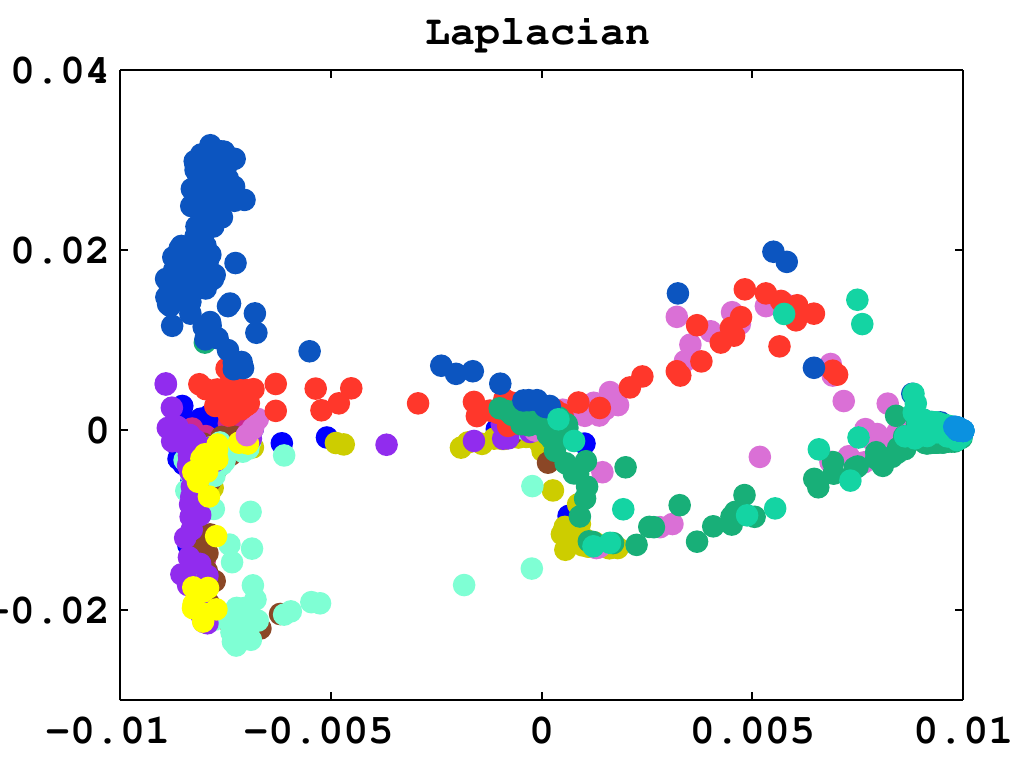}
\caption{Embedding of Kennedy Space Center data (showing $2600$ pixels coordinates).}
\label{fig:ksc-embed}
\end{figure}

\begin{figure}
\centering
\includegraphics[width=3.51in]{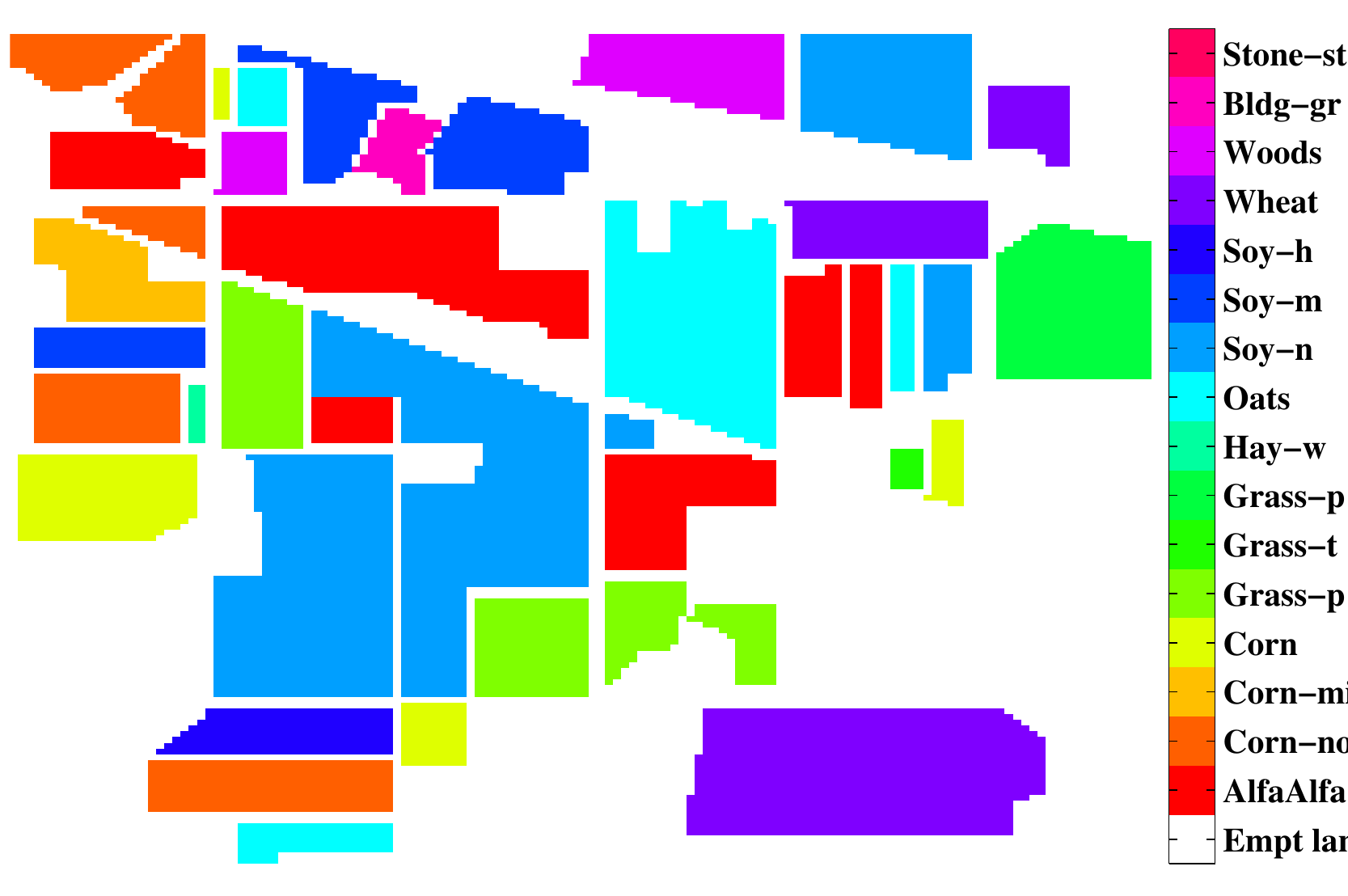}
 \caption{ Indian Pines ground truth image.}
\label{fig:indian-groundtruth}
\end{figure}

\begin{figure}
\centering
\subfigure[MAFE-BR]{
\includegraphics[width=4in]{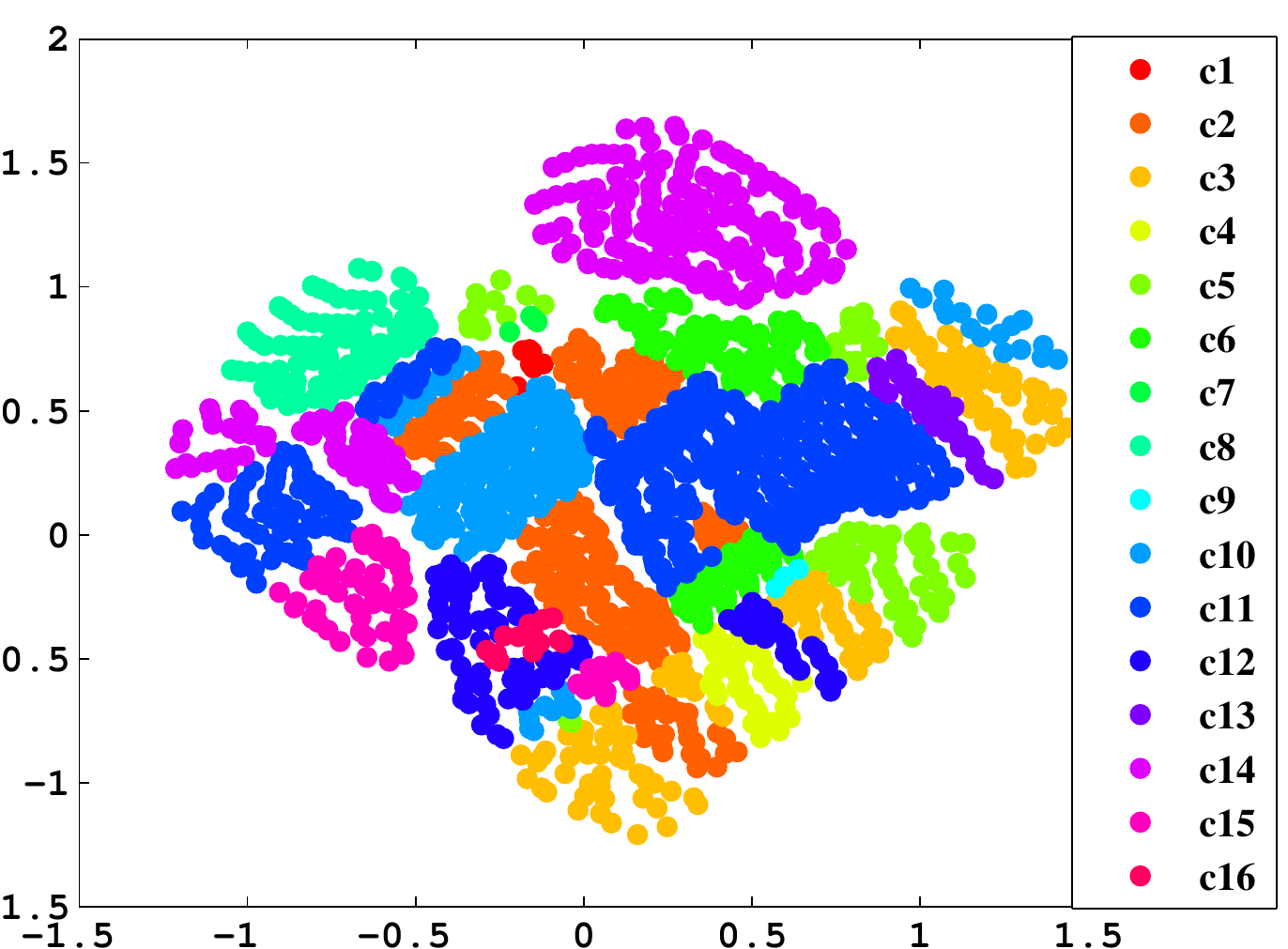}
   \label{fig:mafe-br-indian}
 }
 \subfigure[MAFE-UR]{
\includegraphics[width=4in]{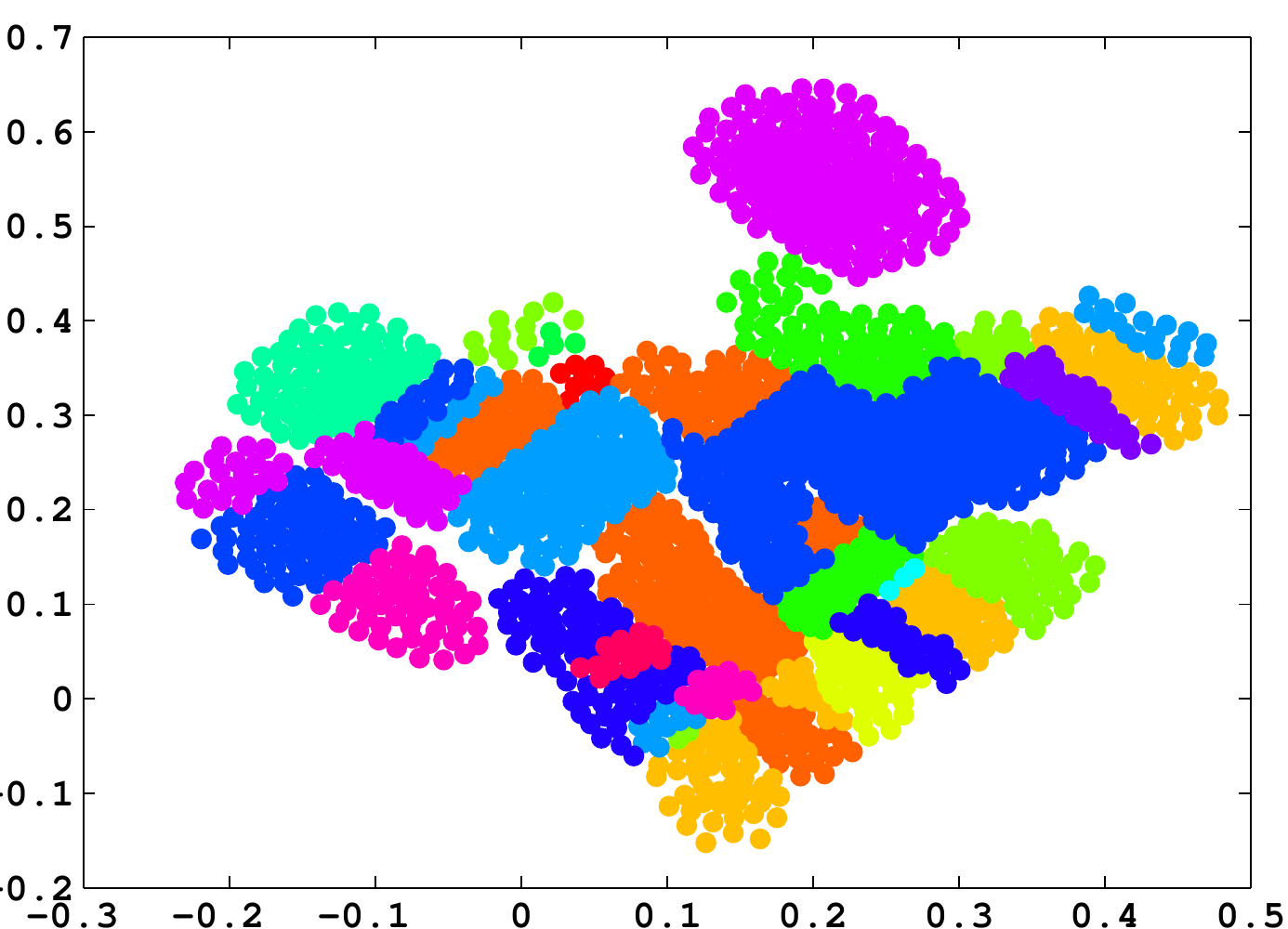}
 \label{fig:mafe-ur-indian}
 }
\caption{Embedding of the Indian Pine Scene(showing $2500$ samples to avoid clutter).}
\label{fig:indianpine-embed}
\end{figure}

\begin{figure}
\centering
\includegraphics[width=3.51in]{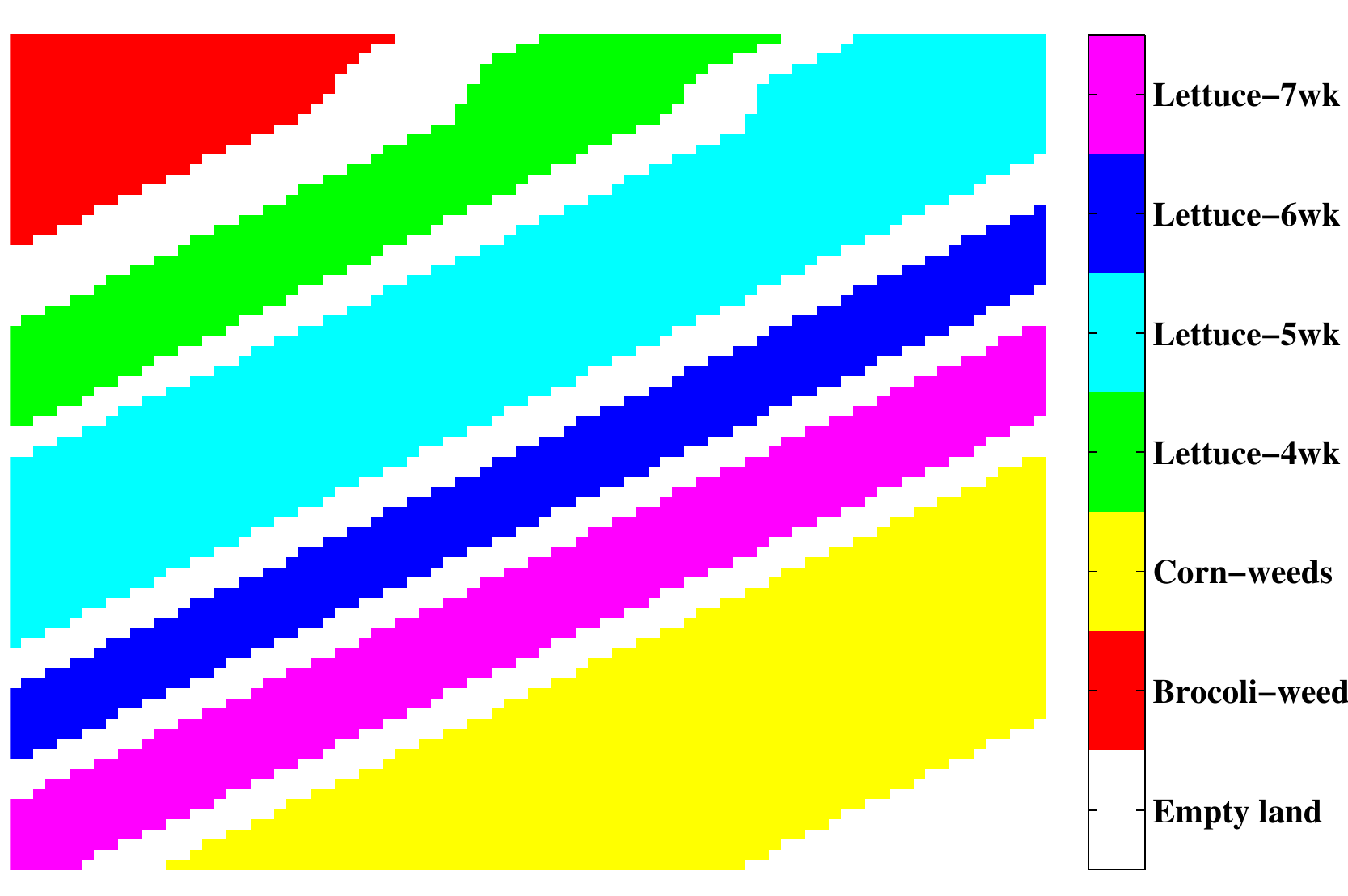}
 \caption{Salinas-A ground truth image.}
\label{fig:salinasA-groundtruth}
\end{figure}

\begin{figure}
\centering
\subfigure[MAFE-UR]{
\includegraphics[width=4in]{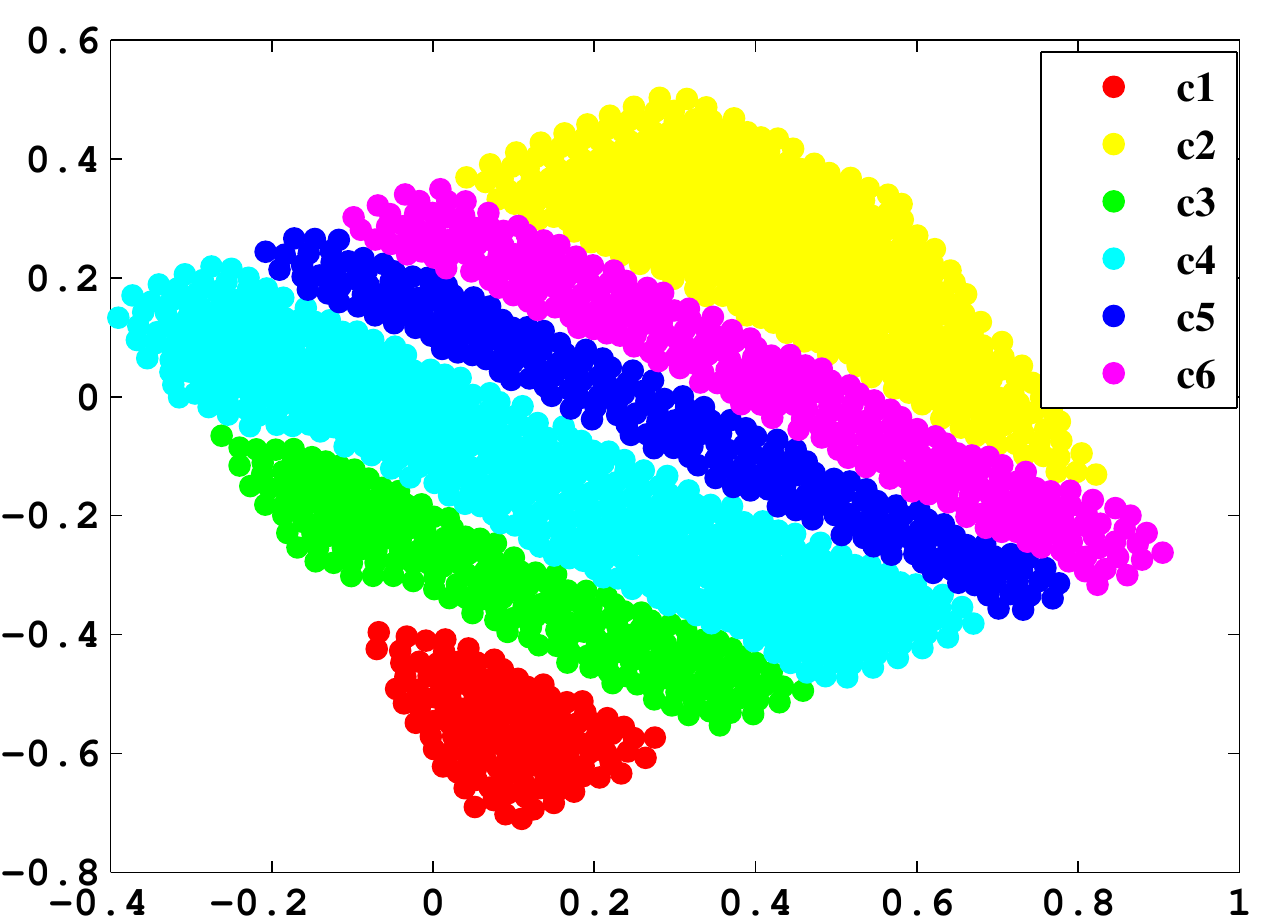}
   \label{fig:mafe-ur-salinaA}
 }
 \subfigure[MAFE-BR]{
\includegraphics[width=4in]{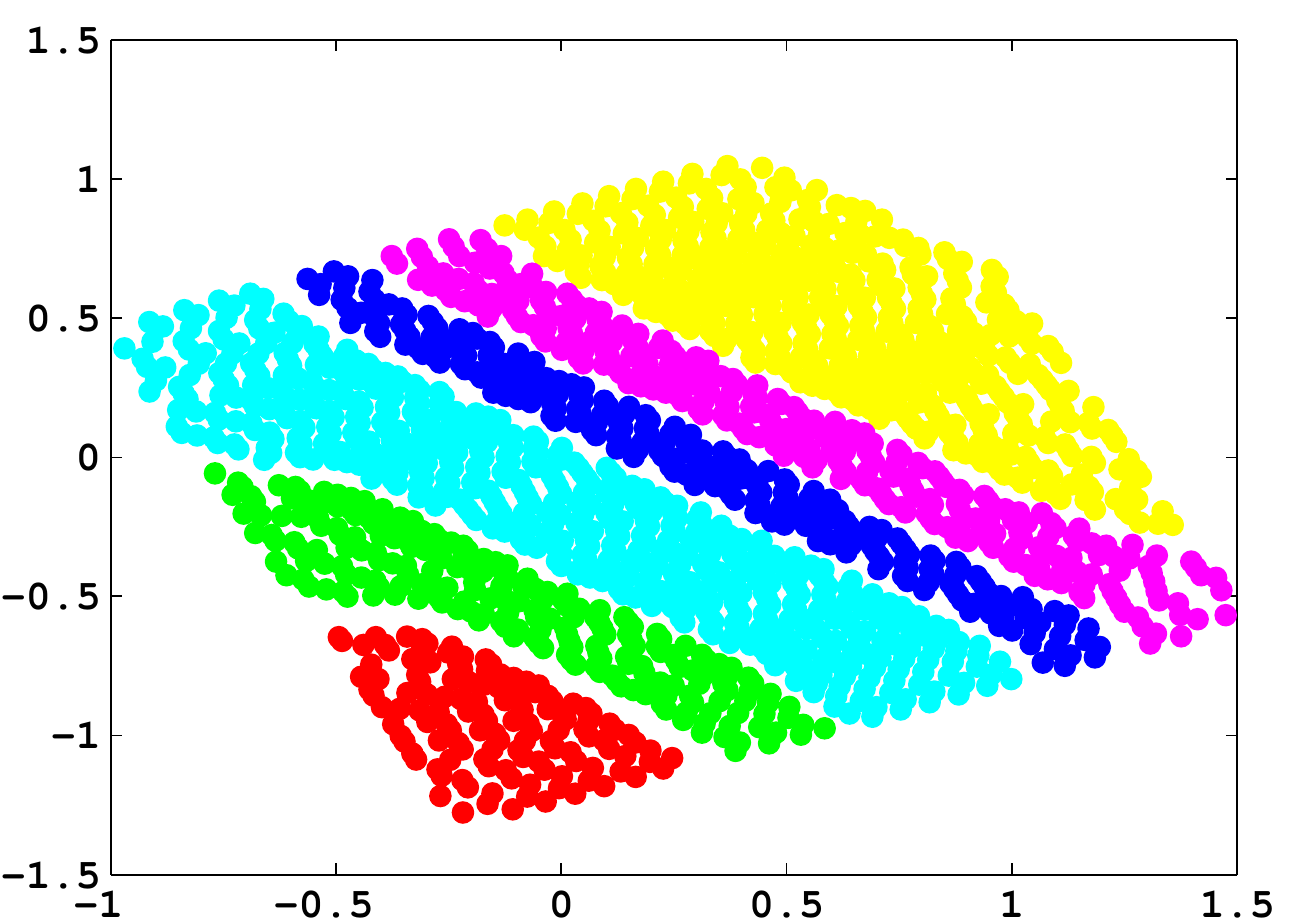}
 \label{fig:mafe-br-salinaA}
 }
\caption{Embedding of the Saline-A Image Scene (showing $2000$ samples to avoid clutter).}
\label{fig:salinaA-embed}
\end{figure}

\subsection{Frobenius Distance}
As in many dimensionality reduction methods, the goal is to maintain the local structure of the data. Here, the study seeks to examine the local topology of manifolds obtained by MAFE models in comparison to other iterative optimized embedding algorithms {\em i.e.} SNE, and tSNE. The most straightforward approach to assess this is by the norm of the residual distance matrix:
\begin{eqnarray}
\|\bmath{D}-\hat{\bmath{D}}\|_{\mathbb{F}} = \sqrt{\sum_{ij}(\hat{D}_{ij} - D_{ij})^{2}}
\label{frobenius}
\end{eqnarray}
where $\|\cdot\|_\mathbb{F}$ denotes the Frobenius norm. It is the square root sum of the squares of elements of $\bmath{D}$ and $\hat{\bmath{D}}$, the high and low dimensional Euclidean distance matrices, respectively. The results shown in Figure \ref{fig:frobeniusdistance} indicate that MAFE models achieve better local distance preserving representations in the sense of \eqref{frobenius}. The results also highlight the efficiency with which the optimization proposed for MAFE based models enables the discovery of neighboring related coordinates. As the number of iterations increase, SNE continues to show reduced Frobenius error due to its small magnitude repulsion which in turn has a negative effect as it leads to overcrowding of maps.

\begin{figure}
\centering
\includegraphics[width=5in]{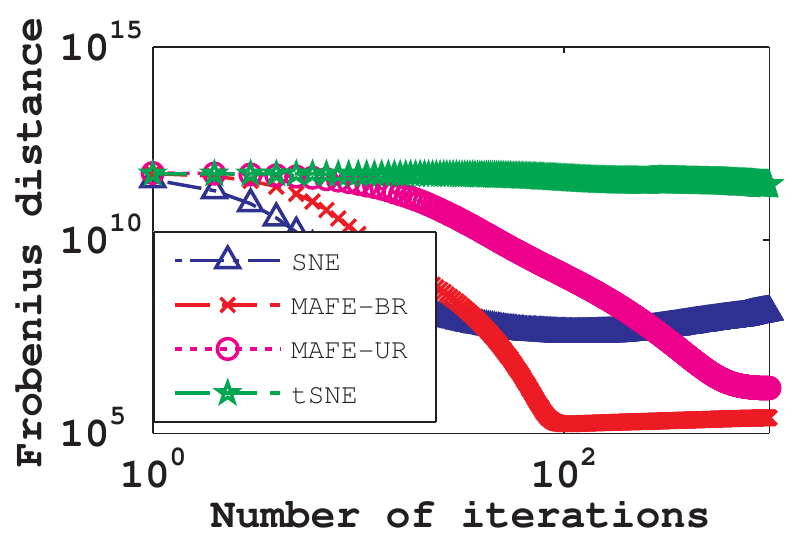}
\caption{Illustration shows how the quality of local distance preservation varies with the number of iterations. The quality is defined by the Frobenius norm of the difference distance matrix , for  Kennedy Space Center. MAFE related models and the SNE model show generally the same performance at preserving the local distances. The Frobenius norm results obtained from tSNE shows poor quality indicating that the method does not preserve local distance for all observed. This is due to its nature of splitting clusters of data that belong to the same class.}
\label{fig:frobeniusdistance}
\end{figure}

In the case of tSNE, its clear that it does not preserve local distances due to its strong repulsion forces. tSNE has a tendency of separating maps of similar instances into multiple small separated clusters, a property that might be useful for visualization and not so suitable for tasks that require keeping similar maps in close proximity, \eg COIL20 data visualization and hyperspectral data embedding.

\begin{figure}
\centering
\includegraphics[width=5in]{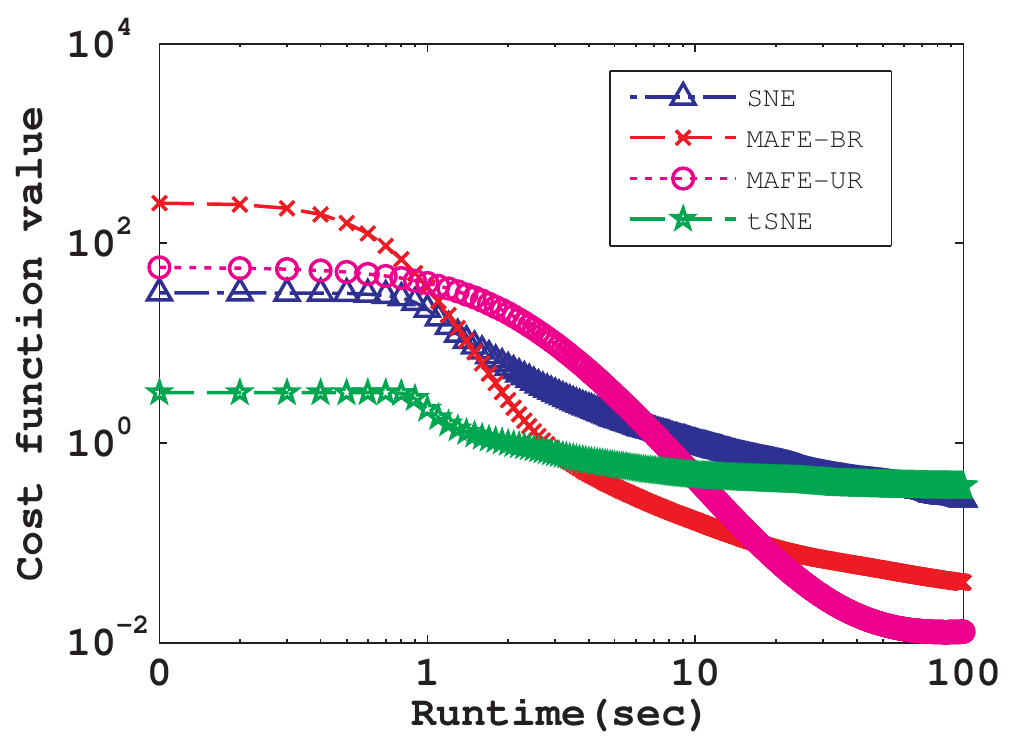}
 \caption{ MAFE, SNE and tSNE objective function optimization run times in seconds for  KSC data set.}
\label{fig:costfunctiondescrease1}
\end{figure}

\subsection{Semisupervised Classification}
In various image analysis applications, including remote sensing, the main task entails establishing an automated image understanding process through object classification or land cover classification. Class label information is used to determine the designation of test or query samples in the lower dimensional space. Our experiments employ a one nearest neighbor(1NN) classification performance accuracy, based on the spectral angle mapper\cite{Yuhas92,Kruse93}, to determine whether the defined object classes or earth land covers, do occupy separable volumes in the lower dimensional space and if they can be discriminated successfully. The experiments are carried out in various embedding spaces. Embedded pixel maps were randomly sampled to generate $70\%$ training samples and $30\%$ testing samples, with results averaged over $10$ runs. All approaches were compared on the same data samples to maintain consistency in error comparison. The results provide several interesting observations that are consistent with the visualization maps from Chapter \ref{sec:visualization}.

Tables \ref{tab:botsconmat} and \ref{tab:kscconmat} illustrates the 1NN classification performance accuracy  per class. The trends observed on  MAFE-BR and MAFE-UR embedding spaces indicate coordinate representations  that outperform other spaces by significant margins in enabling high classification accuracy.  Performance accuracies are reported by observing the mean values obtained from the Kappa statistic (KS) \cite{cohen60} that is computed as,
\begin{eqnarray}
KS = \frac{N\sum_{i=1}^{|C|}t_{cc}-\sum_{c=1}^{|C|}t_{c+}t_{+c}}{N^{2} -\sum_{c=1}^{|C|}t_{c+}t_{+c}}
\nonumber
\end{eqnarray}
where $N$ is the number of testing samples, $t_{cc}$ indicates the number of samples correctly classified in class $c$, $t_{c+}$ denotes the number of testing samples labeled as class $c$, and $t_{+c}$ denotes the number of samples predicted as belonging to class $c$. $|C|$ denotes the total number of classes in the data. The percentage of correctly predicted samples of the total testing samples provides the overall accuracy (OA) which is also reported in the results.

The classification results obtained on the Botswana data leads to interpretations that are consistent with the visualization analysis of Figure \ref{fig:bots-embed}. In general, all methods seem to provide embedding coordinates that leads to reasonable classification performance accuracy. However, the lowest accuracy results are achieved with the LE embedding representations. Furthermore, the lowest accuracy per class is observed between class 3 (c3) and class 6 (c6) corresponding to the {\em Riparian} and {\em Woodlands} classes, respectively. Lower classification results on c3 and c6 are expected because these two classes are the most difficult to separate, in consistency with the visual results of Figure \ref{fig:bots-embed} - similarly as demonstrated in \cite{Crawford2011,Di2011,Ma10}. LMNN using 1NN achieves the second best result on both data set perhaps owing to its ability to make use of class label information in learning the Mahalanobis distance metric. The objective function for LMNN does have a force field structure in which class label information is used to compute the optimal metric with a goal that k-nearest neighbors always belong to the same class (\ie pulled closer by an attraction term), while example samples from other classes are separated  by a large margin (\ie pushed far by a repulsion term). In contrast, both MAFE-BR and MAFE-UR have objective functions that are formulated as dependent on the distance between pairs of points, and no class label information is used during computation of the maps. When classifying samples from the KSC data, a similar trend is observed with both MAFE based techniques providing a coordinate representation from which a higher 1NN classification performance is achieved. The classification results achieved for class 3(c3), class 4 (c4), class 5(c5), and class 6(c6) indicate the lowest performance in all embedding spaces except for the solution achieved by the MAFE based techniques. Based on the visualization result of Figure \ref{fig:ksc-embed}, these classes correspond to the {\em Cabbage Palm Hammock, Cabbage Palm/Oak Hammock, Slash Pine,} and {\em Oak/Broadleaf Hammock}, respectively. These are all categories of very similar upland trees. Their spectral signatures are mixed and often exhibit only subtle differences. However, the neighborhood graph structure computed from the bilateral local kernel function which incorporates spatial details plays a significant boost to the separation of different categories of objects. As such, MAFE based methods take advantage of this in addition to the smooth attraction/repulsion interaction between pairwise maps and yields embeddings that are clearly separable. On the other hand, sSNE achieves very similar performance results with its input graph structure defined to take advantage of the spatial information in images even though it suffers from algorithmic instabilities and inefficiencies associated with the optimization of its highly nonlinear objective function.

Figure \ref{fig:hyperclassify} shows the 1NN misclassification error plots as a function of the embedding dimension, \ie $m= 1\sim20$. Such error plots are commonly used in dimensionality reduction algorithms that rely on the so called {\em manifold projection approach} for estimating the dimension of the embedding space. The manifold projection approach is used to estimate the intrinsic dimension of the embedding space by carefully predefining a higher dimensional space neighborhood graph that achieves a lower dimensional representation with better neighborhood preserving topology. Classification error plots provides one such criterion by which the intrinsic dimension of the data is chosen as the lowest dimension that allows capturing most of the variance (\ie regular information) in the data with higher dimensions only adding to the redundancies. The bilateral kernel function proposed for constructing the higher dimensional neighborhood graph allows a significant automated sparsity property to be induced by estimating the covariance structure of the data. However, the study does acknowledge that choosing the intrinsic dimension is an open difficult research question that warranties a deep separate study, here we simply include an experimental approach necessary for evaluating the proposed models. From Figures \ref{fig:bots-classify} and \ref{fig:ksc-classify}, the optimal dimension for mapping both the $145$ dimension Botswana spectral channels and the $176$ Kennedy Space Center spectral channels is $m=6$. The approximate optimal embedding dimension is chosen based on the smallest lowest {\em elbow drop} point displayed on the 1NN classification error plots in the MAFE embedding spaces.

\begin{figure}
\centering
 \subfigure[]{
  \includegraphics[width=3.5in]{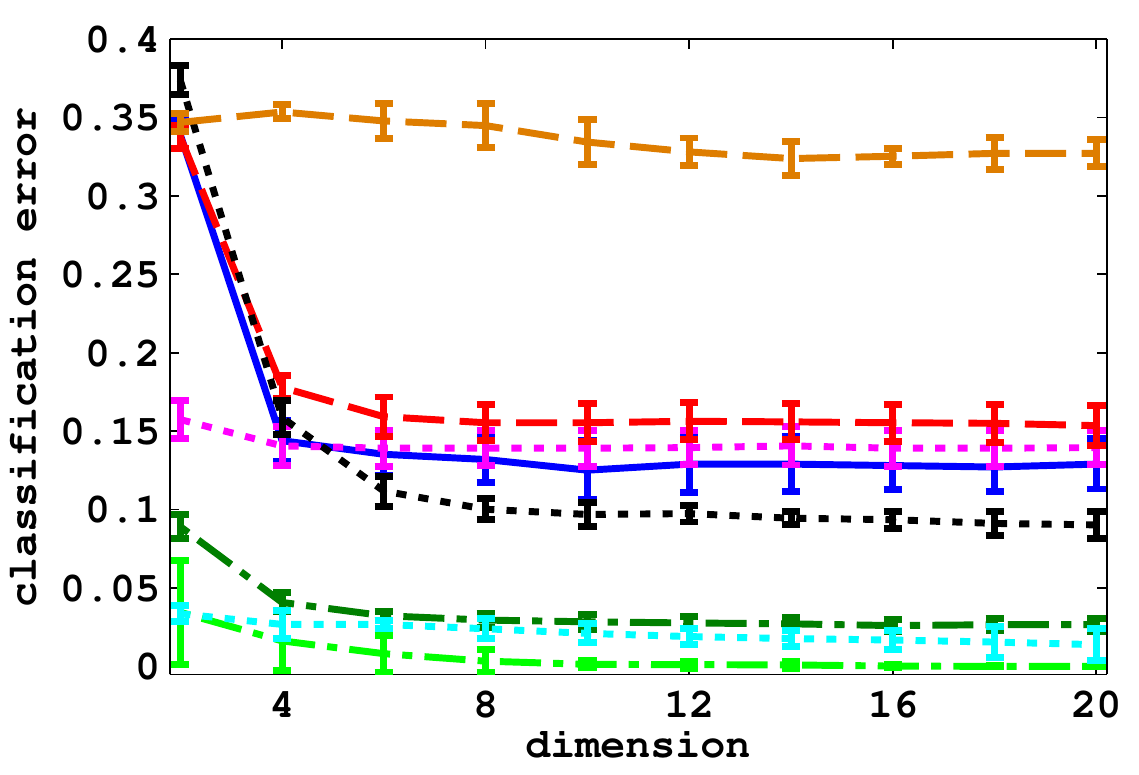}
   \label{fig:ksc-classify}
 } \subfigure[]{
  \includegraphics[width=3.5in]{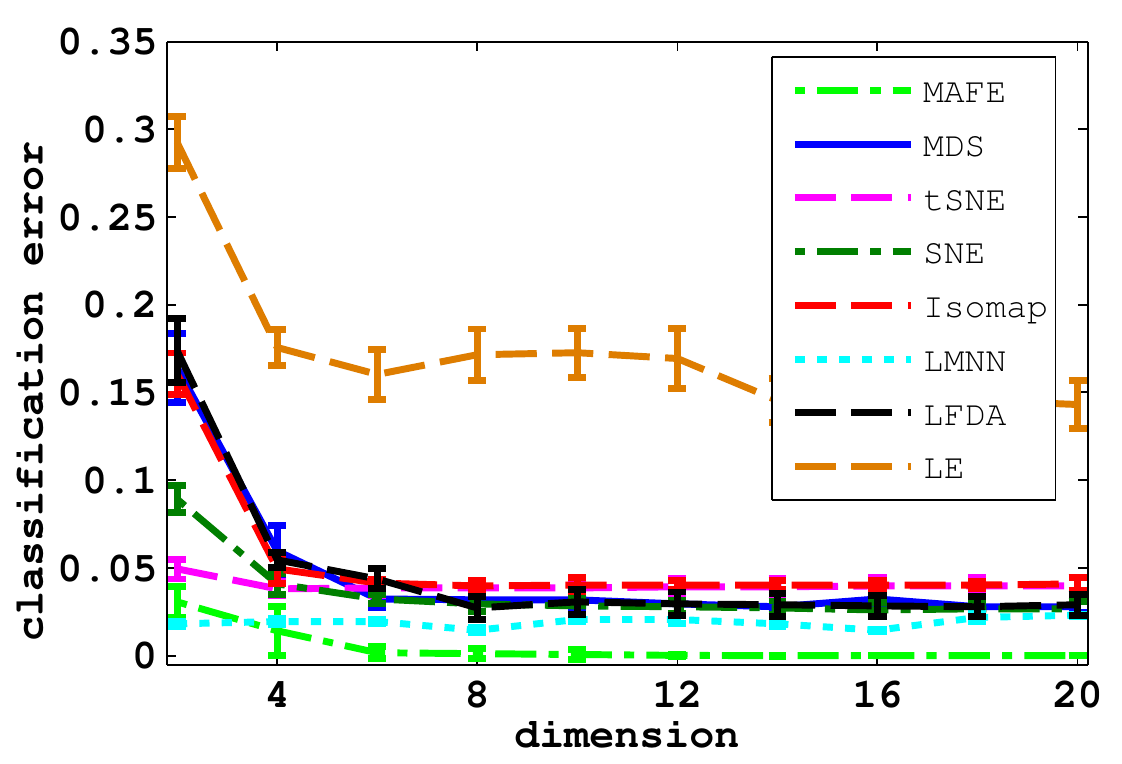}
   \label{fig:bots-classify}
 }
\caption{Mean $\pm$ one standard error misclassification error comparison for 1-nearest neighbor classifier based on various embedding spaces while varying the dimension. (a)  Kennedy Space Center data and (b) Botswana data.}
\label{fig:hyperclassify}
\end{figure}

{\small
\ctable[
    sideways,
	%
    caption = {Botswana Data Classification Results in Different  Embedding Spaces.},
	label   = {tab:botsconmat},
]{lccccccccc}{
	\tnote[$\ast$]{Most difficult classes to separate.}
    \tnote[$\ast\ast$]{High 1NN classification accuracy.}
}{ \FL
Class&\multicolumn{9}{c}{Accuracy on single class (\%)}\\
\cline{2-10}
&{\tt MDS}&{\tt SNE}&{\tt tSNE}&{\tt MAFE-BR}\tmark[$\ast\ast$]&{\tt MAFE-UR}\tmark[$\ast\ast$]&{\tt Isomap}&{\tt LE} & {\tt LMNN}\tmark[$\ast\ast$] & {\tt LFDA}\\
\toprule
c1 & 100& 100& $100$ & ${\bf 100}$ & ${\bf 100}$& $100$ & $100$ & ${\bf 100}$ & $100$\\
 c2 &98.74 & 98.74& $96.86$ & ${\bf 100}$ &${\bf 100}$ &  $100$ & $97.06$ & ${\bf 100}$& $100$\\
 c3\tmark[$\ast$] &  {\bf 89.7}& {\bf 89.09}& {\bf 88.48} & ${\bf 96}$ &${\bf 99.8}$ & {\bf 91.55 }& {\bf 80.28}&${\bf 94.37 }$& {\bf 91.55}\\
 c4 & 97.6& 97.6& $97.6$ & ${\bf 100}$ & ${\bf 100}$ & $100$ & $96.30$& ${\bf 100}$ & $98.11$\\
 c5 &96.09 & 94.53& $95.31$& ${\bf 100}$ & ${\bf 100}$ & $96.36$ & $80$ & ${\bf 100}$ & $94.55$\\
 c6\tmark[$\ast$] & {\bf 92.86}& {\bf 92.14}& {\bf 90} & ${\bf 100}$ & ${\bf 100}$ &{\bf 89.83}& {\bf 69.49}&${\bf 98.31}$ &{\bf 94.92 }\\
 c7 & 96.49& 97.37& $96.49$ & ${\bf 100}$ & ${\bf 100}$ & $97.92$ & $77.08$ &${\bf 97.92}$ & $97.92$\\
 c8 & 96.55& 96.55& $96.55$ &${\bf 100}$ & ${\bf 100}$ & $100$ & $94.59$&${\bf 100}$ & $97.37$\\
 c9 & 96.6& 95& $95$ &${\bf 100}$ & ${\bf 100}$ & $100$ & $97$&${\bf 100}$ & $97.37$\\
\cline{1-10}
 KS &95.08 & 94.75& $93.97$ & ${\bf 99.33}$ &${\bf 100}$ & $95.89$& $88.45$&${\bf 98.82}$& $94.55$
\LL}
}

{\small
\ctable[
     sideways,
	%
	caption = {Kennedy Space Center Data Classification Results in Different  Embedding Spaces.},
	label   = {tab:kscconmat},
]{lccccccccc}{
	\tnote[$\ast$]{Most difficult classes to separate.}
    \tnote[$\ast\ast$]{High 1NN classification accuracy.}
}{ \FL
Class&\multicolumn{9}{c}{Accuracy on single class (\%)}\\
\cline{2-10}
&{\tt MDS}&{\tt tSNE}&{\tt MAFE-BR}\tmark[$\ast\ast$]&{\tt MAFE-UR}\tmark[$\ast\ast$]&{\tt SNE}&{\tt Isomap}&{\tt LE} & {\tt LMNN} & {\tt LFDA}\\
\toprule
 c1&90.91&$90.34$ &${\bf 97.16}$ & ${\bf 98.3}$ &$91.2$ & $ 90.33$ & $72$ & $93.33$ & $92$\\
 c2&90.74&$87.04$&${\bf 100}$&${\bf 100}$& $85.34$ & $62.22$ & $43.48$& $86.96$ & $82.61$\\
 c3\tmark[$\ast$]&${\bf 80}$&${\bf 78.33}$&${\bf 100}$&${\bf 100}$&${\bf 80.4}$ & ${\bf 80.77}$& ${\bf 38.48}$& ${\bf 76.92}$ & ${\bf 88.46}$\\
 c4\tmark[$\ast$]&${\bf 57.89}$&${\bf 45.61}$&${\bf 84.21}$&${\bf 91.21}$&${\bf 51.9}$ & ${\bf 56}$& ${\bf 28}$& ${\bf 68}$& ${\bf 68}$\\
 c5\tmark[$\ast$]&${\bf 51.28}$&${\bf 41.03}$&${\bf 100}$&${\bf 100}$&${\bf 41.20}$& ${\bf 47.06}$& ${\bf 29.41}$& ${\bf 41.18}$& ${\bf 52.94}$\\
 c6\tmark[$\ast$]&${\bf 39.29}$&${\bf 35.71}$&${\bf 82.14}$&${\bf 86.2}$&${\bf 39.30}$ & ${\bf 40}$& ${\bf 28}$& ${\bf 84}$& ${\bf 72}$\\
 c7&80.77&$80.77$&${\bf 100}$&${\bf 100}$&$82.77$ & $90.91$& $36.36$& $90.91$& $100$\\
 c8&74.22&$60$&${\bf 100}$&${\bf 100}$&$63.62$ & $69.05$& $35.71$& $90.48$& $88.10$\\
c9&94.21&$91.74$&${\bf 100}$&${\bf 100}$&$93.56$& $96.15$& $75$& $90.38$& $96.15$\\
c10&89.47&$93.68$&${\bf 100}$&${\bf 100}$&$93.68$ &$92.68$& $80.49$& $100$& $100$\\
c11&93.81&$93.81$&${\bf 98.97}$&${\bf 98.97}$&$93.81$&$97.56$& $85.37$& $95.12$& $97.56$\\
c12&82.20&$80.51$&${\bf 100}$&${\bf 100}$&$81.91$&$92.16$& $80.39$& $98.04$& $98.04$\\
c13&100&$100$&${\bf 100}$&${\bf 100}$&$100$&$100$& $98.39$& $100$& $100$\\
\cline{1-10}
 KS  &83.04 &$80.10$&${\bf 97.86}$&${\bf 99.72}$&$81.90$&$86.40$& $76.71$& $92.1$& $89.2$
\LL}
}

\section{Discussion and Future Work}\label{sec:discussion}
MAFE nonlinear embedding techniques coupled with a bilateral kernel function demonstrated a better approach to account for nonlinear mixtures of similar categories that are captured by high resolution sensors as compared to other techniques. The MAFE general framework opens up valuable avenues coupled with a platform to derive further theoretical insights and development of new nonlinear dimensionality reduction algorithms.

The nonlinear dimensionality reduction algorithms discussed in this study have a kernel dimension that is the square of the number of vectors in the sample space. As such the nature of their objective functions can be summarized as follows.
\begin{itemize}
\item For MAFE based techniques, the objective function $U:\mathbb{R}^{N\times m}\rightarrow\mathbb{R}$ is defined over the space of embedding matrices $\bmath{Z}\in\mathbb{R}^{N\times m}$. The neighborhood graph's pairwise kernel similarities are represented as a $N\times N$ matrix $\bmath{W}$. The complexity scales with the square of the number of observed samples, \ie $O(N^{2})$, both in computation and memory usage.
\item For sSNE, SNE, and tSNE, the objective function $KL:\mathbb{R}^{N\times m}\rightarrow\mathbb{R}$ is defined over the space of embedding matrices $\mathbb{R}^{N\times m}$. The probable neighbors matrix is denoted by an $N\times N$ pairwise kernel matrix $\bmath{W}$. The complexity scales with the square of the number of observations, \ie $O(N^{2})$, both in computation and memory usage.
\item For MDS, the objective function $\sigma_{2}:\mathbb{R}^{N\times m}\rightarrow\mathbb{R}$ is defined over the space of matrices $\mathbb{R}^{N\times m}$. The pixel data in the problem are the geodesic distances, represented as a $N\times N$ kernel matrix $\bmath{D}(\bmath{Y}) = [\mathcal{D}_{\bmath{Y}}(\bmath{y}_{i},\bmath{y}_{j}]$. The complexity scales with the square of the number of samples, \ie $O(N^{2})$, both in computation and memory usage.
\end{itemize}

\subsection{Algorithmic Complexity Challenges}
For non-iterative methods the spectral decomposition of a large dimensional kernel encounters difficulties in at least three aspects: large memory usage, high computational complexity, and computational instability. Methods that are capable of exploiting the sparsity structure in the neighborhood graph may be useful to overcome the difficulties in memory usage and computational complexity. Approaches that incorporate the concepts of rank revealing, randomized low rank approximation algorithms, and  greedy rank-revealing algorithms and randomized anisotropic transformation algorithms, which approximate leading eigenvalues and eigenvectors of dimensionality reduction kernels may lead to faster algorithms. For iterative gradient based methods, efficient second order or Newton approximation algorithms that introduce additional local information about the curvature of the objective function can be developed to speed-up the convergence to the minimum energy configuration state. However, we caution that the computation of gradient directional vectors in each iteration of the optimization scheme do continue to introduce computational hurdles for larger data sets.

As indicated, the complexity of these algorithms scales with the square of the number of pixels i.e. $O(N^{2})$, a number that is prohibitive to obtain efficient solutions on large-scale (order of $10^{5}$ and greater) hyperspectral image pixels. This is a challenge faced by many embedding algorithms. A possible mitigating approach would be to split the pixels into overlapping patches, run a nonlinear embedding method on each patch and use the coordinates of pixels in the overlapping patches as references to merge each pair of resulting submanifolds to obtain a common global manifold. An even harder problem not addressed in this study pertains to the issue of spectral unmixing for pixels that contain more than a single land cover category \cite{Villa11}. If the spatial resolution of sensors is poor, increased overlap of different spectral signatures is imminent for different land cover objects. In such cases, knowledge of the observations should be used to derive kernel representations that can incorporate such characteristics.

\subsection{Objective Function Formulation Challenges}
The formulation of MAFE models does encounter more challenges on the design of the total energy function. For example, the appealing force field embedding intuition is subject to numerous {\em dynamic local maxima behaviors} during cluster formation. This can be explained from observing that the formation of clusters creates local repulsions leading to local traps for maps that still need to move closer to their most similar neighbors (as determined by the neighborhood graph weights). A simple illustration of the dynamic local maxima behavior is shown in Figure \ref{fig:local-minima}. The behavior is exacerbated by a weak choice on the attraction potential energy function which may lead to less effective pulling of trapped maps to their closest similar neighbors.  This shortcoming does affect the convergence of the iterative embedding algorithms. The local maxima generated traps can perhaps be better handled by piecewise attraction potential functions whose short range and long range capability vary at different distances in the hope of increasing the pulling force's magnitudes to overcome the local repulsion forces from dissimilar neighbors.

\begin{figure}[htpb]
  \vspace{-8pt}
  \centering
    \includegraphics[width=2.5in]{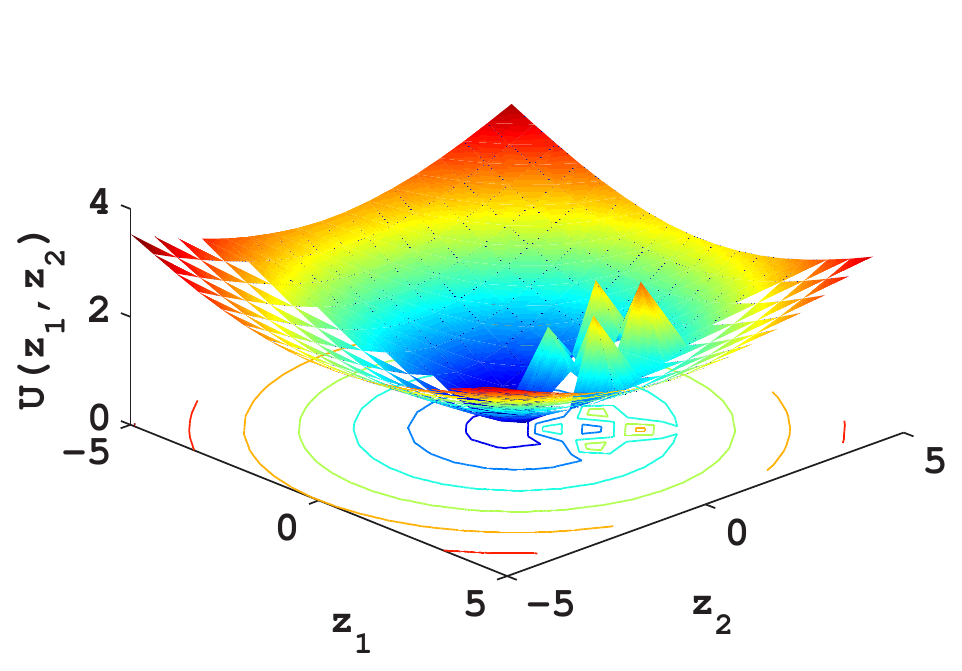}
    \includegraphics[width=2.5in]{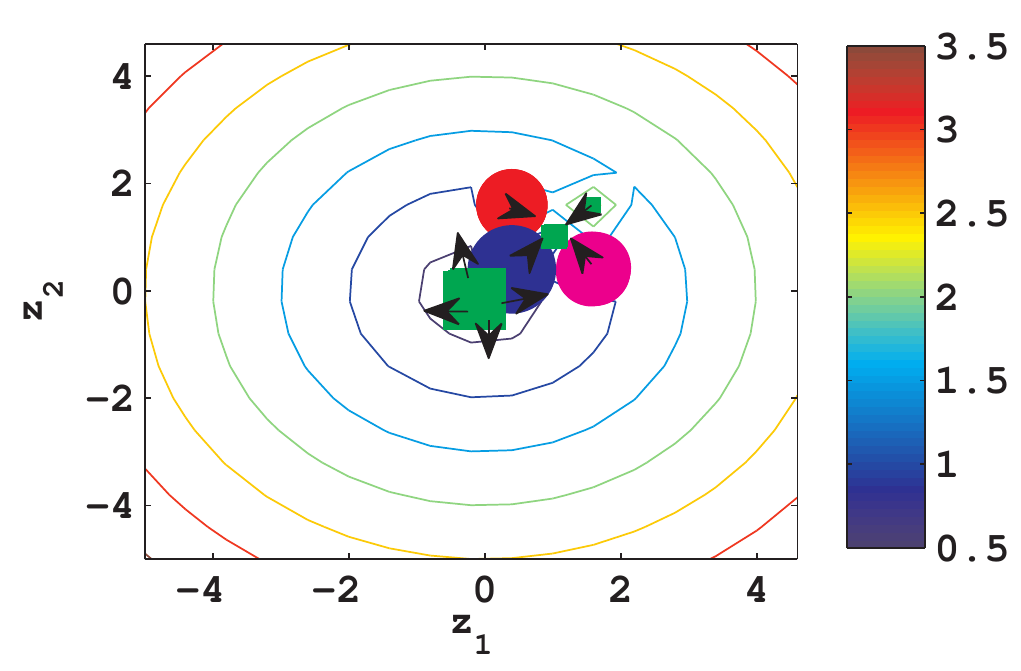}
  \vspace{-5pt}
  \caption{Illustration of dynamic local maxima traps from a MAFE based model. Showing the 2D surface and contour plots. Arrows indicate the repulsive force fields that form during clustering. The disjoint cluster formation caused by the local repulsions on the green color class are shown on the right.(Best viewed in color.)}
  \label{fig:local-minima}
\end{figure}

\section{Conclusions}\label{sec:conclusions}
 This report provides design and algorithmic insights leading to new nonlinear dimensionality reduction models with strong connections to widely used methods. It introduces a novel parameterized joint spatial and photometric distance based bilateral kernel function that improves the capability of capturing regularities within hyperspectral image pixels. Much of the disjoint relations is encoded into the neighborhood graph through spatial information and estimation of the covariance structure using the sparse matrix transform. The study then adapts a graph embedding framework, namely the multidimensional artificial field embedding, featuring the example bounded and the unbounded repulsion models,  and demonstrates the general applicability of the force field intuition to problems in signal and image processing. The idea is to envision the motion of positional maps (representing graph vertices) as motivated by attraction and repulsion forces that enable a long range distance pulling of similar maps and a short range distance repulsion for all maps, respectively. The force field interpretation allows a natural way to capture this imagination and promotes the design of pair-wise interactions functions that act on pixel samples to establish the direction and magnitude of the interaction vectors. An adaptive iterative gradient based algorithm based on this notion was further implemented to yield the minimum energy configuration of the neighborhood graph. The proposed MAFE-UR and MAFE-BR models were applied to data sets acquired from remote sensing. The new algorithms proposed in the study are shown to have desirable properties that preserves the local topology of observations while inducing strong natural global structures, \eg disjoint spatially motivated clusters in hyperspectral imagery. In its general form, the framework yields formulations of current popular dimensionality reduction methods with very few assumptions. Experimental work conducted on visualization, gradient field trajectories and semisupervised classification tasks demonstrates that both MAFE-UR and MAFE-BR do overcome the crowding problem and lead to better classification performance in comparison to other approaches.

\bibliographystyle{unsrt}
\bibliography{generalizedmafe}

\begin{thebibliography}{10}

\bibitem{Pearson1901}
K.~Pearson.
\newblock On lines and planes of closest fit to systems of points in space.
\newblock {\em Philosophical Magazine}, 2:559--572, 1901.

\bibitem{Joliffe86}
I.~Joliffe.
\newblock {\em Principal Component Analysis}.
\newblock Springer-Verlag, 1986.

\bibitem{Martinez2001}
A.~M. Martinez and A.~Kak.
\newblock Pca versus lda.
\newblock {\em IEEE Trans. Pattern Analysis and Machine Intelligence},
  23(2):228--223, 2001.

\bibitem{Torgerson52}
W.~Torgerson.
\newblock Multidimensional {S}caling {I}: Theory and method.
\newblock {\em Psychometrika}, 17, 1952.

\bibitem{Sugiyama07}
M.~Sugiyama.
\newblock Dimensionality reduction of multimodal labeled data by local fisher
  discriminant analysis.
\newblock {\em Journal of Machine Learning Research}, 8:1027–--1061, 2007.

\bibitem{Sugiyama10}
M.~Sugiyama, T.~Id\'{e}, S.~Nakajima, and J.~Sese.
\newblock Semi-supervised local fisher discriminant analysis for dimensionality
  reduction.
\newblock {\em Machine Learning}, 78(1-2):35--61, 2010.

\bibitem{Lee2007}
J.~A. Lee and M.~Verleysen.
\newblock {\em Nonlinear dimensionality reduction}.
\newblock Springer, 1986.

\bibitem{Song07}
L.~Song, A.J. Smola, K.~Borgwardt, and A.~Gretton.
\newblock Colored maxium variance unfolding.
\newblock {\em Adv. in Neural Information Processing Systems}, 21, 2007.

\bibitem{Roweis00}
S.~Roweis and L.~K. Saul.
\newblock Nonlinear dimensionality reduction by locally linear embedding.
\newblock {\em Science}, 290:2323--2326, 2000.

\bibitem{Belkin03}
M.~Belkin and P.~Niyogi.
\newblock Laplacian eigenmaps for dimensionality reduction and data
  representation.
\newblock {\em Neural Comp.}, 15(6):1373--1396, 2003.

\bibitem{DeSilva03}
V.~de~Silva and J.~B. Tenenbaum.
\newblock Global versus local methods in nonlinear dimensionality reduction.
\newblock {\em Adv. in Neural Information Processing Sys.}, pages 721--728,
  2003.

\bibitem{Bronstein08}
M.~M.~Bronstein A.~M.~Bronstein and R.~Kimmel.
\newblock {\em Numerical Geometry of Non-Rigid Shapes}.
\newblock Springer, NY, 2008.

\bibitem{Kohonen07}
T.~Kohonen and T.~Honkela.
\newblock {\em Kohonen network}.
\newblock Scholarpedia, 2007.

\bibitem{Yan07}
S.~Yan, D.~Xu, H.J. Zhang, Q.~Yang, and S.~Lin.
\newblock Graph embedding and extensions: a general framework for
  dimensionality reduction.
\newblock {\em IEEE Trans. Pattern Anal. Mach. Intell.}, 29(1):40--51, 2007.

\bibitem{Hinton02}
G.~Hinton and S.~Roweis.
\newblock Stochastic neighbor embedding.
\newblock {\em ICML}, 15:833--840, 2002.

\bibitem{Laurens08}
L.~van~der Maaten and G.~Hinton.
\newblock Visualizing data using t-sne.
\newblock {\em Journal of Machine Learning Research}, 9:2579--2605, 2008.

\bibitem{Carreira10}
M.~A. Carreira-Perpinan.
\newblock The elastic embedding algorithm for dimensionality reduction.
\newblock {\em International conference on machine learning}, pages 167--174,
  2010.

\bibitem{Lunga12}
D.~Lunga and O.~Ersoy.
\newblock Spherical stochastic neighbor embedding of hyperspectral data.
\newblock {\em IEEE Transactions on Geoscience and Remote Sensing}, in press,
  2012.

\bibitem{Crawford2011}
M.~M. Crawford, W.~Kim, and M.~Li.
\newblock Exploring nonlinear manifold learning for classification of
  hyperspectral data.
\newblock {\em IEEE Trans. on Opt. Remote Sensing}, 3:207--234, 2011.

\bibitem{Parr1927}
A.~E. Parr.
\newblock A contribution to the theoretical analysis of the schooling behavior
  of fishes.
\newblock {\em Ocassional papers of the Bingham oceanographic collection 1},
  1(1):1--32, 1927.

\bibitem{Breder1951}
C.~M. Breder.
\newblock Studies on the structure of fish schools.
\newblock {\em Bull. Am. Mus. Nat. Hist.}, 98(1):3--27, 1951.

\bibitem{Breder1954}
C.~Breder.
\newblock Equations descriptive of the fish schools and other animal
  aggregations.
\newblock {\em Ecology}, 35(3):361--370, 1954.

\bibitem{Gazi02c}
V.~Gazi and K.~M. Passino.
\newblock Stability analysis of social foraging swarms.
\newblock {\em IEEE Trans. on Systems, Man, and Cybernetics}, 34(1):539--557,
  2004.

\bibitem{Latombe91}
J.C. Latombe.
\newblock {\em Robotic Motion Planning}.
\newblock Kluwer Academic Publishers, Boston, MA, US, 1991.

\bibitem{Grunbaum97}
D.~Grunbaum.
\newblock Schooling as a stretegy for taxis in a noisy environment.
\newblock {\em Animal Groups in Three Dimensions}, pages 257--281, 1997.

\bibitem{Grunbaum98}
D.~GrUnbaum.
\newblock Schooling as a strategy for taxis in a noisy environment.
\newblock {\em Evolutionary Ecology}, 12:503--522, 1998.

\bibitem{Xue11}
Z.~Xue, Z.~Liu, C.~Feng, and J.~Zeng.
\newblock Stability analysis of exponential type stochastic swarms with
  time-delay.
\newblock {\em Int. Journ. of Innovative Managemnt, Information and
  Production}, 2(3):1--12, 2011.

\bibitem{Boyd04}
S.~Boyd and L.~Vandenberghe.
\newblock {\em Convex Optimization}.
\newblock Camb. Uni. Press, New York, NY, USA, 2004.

\bibitem{Luo11}
D.~Luo, C.~Ding, F.~Nie, and H.~Huang.
\newblock Cauchy graph embedding.
\newblock {\em In Proc. of ICML}, 2011.

\bibitem{Tenenbaum00}
J.~B. Tenenbaum, V.~de~Silva, and J.C. Langford.
\newblock A global geometric framework for nonlinear dimensionality reduction.
\newblock {\em Science}, 290:2319--2323, 2000.

\bibitem{Durrett51}
R.~Durrett.
\newblock {\em Brownian Motion And Martingales in Analysis}.
\newblock Wadsworth Advanced Books and Software, Belmont, California, 1951.

\bibitem{Honghong09}
P.~Honghong and R.~Raghuveer.
\newblock Hyperspectral image enhancement with vector bilateral filtering.
\newblock {\em IEEE intern. conference on Image processing}, pages 3669--3672,
  2009.

\bibitem{Kotwal07}
K.~Kotwal and S.~Chaudhuri.
\newblock Visualization of hyperspectral images using bilateral filtering.
\newblock {\em IEEE Transactions on Geoscience and Remote Sensing},
  48(5):2308--2316, 2010.

\bibitem{Tomasi1998}
C.~Tomasi and R.~Manduchi.
\newblock Bilateral filtering for gray and color images.
\newblock {\em IEEE Int. Conf. on Computer Vision}, pages 839--846, 1998.

\bibitem{Velasco-Forero09}
S.~Velasco-Forero and V.~Manian.
\newblock Improving hyperspectral image classification using spatial
  preprocessing.
\newblock {\em IEEE TGRS}, 6(2):297--301, 2009.

\bibitem{Theiler2011}
J.~Theiler, G.~Cao, L.~Bachega, and C.~Bouman.
\newblock Sparse matrix transform for hyperspectral image processing.
\newblock {\em Journ. of Selected Topics in Signal Proc.}, 5(3):424--437, 2011.

\bibitem{Plagianakos01}
V.~P. Plagianakos, G.~D. Magoulas, and M.~N. Vrahatis.
\newblock Ch. 2 learning rate adaptation in stochastic grdient descent.
\newblock {\em Information Sys. Journ., Kluwer Academic Pub}, pages 15--26,
  2001.

\bibitem{LaSalle60}
J.P. LaSalle.
\newblock Some extensions of \textsc{L}iapunov's second method.
\newblock {\em IRE Trans. on Circuit Theory}, pages 520–--527, 1960.

\bibitem{Neuenschwander07}
A.~L. Neuenschwander.
\newblock {\em Remote sensing of vegetation dynamics in response to flooding
  and fire in the Okavango Delta-Botswana}.
\newblock Ph.D. dissertation, Univ. Texas Austin, Austin, TX, 2007.

\bibitem{Yuhas92}
A.~F.~H.~Goetz R.~H.~Yuhas and J.~W. Boardman.
\newblock Discrimination amoung semi-arid landscape endmembers using the
  spectral angle members (sam) algorithm.
\newblock {\em In Annual JPL Airborne Geoscience Workshop}, pages 147--149,
  1992.

\bibitem{Kruse93}
K.~B.~Heidebrecht F.~A.~Kruse, A.~B.~Boardman and P.~J. Barloon.
\newblock The spectral image processing system(sips)-interactive visualization
  and analysis of imaging spectrometer data.
\newblock {\em Remote Sensing Environments}, 44(2):145--163, 1993.

\bibitem{cohen60}
J.~Cohen.
\newblock A coefficient of agreement from nominal scales.
\newblock {\em Education Psychological Measurement}, 20:37---46, 1960.

\bibitem{Di2011}
W.~Di and M.~M.~Crawford.
\newblock Active learning via multi-view and local proximity co-regularization
  for hyperspectral image classification.
\newblock {\em IEEE Journal of Selected Topics in Signal. Proc.},
  5(3):618--628, 2011.

\bibitem{Ma10}
L.~Ma and M.~M.~Crawford and J.~Tian.
\newblock Local manifold learning-based k-nearest-neighbor for hyperspectral
  image classification.
\newblock {\em IEEE Transactions on Geoscience and Remote Sensing},
  48(11):4099--4109, 2010.

\bibitem{Villa11}
J.~A.~Benediktsson A.~Villa, J.~Chanussot and C.~Jutten.
\newblock Spectral unmixing for the classification of hyperspectral images at a
  finer spatial resolution.
\newblock {\em IEEE Journal of Selected Topics in Signal Processing},
  5(3):521--533, 2011.

\end{thebibliography}

\end{document}